\DeclarePairedDelimiter{\ceil}{\lceil}{\rceil}
\newcommand{\fS}{\mathcal{S}}
\newcommand{\fA}{\mathcal{A}}
\newcommand{\fB}{\mathcal{B}}
\newcommand{\fW}{\mathcal{W}}
\newcommand{\fF}{\mathcal{F}}
\newcommand{\fO}{\mathcal{O}}
\newcommand{\fH}{\mathcal{H}}
\newcommand{\R}{\mathbb{R}}
\newcommand{\E}{\mathbb{E}}
\newcommand{\nsa}{{|\fS \times \fA|}}
\newcommand{\ns}{{|\fS|}}
\newcommand{\na}{{|\fA|}}
\newcommand{\bop}{\mathcal{T}}
\newcommand{\pbop}[2]{\Pi_{f_{n, #1, #2}}\bop_{#2}}
\newcommand{\indot}[2]{{\left<#1, #2\right>}}
\newcounter{assu_counter}
\numberwithin{assu_counter}{section}
\newtheorem{assumption}[assu_counter]{Assumption}
\begin{document}

\title{Truncated Emphatic Temporal Difference Methods for Prediction and Control}

\author{\name Shangtong Zhang \email shangtong.zhang@cs.ox.ac.uk \\
       \addr 
       Department of Computer Science \\
       University of Oxford\\
       Wolfson Building, Parks Rd, Oxford, OX1 3QD, UK \\
       \name Shimon Whiteson \email shimon.whiteson@cs.ox.ac.uk \\
       \addr
       Department of Computer Science \\
       University of Oxford \\
       Wolfson Building, Parks Rd, Oxford, OX1 3QD, UK
       }

\editor{Marc Bellemare}

\maketitle

\begin{abstract}
Emphatic Temporal Difference (TD) methods are a class of off-policy Reinforcement Learning (RL) methods involving the use of followon traces. 
Despite the theoretical success of emphatic TD methods in addressing the notorious deadly triad of off-policy RL,
there are still two open problems.
First, followon traces typically suffer from large variance,
making them hard to use in practice. 
Second, though \citet{yu2015convergence} confirms the asymptotic convergence of some emphatic TD methods for prediction problems,
there is still no finite sample analysis for any emphatic TD method for prediction, much less control.
In this paper, 
we address those two open problems simultaneously via using \emph{truncated followon traces} in emphatic TD methods.
Unlike the original followon traces, which depend on all previous history,
truncated followon traces depend on only finite history, reducing variance and enabling the finite sample analysis of our proposed emphatic TD methods for both prediction and control.
\end{abstract}

\begin{keywords}
  off-policy learning, emphatic methods, finite sample analysis, reinforcement learning, approximate value iteration
\end{keywords}

\section{Introduction}
\emph{Off-policy} learning, where an agent learns a policy of interest (target policy) while following a different policy (behavior policy), is arguably one of the most important techniques in Reinforcement Learning (RL, \citealt{sutton2018reinforcement}).
Off-policy learning can improve the sample efficiency \citep{lin1992self,sutton2011horde} and safety \citep{dulac2019challenges} of RL algorithms.
However, they can be unstable if combined with
\emph{function approximation} and \emph{bootstrapping},
two arguably indispensable ingredients for RL algorithms to work at scale.
This instability is known as  the notorious deadly triad (Chapter 11 of \citealt{sutton2018reinforcement}). 

Emphatic Temporal Difference methods are a class of off-policy Temporal Difference (TD, \citealt{sutton1988learning}) methods first proposed by \citet{sutton2016emphatic} to address the deadly triad.
Compared with gradient TD methods \citep{sutton2009convergent,sutton2009fast},
another class of off-policy TD methods that address the deadly triad,
emphatic TD (ETD) methods usually have better asymptotic performance guarantees \citep{kolter2011fixed,hallak2016generalized}.
The key idea of emphatic TD methods is the \emph{followon} trace, 
a recursively computed scalar depending on all previous history that reweights the naive off-policy TD (Chapter 11.1 of \citealt{sutton2018reinforcement}) updates,
first introduced in ETD($\lambda$) \citep{sutton2016emphatic}.
In addition to ETD($\lambda$), variants have been proposed such as ETD($\lambda, \beta$) \citep{hallak2016generalized}, which allows for additional bias-variance tradeoff, and NETD
\citep{jiang2021}, which copes with multi-step TD methods like VTrace \citep{espeholt2018impala}.
Emphatic TD methods have enjoyed both theoretical and empirical success.
For example,
\citet{yu2015convergence} confirms the asymptotic convergence of ETD($\lambda$) under general conditions;
\citet{jiang2021} demonstrate state-of-the-art performance of several NETD-based algorithms in certain Arcade Learning Environment \citep{bellemare13arcade} settings.
Nonetheless,
there are still two open problems.
\begin{enumerate}
  \item 
  The followon trace can have infinite variance,
  as demonstrated by \citet{sutton2016emphatic}.
  As a result,
  \citet{sutton2018reinforcement} report that
  though ETD($\lambda$) is proven to be convergent,
  in Baird's counterexample \citep{baird1995residual},
  a commonly used benchmark for testing off-policy RL algorithms,
  ``\emph{it is nigh impossible to get consistent results in computational experiments}'' (Chapter 11.9 of \citealt{sutton2018reinforcement}) for ETD($\lambda$).
  To lower the variance introduced by the followon trace,
  \citet{hallak2016generalized} introduce an additional hyperparameter $\beta$ for bias-variance trade-off in computing the followon trace,
  resulting in ETD($\lambda, \beta$).
  When $\beta$ is sufficiently small,
  \citet{hallak2016generalized} prove that the variance of the followon trace is bounded.
  \citet{hallak2016generalized},
  however,
  also require $\beta$ to be sufficiently large such that the expected update of ETD($\lambda, \beta$) is contractive,
  which plays a key role in bounding the performance of the fixed point of ETD($\lambda, \beta$).
  Unfortunately,
  there is no guarantee that 
  such a $\beta$ (i.e., a $\beta$ that is both sufficiently small and sufficiently large) always exists.
  Later on,
  \citet{zhang2019provably} propose to \emph{learn} the expectation of the followon trace directly by employing a second function approximator
  and use the \emph{learned} followon trace to reweight the naive off-policy TD updates.
  However,
  little can be said about the quality of the learned followon trace.
  It, 
  therefore,
  remains an open problem to design a theoretically grounded method to reduce the variance introduced by the followon trace. 
  \item Twenty years after the seminal work \citet{tsitsiklis1997analysis} confirming the asymptotic convergence of TD$(\lambda)$,
finite sample analysis of TD methods were obtained for both prediction \citep{dalal2018finite,lakshminarayanan2018linear,bhandari2018finite,srikant2019finite} and control \citep{zou2019finite}.
Though \citet{yu2015convergence} confirms the asymptotic convergence of ETD$(\lambda)$,
we still do not have finite sample analysis for any emphatic TD method even for prediction problems, much less control.
\end{enumerate}

In this paper,
we address these two problems simultaneously by using \emph{truncated} followon traces instead of the original followon trace in Section~\ref{sec prediction} for prediction problems 
and extend the results to control problems in Sections~\ref{sec control eavi} and~\ref{sec control sarsa}.
Truncated traces are introduced by \citet{DBLP:journals/siamco/Yu12,yu2015convergence,yu2017convergence} 
as an intermediate mathematical tool in proofs
to understand the asymptotic behavior of some least-square TD methods (e.g., off-policy LSTD($\lambda$) in \citealt{DBLP:journals/siamco/Yu12}, emphatic LSTD($\lambda$) in \citealt{yu2015convergence}) and gradient TD methods (e.g., GTD($\lambda$) in \citealt{sutton2009fast}) for prediction.
In this paper,
we instead use truncated followon traces \emph{algorithmically} as a tool for variance reduction for both prediction and control.
Whereas the original followon trace depends on all previous history,
the truncated followon trace depends on only \emph{finite} history.
Consequently,
the variance of truncated followon traces is immediately bounded.
We refer to emphatic TD methods that involve this truncated followon traces as \emph{truncated emphatic TD methods}. 
Moreover,
we show that under certain conditions on their length,
truncated followon traces maintain all the desirable properties of the original followon trace, enabling us to analyse truncated emphatic TD methods both asymptotically and non-asymptotically,
for both prediction and control.

In particular, this paper makes the following contributions.
First,
we propose the Truncated Emphatic TD algorithm for off-policy prediction
and provide both asymptotic and nonasymptotic convergence analysis.
This is the first finite sample analysis for emphatic TD methods.
Second,
we propose the Truncated Emphatic Expected SARSA algorithm for off-policy control and provide both asymptotic and nonasymptotic analysis.
This is the first emphatic TD algorithm for off-policy control.
Third,
we empirically study truncated emphatic TD methods in both synthetic Markov Decision Processes (MDPs) and nonsynthetic control problems, 
confirming their efficacy in practice. 


\section{Background}
In this paper,
all vectors are column.
A matrix $M$ (not necessarily symmetric) is said to be positive definite (p.d.) if there exists a constant $\lambda > 0$ such that $x^\top M x \geq  \lambda x^\top x$ holds for any $x$.
It is well known that $M$ is p.d.\ if and only if $M+M^\top$ is p.d.
$M$ is negative definite (n.d.) if and only if $-M$ is p.d. 
For a vector $x$ and a p.d.\ matrix $M$,
we use $\norm{x}_M \doteq \sqrt{x^\top M x}$
to denote the vector norm induced by $M$.
We also use $\norm{\cdot}_M$ to denote the corresponding induced matrix norm.
We use $\norm{\cdot}$ as shorthand for $\norm{\cdot}_I$ where $I$ is the identity matrix,
i.e.,
$\norm{\cdot}$ is the standard $\ell_2$-norm.
We use $diag(x)$ to denote a diagonal matrix whose diagonal entry is $x$ and write
$\norm{\cdot}_{x}$ as shorthand for $\norm{\cdot}_{diag(x)}$ when $diag(x)$ is p.d.
We use $\norm{\cdot}_\infty$ and $\norm{\cdot}_1$ to denote the standard infinity norm and $\ell_1$-norm respectively.
We use $\indot{\cdot}{\cdot}$ to denote the inner product in Euclidean spaces,
i.e., $\indot{x}{y} \doteq x^\top y$.
We use functions and vectors interchangeably when it does not confuse,
e.g., if $f$ is a function from $\fS$ to $\R$,
we also use $f$ to denote a vector in $\R^\ns$,
whose $s$-th element is $f(s)$.

We consider an infinite horizon MDP with a finite state space $\fS$,
a finite action space $\fA$,
a reward function $r: \fS \times \fA \to \R$,
a transition kernel $p: \fS \times \fS \times \fA \to [0, 1]$,
an initial state distribution $p_0: \fS \to [0, 1]$,
and a discount factor $\gamma \in [0, 1)$.
At time step 0,
an initial state $S_0$ is sampled according to $p_0$.
At time step $t$, 
an agent at a state $S_t$ takes an action $A_t$ according to $\pi(\cdot | S_t)$,
where $\pi: \fA \times \fS \to [0, 1]$ is the policy being followed by the agent.
The agent then receives a reward $R_{t+1} \doteq r(S_t, A_t)$ and proceeds to a successor state $S_{t+1}$ sampled from $p(\cdot |S_t, A_t)$.

The return at time step $t$ is defined as
\begin{align}
G_t \doteq \sum_{i=1}^\infty \gamma^{i - 1}R_{t+i},
\end{align}
which allows us to define the state value and action value functions respectively as
\begin{align}
v_\pi(s) &\doteq \E \left[G_t | S_t=s, \pi, p \right], \\
q_\pi(s, a) &\doteq \E \left[G_t | S_t = s, A_t = a, \pi, p \right].
\end{align}
The value function $v_\pi$ is the unique fixed point of the Bellman operator $\bop_\pi$:
\begin{align}
\bop_\pi v \doteq r_\pi + \gamma P_\pi v,
\end{align}
where $r_\pi \in \R^\ns$ is the reward vector induced by the policy $\pi$, i.e., $r_\pi(s) \doteq \sum_a \pi(a|s) r(s, a)$.
Prediction and control are two fundamental problems in RL.

\subsection{Prediction}
The goal of prediction is to estimate the value function of a given policy $\pi$,
perhaps with the help of parameterized function approximation.
In this paper,
we consider linear function approximation and assume access to a feature function $x: \fS \to \R^K$,
which maps a state into a $K$-dimensional numerical feature.
We then use $x(s)^\top w$ as our estimate for $v_\pi(s)$,
where $w \in \R^K$ is the parameter vector to be learned.
Arguably,
one of the most important methods for prediction is TD,
which updates $w$ iteratively as
\begin{align}
\label{eq on-policy td}
w_{t+1} &\doteq w_t + \alpha_t (R_{t+1} + \gamma x_{t+1}^\top w_t - x_t^\top w_t ) x_t \\
&= w_t + \alpha_t \underbrace{x_t(\gamma x_{t+1}^\top - x_t^\top)}_{M_t}w_t + \alpha_t R_{t+1} x_t,
\end{align}
where $\qty{\alpha_t}$ is a sequence of learning rates and $x_t$ is shorthand for $x(S_t)$.
The expectation of the update matrix $M_t$ w.r.t.\ $d_\pi$, the invariant state distribution of the chain induced by $\pi$, 
is 
\begin{align}
M \doteq \E_{S_t \sim d_\pi, A_t \sim \pi(\cdot|S_t), S_{t+1} \sim p(\cdot | S_t, A_t)}[M_t] = X^\top D_\pi (\gamma P_\pi - I) X,
\end{align}
where $X \in \R^{\ns \times K}$ is the feature matrix whose $s$-th row is $x(s)^\top$, 
$D_\pi \doteq diag(d_\pi)$, 
and $P_\pi \in \R^{\ns \times \ns}$ is the state transition matrix under the policy $\pi$, i.e.,
\begin{align}
P_\pi(s, s') \doteq \sum_a \pi(a|s)p(s'|s, a).
\end{align}
\citet{tsitsiklis1997analysis} prove that $M$ is n.d.\ under mild conditions.
Consequently,
standard Ordinary Differential Equation (ODE) based convergence results 
(e.g., Theorem 2 of \citealt{tsitsiklis1997analysis}, Proposition 4.8\footnote{For completeness, we include this proposition as Theorem \ref{thm ndp} in Section \ref{sec ndp}.} of \citealt{bertsekas1996neuro})
can be used to show that
the iterates $\qty{w_t}$ generated by \eqref{eq on-policy td} converge almost surely (a.s.).

So far we have focused on the on-policy setting,
where the policy to be evaluated is the same as the policy used for action selection during interaction with the environment. 
In the off-policy setting,
those two policies can, however, be different,
allowing extra flexibility.
We use $\pi$ to denote the policy to be evaluated (target policy)
and $\mu$ to denote the policy used for action selection (behavior policy).

Since the action selection is performed according to $\mu$ instead of $\pi$
(i.e., $A_t \sim \mu(\cdot | S_t), R_{t+1} \doteq r(S_t, A_t), S_{t+1} \sim p(\cdot | S_t, A_t)$),
we can reweight the update made in \eqref{eq on-policy td} by the importance sampling ratio $\rho_t \doteq \frac{\pi(A_t | S_t)}{\mu(A_t | S_t)}$,
yielding the following off-policy TD updates:
\begin{align}
\label{eq off-policy td}
w_{t+1} &\doteq w_t + \alpha_t \rho_t (R_{t+1} + \gamma x_{t+1}^\top w_t - x_t^\top w_t ) x_t \\
&= w_t + \alpha_t \underbrace{\rho_t x_t(\gamma x_{t+1}^\top - x_t^\top)}_{M_t}w_t + \alpha_t \rho_t R_{t+1} x_t.
\end{align}
The expectation of the update matrix $M_t$ in \eqref{eq off-policy td} w.r.t.\ $d_\mu$,
the invariant state distribution of the chain induced by $\mu$,
is then
\begin{align}
M \doteq \E_{S_t \sim d_\mu, A_t \sim \mu(\cdot | S_t), p(\cdot | S_t, A_t)}[M_t] = X^\top D_\mu (\gamma P_\pi - I) X,
\end{align}
where $D_\mu \doteq diag(d_\mu)$.
Unfortunately,
this $M$  is not guaranteed to be n.d.\ and 
the possible divergence of \eqref{eq off-policy td} is well documented in Baird's counterexample \citep{baird1995residual}.

\citet{sutton2016emphatic} propose ETD($\lambda$) to address this divergence issue.
In its simplest form with $\lambda=0$,
ETD(0)
further reweights the update in \eqref{eq off-policy td} by the \emph{followon trace} $F_t$:
\begin{align}
\label{eq etd}
F_t &\doteq i(S_t) + \gamma \rho_{t-1} F_{t-1}, \\
w_{t+1} &\doteq w_t + \alpha_t \rho_t F_t (R_{t+1} + \gamma x_{t+1}^\top w_t - x_t^\top w_t ) x_t \\
&= w_t + \alpha_t \underbrace{\rho_t F_t x_t(\gamma x_{t+1}^\top - x_t^\top)}_{M_t}w_t + \alpha_t \rho_t F_t R_{t+1} x_t.
\end{align}
where $i: \fS \to (0, +\infty)$ is the \emph{interest} function representing the user's preference for different states.
The motivation for introducing $F_t$ is to ensure that 
the corresponding limiting update $M \doteq \lim_{t\to \infty} \E[M_t]$, assuming the limit exists for now,
is n.d.\ such that standard ODE-based convergent results (e.g., Theorem \ref{thm ndp}) can be used to show convergence.
In fact, \citet{sutton2016emphatic} show
\begin{align}
M = X^\top D_f (\gamma P_\pi - I) X,
\end{align}
where $D_f \doteq diag(f)$ with 
\begin{align}
  \label{eq f}
  f \doteq (I - \gamma P_\pi^\top)^{-1}D_\mu i.
\end{align}
\citet{sutton2016emphatic} prove that this $M$ is n.d. 
and the convergence of ETD($\lambda$) is later on established by \citet{yu2015convergence}.
It is worth mentioning that one important step in computing this $M$ is to show
\begin{align}
  \label{eq rich limit}
  \lim_{t\to\infty} \E\left[F_t | S_t = s\right] = d_{\mu}(s)^{-1} f(s).
\end{align} 

\subsection{Control}
The goal for control is to find an optimal policy $\pi^*$ such that $v_{\pi^*}(s) \geq v_\pi(s)$ holds for any $\pi$ and $s$.
Though there can be more than one optimal policy,
all of them share the same optimal value function,
which is referred to as $v_*$.
One classical approach for finding $v_*$ is \emph{value iteration} (see, e.g., \citealt{puterman2014markov}).
Given an arbitrary vector $v \in \R^\ns$,
value iteration updates $v$ iteratively as
\begin{align}
v_{k+1} \doteq \bop_{\pi_{v_k}} v_k,
\end{align}
where we use $\pi_{v_k}$ to denote the greedy policy w.r.t.\ $v_k$.
Let 
\begin{align}
q_{v_k}(s, a) \doteq r(s, a) + \gamma \sum_{s'}p(s'|s, a)v_{\pi_k}(s'),
\end{align}
then at a state $s$,
$\pi_{v_k}$ selects an action action greedily w.r.t.\ $q_{v_k}(s, \cdot)$.
It is well known (see, e.g., \citealt{puterman2014markov}) that $\lim_{k\to\infty} v_k = v_*$.

With function approximation,
we have $v_k \doteq X w_k$,
where $w_k$ is the parameter at the $k$-th iteration.
When doing value iteration,
however, $\bop_{\pi_{v_k}} v_k$ may not lie in the column space of $X$.
Consequently, 
an additional projection operator is used to project $\bop_{\pi_{v_k}} v_k$ back to the column space of $X$,
yielding \emph{approximate value iteration} \citep{de2000existence},
which updates $v_k$ as
\begin{align}
\label{eq on-policy avi}
v_{k+1} \doteq \fH(v_k) \doteq \Pi_{d_{\pi_{v_k}}} \bop_{\pi_{v_k}} v_k,
\end{align}
where
\begin{align}
\Pi_{d_{\pi_{v_k}}}y \doteq X \arg\min_w \norm{Xw - y}^2_{d_{\pi_{v_k}}}
\end{align}
is the projection operator to the column space of $X$ w.r.t.\ to the norm induced by the invariant state distribution $d_{\pi_{v_k}}$ under the current policy $\pi_{v_k}$.
Unfortunately,
if $\pi_{v}$ is greedy w.r.t.\ $v$,
\citet{de2000existence} show that the approximate value iteration operator $\fH$ does not necessarily have a fixed point.
However,
if the policy $\pi_{v}$ is continuous in $v$, e.g., $\pi_v$ is a softmax policy such that
\begin{align}
  \label{eq softmax policy v}
\pi_v(a | s) \doteq \frac{\exp \left(r(s, a) + \gamma \sum_{s'} p(s'|s, a)v(s')\right)}{\sum_{s_0, a_0} \exp \left( r(s_0, a_0) + \gamma \sum_{s_1} p(s_1 | s_0, a_0) v(s_1)\right)},
\end{align}
\citet{de2000existence} show that there exists at least one $w_*$ such that 
\begin{align}
Xw_* = \fH(Xw_*). 
\end{align}

In RL, one way to implement approximate value iteration incrementally is SARSA \citep{rummery1994line},
which updates $w$ iteratively as
\begin{align}
\label{eq on-policy sarsa}
w_{t+1} \doteq w_t + \alpha_t (R_{t+1} + \gamma x(S_{t+1}, A_{t+1})^\top w_t - x(S_t, A_t)^\top w_t) x(S_t, A_t),
\end{align}
where we have overloaded $x$ as a function from $\fS \times \fA$ to $\R^K$ to denote the state-action feature. 
We then use $x(s, a)^\top w$ as our estimate for the action value function.
In the above SARSA update, actions are selected such that $A_t \sim \pi_{w_{t-1}}(\cdot | S_t)$,
where $\pi_{w_{t-1}}$ denotes a policy depending on the action value estimate $x(s, a)^\top w_{t-1}$, e.g., a softmax policy
\begin{align}
\pi_{w_{t-1}}(a | s) \doteq \frac{\exp(x(s, a)^\top w_{t-1})}{\sum_{s_0, a_0}\exp\left(x(s_0, a_0)^\top w_{t-1}\right)}.
\end{align}
\citet{melo2008analysis} and \citet{zou2019finite} provide asymptotic convergence analysis and finite sample analysis of SARSA respectively,
under mild conditions.

\section{Open Problems in Emphatic TD Methods}
\label{sec open problems}

In this section,
we discuss in detail two open problems of emphatic TD methods.
First, though ETD($\lambda$) is proven to be convergent,
the large variance of $F_t$ makes it hard to use directly.
There are several attempts to address this variance.
\citet{hallak2016generalized} propose to replace $F_t$ with $F_{t, \beta}$,
which is computed recursively as
\begin{align}
  \label{eq hallak trace}
  F_{t, \beta} \doteq i(S_t) + \beta \rho_{t-1} F_{t-1, \beta},
\end{align}
where $\beta \in (0, 1)$ is an additional hyperparameter.
The resulting ETD($\lambda, \beta$) then updates $\qty{w_t}$ iteratively as
\begin{align}
  \label{eq hallak etd}
w_{t+1} \doteq w_t + \alpha_t F_{t, \beta} \rho_t (R_{t+1} + \gamma x_{t+1}^\top w_t - x_t^\top w_t ) x_t.
\end{align}
Theorem 1 of \citet{hallak2016generalized} states that there exists a problem-dependent constant $\beta_\text{upper}$ such that $\beta \leq \beta_\text{upper}$ implies that the variance of $F_{t, \beta}$ is bounded.
Further, Proposition 1 of \citet{hallak2016generalized} states that there exists a problem-dependent constant $\beta_\text{lower}$ such that $\beta \geq \beta_\text{lower}$ implies that the expected update corresponding to~\eqref{eq hallak etd} is contractive,
which plays a key role in bounding the performance of the fixed point of~\eqref{eq hallak etd},
assuming~\eqref{eq hallak etd} converges.
Unfortunately,
there is no guarantee that $\beta_\text{lower} \leq \beta_\text{upper}$ always holds,
i.e,
the desired $\beta$ does not always exist.
\citet{zhang2019provably} instead propose to use a second function approximator to learn the expectation of the followon trace directly.
For example, let $x(s)^\top \theta$ be the estimate for the expectation of the followon trace;
\citet{zhang2019provably} replace $F_t$ in the ETD update \eqref{eq etd} by $x(S_t)^\top \theta$ and update $w$ iteratively as 
\begin{align}
w_{t+1} \doteq w_t + \alpha_t (x_t^\top \theta) \rho_t (R_{t+1} + \gamma x_{t+1}^\top w_t - x_t^\top w_t ) x_t.
\end{align}
\citet{zhang2019provably} use Gradient Emphasis Learning (GEM) to learn $\theta$.
GEM shares the same idea as gradient TD methods.
Though \citet{zhang2019provably} confirm the convergence of GEM,
like any off-policy gradient TD method, 
little can be said about the quality of its solution,
i.e.,
the error $\left|x(s)^\top \theta - \lim_{t\to\infty}\E\left[F_t | S_t = s\right] \right|$ can be arbitrarily large as long as the feature matrix $X$ cannot perfectly represent the expected followon trace \citep{kolter2011fixed}. 
\citet{jiang2021} propose to 
clip the importance sampling ratio $\rho_t$ when computing the followon trace $F_t$ to reduce variance.
However,
nothing can be said about the convergence of the resulting algorithm due to the bias introduced by clipping.
Despite these attempts,
it remains an open problem to reduce the variance of emphatic TD methods introduced by the followon trace in a theoretically grounded way.

Second, 
the analysis of ETD($\lambda$) in \citet{yu2015convergence} is only asymptotic.
So far no finite sample analysis is available for any emphatic TD method.
The finite sample analysis of TD($\lambda$) in \citet{bhandari2018finite} cannot be easily extended to ETD($\lambda$).
Key to the finite sample analysis of TD($\lambda$) is Lemma 17 of \citet{bhandari2018finite},
which establishes the boundedness of the eligibility trace used in on-policy TD($\lambda$).
This immediately implies the boundedness of the second moments of the eligibility trace,
which is a key bound for error terms.
However, 
such boundedness cannot be expected for the followon trace in ETD($\lambda$) since \citet{sutton2016emphatic} already show that the variance of the followon trace can be unbounded.
It thus remains an open problem to provide a finite sample analysis 
for emphatic TD methods for prediction problems.
For control problems, 
the policy usually changes every time step (cf. the SARSA algorithm \eqref{eq on-policy sarsa}).
Consequently, the induced chain is not stationary.
The asymptotic convergence analysis for ETD($\lambda$) in \citet{yu2015convergence},
however,
relies on 
the strong law of large numbers on \emph{stationary} chains. 
It thus remains unclear whether the asymptotic convergence analysis of \citet{yu2015convergence} can be extended to the control setting.
Hence,
providing a finite sample analysis for emphatic TD methods for control problems is even more challenging.

\section{Prediction: Truncated Emphatic TD}
\label{sec prediction}
In this paper,
we address the two open problems in Section \ref{sec open problems} simultaneously by replacing the original followon trace in emphatic TD methods with truncated followon traces. 
Assuming $F_{-1} \doteq 0$,
the original followon trace $F_t$ in \eqref{eq etd} can be expanded as
\begin{align}
  \label{eq etd trace expanded}
F_t &= i_t + \gamma \rho_{t-1}F_{t-1} \\
&=i_t + \gamma \rho_{t-1} i_{t-1} + \gamma^2 \rho_{t-1} \rho_{t-2} F_{t-2} \\
&=i_t + \gamma \rho_{t-1} i_{t-1} + \gamma^2 \rho_{t-1} \rho_{t-2} i_{t-2} + \gamma^3 \rho_{t-1}\rho_{t-2}\rho_{t-3} F_{t-3}\\
&= \dots \\
&= \sum_{j=0}^t \gamma^j \rho_{t-j:t-1} i_{t-j},
\end{align} 
where $i_t$ is shorthand for $i(S_t)$ and 
\begin{align}
\rho_{j:k} \doteq \begin{cases}
\rho_j \rho_{j+1} \cdots \rho_{k} & j \leq k\\
1 & j > k
\end{cases}
\end{align}
is shorthand for the product of importance sampling ratios.
Clearly, $F_t$ depends on all the history from time steps 0 to $t$.
The idea of truncated followon traces,
introduced in \citet{DBLP:journals/siamco/Yu12,yu2015convergence,yu2017convergence}, 
is,
for a fixed length $n$,
to compute the followon trace $F_t$ as if $F_{t-n-1}$ was 0. 
More specifically,
let $F_{t,n}$ be the truncated followon traces of length $n$;
we have
\begin{align}
\label{eq truncated trace}
F_{t, n} \doteq
\begin{cases}
\sum_{j=0}^n \gamma^j \rho_{t-j:t-1} i_{t-j} & t \geq n \\
F_t & t < n
\end{cases}.
\end{align}
For example, if $n=2$,
we then compute $F_{t, 2}$ for any $t$ as
\begin{align}
  F_{t, 2} = i_t + \gamma \rho_{t-1} i_{t-1} + \gamma^2 \rho_{t-1} \rho_{t-2} i_{t-2}.
\end{align}
In this paper, 
we propose to replace $F_t$ with $F_{t, n}$ in emphatic TD methods. 
Apparently, for a fixed $n$,
the variance of $F_{t, n}$ is guaranteed to be bounded.
By contrast,
\citet{sutton2016emphatic} show that the variance of $F_t$ can be infinite.
We
refer to emphatic TD methods using the truncated traces as \emph{truncated emphatic TD methods}. 
For example,
Truncated Emphatic TD is given in Algorithm \ref{alg etd},
where we adopt the convention that $i_t = \rho_t = 0$ for any $t < 0$.
\begin{algorithm}
$S_0 \sim p_0(\cdot)$ \;
$t \gets 0$ \;
\While{True}{
  Sample $A_t \sim \mu(\cdot | S_t)$ \;
  Execute $A_t$, get $R_{t+1}, S_{t+1}$ \;
  $\rho_t \gets \frac{\pi(A_t | S_t)}{\mu(A_t | S_t)}$ \;
  $F_{t, n} \gets 0$ \;
  \For{$k = 0, \dots, n$}{
    $F_{t, n} \gets i_{t-n+k} + \gamma \rho_{t-n+k-1} F_{t, n}$
  }
  $w_{t+1} \gets w_t + \alpha_t F_{t, n} \rho_t (R_{t+1} + \gamma x_{t+1}^\top w_t - x_t^\top w_t ) x_t $ \;
  $t \gets t + 1$ \;
}
\caption{\label{alg etd}Truncated Emphatic TD}
\end{algorithm}

To compute $F_{t, n}$, 
one needs to store 2$n$ extra scalars: $\rho_{t-1}, \dots, \rho_{t-n}, i_{t-1}, \dots, i_{t-n}$.
Such memory overhead is inevitable even for naive on-policy multi-step TD methods (Chapter 7.1 of \citealt{sutton2018reinforcement}).
The computation of $F_{t,n}$ can indeed be done incrementally at the cost of maintaining one more extra scalar:
\begin{align}
\Delta_{t} &\doteq \frac{\rho_{t}i_{t}}{\rho_{t-n-1}i_{t-n-1}} \Delta_{t-1}, \\
F_{t, n} &\doteq i_{t} + \gamma \rho_{t-1} F_{t-1, n} - \Delta_{t}.
\end{align}
Overall,
we argue that compared with ETD(0) in \citet{sutton2016emphatic},
the additional memory and computational cost of Truncated Emphatic TD is negligible.
We now analyze Truncated Emphatic TD with
the following assumptions:
\begin{assumption}
\label{assu ergodic}
The Markov chain induced by the behavior policy $\mu$ is ergodic.
\end{assumption}
\begin{assumption}
\label{assu coverage}
$\mu(a|s) > 0$ holds for any $(s, a)$.
\end{assumption}
\begin{assumption}
\label{assu full rank}
The feature matrix $X$ has full column rank.
\end{assumption}
Assumptions \ref{assu ergodic} and \ref{assu full rank} are standard in the off-policy RL literature.
Assumption \ref{assu coverage} can indeed be weakened to the canonical coverage assumption $\pi(a | s) > 0 \implies \mu(a | s) > 0$ \citep{yu2015convergence}. 
Then for a state $s$, 
we can simply consider only actions $a$ such that $\mu(a | s) > 0$,
i.e.,
different states have different action spaces.
All the analysis presented in this section still hold.
We use Assumption \ref{assu coverage} mainly to simplify presentation.

When analyzing the original ETD,
we have to consider the chain $\qty{(F_t, S_t, A_t)}$ evolving in the space $\R \times \fS \times \fA$ 
(see, e.g., \citealt{yu2015convergence}).
The space $\R$ is not even countable,
making it hard to analyze the chain $\qty{(F_t, S_t, A_t)}$ even with Assumption \ref{assu ergodic}.
With the truncated followon trace $F_{t, n}$,
we only need to consider the chain $\qty{(S_{t-n}, A_{t-n}, \dots, S_t, A_t)}$
which evolves in a \emph{finite} space $(\fS \times \fA)^n$.
The ergodicity of this chain follows immediately from Assumption \ref{assu ergodic}.
Once the ergodicity is established,
we can analyze the limiting update matrix under the corresponding invariant distribution. 

The additional hyperparameter $n$ in \eqref{eq truncated trace} defines a hard truncation.
By contrast,
the additional hyperparameter $\beta$ in \eqref{eq hallak trace}
defines a soft truncation.
As discussed in Section~\ref{sec open problems},
a desired $\beta$ does not always exist since we require $\beta$ to be both sufficiently large and sufficiently small.
By contrast,
we will show soon that a desired $n$ always exists because we only require $n$ to be sufficiently large.
Further,
to analyze ETD($\lambda, \beta$) with the soft truncation,
we still need to work on the chain $\qty{(F_{t, \beta}, S_t, A_t)}$,
whose behavior is hard to analyze.
Consequently, 
though the asymptotic convergence of ETD$(\lambda, \beta)$ in prediction may be established similarly to \citet{yu2015convergence} for certain $\beta$,
so far no finite sample analysis is available for ETD($\lambda, \beta$) in prediction, much less control.
Nevertheless,
we believe the soft truncation and the hard truncation are two different directions for variance reduction. 
The soft truncation is analogous to computing the return $G_t$ with a discount factor different from $\gamma$
(see, e.g., \citealt{romoff2019separating});
the hard truncation is analogous to computing the return $G_t$ with a fixed horizon
(see, e.g., \citealt{de2019fixed}).
It is straightforward to combine the two techniques together.
For example,
we can consider $F_{t, \beta, n}$ defined as
\begin{align}
F_{t, \beta, n} \doteq
\begin{cases}
\sum_{j=0}^n \beta^j \rho_{t-j:t-1} i_{t-j} & t \geq n \\
F_{t, \beta} & t < n
\end{cases}.
\end{align}
This combination,
however,
deviates from the main purpose of this paper
and is saved for future work.

We now study the truncated trace $F_{t, n}$.
Similar to~\eqref{eq rich limit},
we study 
the limit of the conditional expectation of the truncated followon trace and define
\begin{align}
\label{eq my limit}
m_n(s) \doteq \lim_{t \to \infty} \E\left[F_{t,n} | S_t = s\right].
\end{align}
When $n=\infty$,
this $m_{\infty}$ is referred to as \emph{emphasis} in \citet{zhang2019provably}.
We,
therefore,
refer to $m_{n}$ as \emph{truncated emphasis} for a finite $n$.
\begin{lemma}
\label{lem emphasis expression}
Let Assumptions \ref{assu ergodic} and \ref{assu coverage} hold.
Then 
\begin{align}
\label{eq def mn}
m_n &= \sum_{j=0}^n \gamma^j D_\mu^{-1} (P_\pi^\top)^j D_\mu i, \\
\label{eq def m}
m &\doteq \lim_{n\to\infty} m_n = D_\mu^{-1}(I - \gamma P_\pi^\top)^{-1} D_\mu i.
\end{align}
\end{lemma}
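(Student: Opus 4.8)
The plan is to work directly from the explicit expansion of the truncated trace. For $t \ge n$, \eqref{eq truncated trace} gives $F_{t,n} = \sum_{j=0}^n \gamma^j \rho_{t-j:t-1} i_{t-j}$, so by linearity of conditional expectation
\begin{align}
m_n(s) = \sum_{j=0}^n \gamma^j \lim_{t\to\infty} \E\!\left[\rho_{t-j:t-1}\, i_{t-j} \mid S_t = s\right],
\end{align}
and the whole lemma reduces to showing that the $j$-th limit equals $\big(D_\mu^{-1}(P_\pi^\top)^j D_\mu i\big)(s)$.

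Because the conditioning is on the \emph{endpoint} $S_t=s$ rather than on the first state of the window, I would not reweight the conditional expectation directly; instead I would pass to the unconditional joint expectation and use Bayes' rule. Setting $k=t-j$ and conditioning on $S_k$,
\begin{align}
\E\!\left[\rho_{k:k+j-1}\, i(S_k)\, \mathbf{1}\{S_{k+j}=s\}\right] = \sum_{s_k} \Pr(S_k=s_k)\, i(s_k)\, \E\!\left[\rho_{k:k+j-1}\, \mathbf{1}\{S_{k+j}=s\} \mid S_k=s_k\right].
\end{align}
The inner conditional expectation is now forward-looking, so the multi-step importance-sampling identity applies: a one-step check gives $\sum_a \mu(a|s_k)\tfrac{\pi(a|s_k)}{\mu(a|s_k)} p(s'|s_k,a) = P_\pi(s_k,s')$, and iterating via the tower property yields $\E[\rho_{k:k+j-1}\mathbf{1}\{S_{k+j}=s\}\mid S_k=s_k] = (P_\pi^j)(s_k,s)$. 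Here Assumption \ref{assu coverage} ensures every $\rho$ is well defined.

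To take $t\to\infty$ I would invoke Assumption \ref{assu ergodic}: the joint law of the window converges to the stationary law, so $\Pr(S_k=s_k)\to d_\mu(s_k)$ and $\Pr(S_t=s)\to d_\mu(s)>0$. Dividing the display above by $\Pr(S_t=s)$ and rewriting the sum in matrix form (using $(P_\pi^j)(s_k,s)=((P_\pi^\top)^j)(s,s_k)$ and $d_\mu(s_k)i(s_k)=(D_\mu i)(s_k)$) gives exactly $\big(D_\mu^{-1}(P_\pi^\top)^jD_\mu i\big)(s)$, which upon summing over $j$ is \eqref{eq def mn}. For \eqref{eq def m} I would let $n\to\infty$ and recognize the Neumann series $\sum_{j\ge0}(\gamma P_\pi^\top)^j=(I-\gamma P_\pi^\top)^{-1}$, convergent because $P_\pi$ is row-stochastic and $\gamma<1$, then pull $D_\mu^{-1}$ and $D_\mu i$ outside the sum.

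The main obstacle is precisely the backward-in-time conditioning: the reweighting cannot be applied to $\E[\,\cdot\mid S_t=s]$ as written, and the correct move is to factor through the joint expectation, where the surviving conditional expectation is forward-looking and the change of measure goes through cleanly. The appearance of $P_\pi^\top$ rather than $P_\pi$ in the answer is the signature of this endpoint conditioning. The only other point to check is the interchange of $\lim_{t\to\infty}$ with the finite sum over $j$ and with the conditional expectation, which is routine since there are finitely many terms and each $\rho_{t-j:t-1}\, i_{t-j}$ is bounded for fixed $j$ on the finite space $\fS\times\fA$.
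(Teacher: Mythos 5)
Your proposal is correct and follows essentially the same route as the paper's proof: both expand $F_{t,n}$ term by term, convert the backward conditioning on $S_t=s$ into a forward computation via Bayes' rule on the joint law, use the importance-sampling identity to turn $\mu$-transitions weighted by $\rho$ into $P_\pi^j$, and invoke ergodicity to send $\Pr(S_{t-j}=\cdot)$ and $\Pr(S_t=s)$ to their stationary values before summing the Neumann series. The only cosmetic difference is that you condition on $S_{t-j}$ and apply the tower property, whereas the paper sums explicitly over the intermediate trajectory $\Gamma_j$; the computation is the same.
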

The proof of Lemma \ref{lem emphasis expression} is provided in Section \ref{sec proof of empahsis expression lemma}.
By definition, the weighting vector $f$ in~\eqref{eq f} involved in $M$ of the ETD update \eqref{eq etd} satisfies $f = D_\mu m$.
Similarly, we define $f_n \doteq D_\mu m_n $.
\begin{lemma}
\label{lem emphasis bound}
Let Assumptions \ref{assu ergodic} and \ref{assu coverage} hold. Then
\begin{align}
\norm{m_n - m}_1 &\leq \gamma^{n+1} \frac{d_{\mu, max}}{d_{\mu, min}} \norm{m}_1, \\
\norm{f_n - f}_\infty & \leq \gamma^{n+1} \frac{d_{\mu, max}^2}{d_{\mu, min}} \norm{m}_1,
\end{align}
where $d_{\mu, max} \doteq \max_s d_\mu(s)$ and $d_{\mu, min} \doteq \min_s d_\mu(s)$.
\end{lemma}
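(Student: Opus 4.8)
The plan is to start from the two closed forms in Lemma~\ref{lem emphasis expression} and reduce the tail $m - m_n$ to a single clean expression. Writing $m = \sum_{j=0}^\infty \gamma^j D_\mu^{-1}(P_\pi^\top)^j D_\mu i$ and subtracting the partial sum $m_n$, I would first obtain
\begin{align}
m - m_n = \sum_{j=n+1}^\infty \gamma^j D_\mu^{-1}(P_\pi^\top)^j D_\mu i.
\end{align}
Re-indexing with $j = n+1+k$ and pulling $\gamma^{n+1}(P_\pi^\top)^{n+1}$ out of the sum, the remaining factor is $\sum_{k\geq 0}\gamma^k(P_\pi^\top)^k D_\mu i = (I - \gamma P_\pi^\top)^{-1}D_\mu i = D_\mu m$, which gives the compact identity
\begin{align}
m - m_n = \gamma^{n+1} D_\mu^{-1}(P_\pi^\top)^{n+1} D_\mu m.
\end{align}

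The key structural observation is that every factor here is entrywise nonnegative: the interest function satisfies $i > 0$, the matrix $P_\pi^\top$ is (entrywise nonnegative and) column stochastic, and $D_\mu$ is a positive diagonal matrix. Hence $m \geq 0$ and $m - m_n \geq 0$ componentwise, so the $\ell_1$ norm collapses to a linear functional, $\norm{m - m_n}_1 = \mathbf{1}^\top (m - m_n)$. I would then bound the left diagonal entrywise by $\mathbf{1}^\top D_\mu^{-1} \leq d_{\mu, min}^{-1}\mathbf{1}^\top$ and use the crucial cancellation $\mathbf{1}^\top (P_\pi^\top)^{n+1} = (P_\pi^{n+1}\mathbf{1})^\top = \mathbf{1}^\top$, which holds because $P_\pi$ is row stochastic. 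What remains is $\mathbf{1}^\top D_\mu m \leq d_{\mu, max}\,\mathbf{1}^\top m = d_{\mu, max}\norm{m}_1$, again using $m \geq 0$. Chaining these yields $\norm{m - m_n}_1 \leq \gamma^{n+1}\frac{d_{\mu, max}}{d_{\mu, min}}\norm{m}_1$, the first bound.

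For the second bound I would use $f - f_n = D_\mu(m - m_n)$, so that $\norm{f_n - f}_\infty = \max_s d_\mu(s)\abs{m_n(s) - m(s)} \leq d_{\mu, max}\norm{m_n - m}_\infty$, and then pass from $\ell_\infty$ to $\ell_1$ via $\norm{m_n - m}_\infty \leq \norm{m_n - m}_1$. Substituting the first bound immediately gives $\norm{f_n - f}_\infty \leq \gamma^{n+1}\frac{d_{\mu, max}^2}{d_{\mu, min}}\norm{m}_1$.

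There is no serious analytic obstacle here; the whole argument is a geometric-tail estimate. The only two points that require care are (i) verifying the entrywise nonnegativity of $m$ and of $m - m_n$ so that $\norm{\cdot}_1$ becomes $\mathbf{1}^\top(\cdot)$ — this is what makes the otherwise-awkward $D_\mu^{-1}(P_\pi^\top)^{n+1}D_\mu$ sandwich tractable — and (ii) the exact telescoping $\mathbf{1}^\top(P_\pi^\top)^{n+1} = \mathbf{1}^\top$, which is the reason the powers of $P_\pi$ contribute nothing beyond the factor $\gamma^{n+1}$. If one preferred to avoid the closed-form identity, the same two observations applied term-by-term to the series for $m - m_n$ would yield the bound directly, but the factored form is cleaner.
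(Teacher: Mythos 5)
Your proof is correct and takes essentially the same route as the paper: both derive the factored tail identity $m - m_n = \gamma^{n+1} D_\mu^{-1} (P_\pi^\top)^{n+1} D_\mu m$ from the series in Lemma~\ref{lem emphasis expression} and then exploit the stochasticity of $P_\pi$ together with the diagonal bounds $d_{\mu,min} \leq d_\mu(s) \leq d_{\mu,max}$, finishing the $f_n$ bound via $f - f_n = D_\mu(m - m_n)$ and $\norm{\cdot}_\infty \leq \norm{\cdot}_1$. The only difference is cosmetic: the paper invokes submultiplicativity of the induced $\ell_1$ matrix norm with $\norm{P_\pi^\top}_1 = 1$, whereas you unpack the same facts entrywise using nonnegativity and $\mathbf{1}^\top (P_\pi^\top)^{n+1} = \mathbf{1}^\top$.
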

The proof of Lemma \ref{lem emphasis bound} is provided in Section \ref{sec proof of emphasis bound lemma}.
The $M$ matrix of the ETD(0) update \eqref{eq etd} is $X^\top D_f (\gamma P_\pi - I)X$.
Similarly,
it can be shown that the $M$ matrix of Truncated Emphatic TD (Algorithm \ref{alg etd}) is $X^\top D_{f_n} (\gamma P_\pi - I)X$.
Lemma \ref{lem emphasis bound} asserts that $f_n$ approaches $f$ geometrically fast.
Consequently,
we can expect $X^\top D_{f_n}(\gamma P_\pi - I)X$ to be n.d.\ if $n$ is not too small. 
\begin{lemma}
\label{lem nd}
Under Assumptions \ref{assu ergodic}, \ref{assu coverage}, and \ref{assu full rank},  if 
\begin{align}
\label{eq condition of n nd}
\gamma^{n+1} < \frac{\lambda_{min} d_{\mu, min}}{d_{\mu, max}^2 \norm{\gamma P_\pi - I} \norm{m}_{1}},
\end{align}
then $X^\top D_{f_n} (\gamma P_\pi - I)X$ is n.d.,
where $\lambda_{min}$ is the minimum eigenvalue of
\begin{align}
\frac{1}{2} \left(D_f(I - \gamma P_\pi) + (I - \gamma P_\pi^\top) D_f \right).
\end{align}
\end{lemma}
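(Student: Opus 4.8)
The plan is to treat $X^\top D_{f_n}(\gamma P_\pi - I)X$ as a small perturbation of the original ETD update matrix $X^\top D_f(\gamma P_\pi - I)X$, which is already known to be n.d.\ by \citet{sutton2016emphatic}. Since a matrix is n.d.\ if and only if its negation is p.d., I would instead show that $X^\top D_{f_n}(I - \gamma P_\pi)X$ is p.d. Fix an arbitrary $y \neq 0$ and set $z \doteq Xy$; by Assumption \ref{assu full rank}, $z \neq 0$. Writing $D_{f_n} = D_f + diag(f_n - f)$ splits the quadratic form $z^\top D_{f_n}(I - \gamma P_\pi)z$ into a main term $z^\top D_f(I - \gamma P_\pi)z$ and a perturbation term $z^\top diag(f_n - f)(I - \gamma P_\pi)z$, and the goal is to bound the former below and the latter above.

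For the main term, since $z^\top A z = z^\top \tfrac{1}{2}(A + A^\top)z$ for any square $A$, I would rewrite $z^\top D_f(I - \gamma P_\pi)z$ using the symmetric matrix whose minimum eigenvalue is $\lambda_{min}$, and bound it below by $\lambda_{min}\norm{z}^2$ via the Rayleigh quotient. Crucially, the positivity $\lambda_{min} > 0$ is exactly the fact that the symmetric part of $D_f(I - \gamma P_\pi)$ is p.d., which is what \citet{sutton2016emphatic} establish to conclude that the original $M$ is n.d.; so this step is legitimate and need not be reproved. For the perturbation term I would use submultiplicativity together with the fact that the spectral norm of a diagonal matrix equals its maximum absolute diagonal entry, giving $|z^\top diag(f_n-f)(I-\gamma P_\pi)z| \le \norm{f_n - f}_\infty \norm{\gamma P_\pi - I}\norm{z}^2$ (using $\norm{I-\gamma P_\pi} = \norm{\gamma P_\pi - I}$). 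Lemma \ref{lem emphasis bound} then replaces $\norm{f_n - f}_\infty$ by $\gamma^{n+1}\tfrac{d_{\mu,max}^2}{d_{\mu,min}}\norm{m}_1$.

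Combining the two estimates yields
\[
z^\top D_{f_n}(I - \gamma P_\pi)z \ge \left(\lambda_{min} - \gamma^{n+1}\frac{d_{\mu,max}^2}{d_{\mu,min}}\norm{m}_1\norm{\gamma P_\pi - I}\right)\norm{z}^2,
\]
and the hypothesis \eqref{eq condition of n nd} is precisely the statement that the parenthesized constant is strictly positive. Finally I would invoke Assumption \ref{assu full rank} once more to write $\norm{z}^2 = \norm{Xy}^2 \ge \sigma_{min}(X)^2\norm{y}^2$ with $\sigma_{min}(X) > 0$, producing a uniform lower bound of the form $\lambda'\norm{y}^2$ with $\lambda' > 0$, and hence p.d.\ of $X^\top D_{f_n}(I - \gamma P_\pi)X$, i.e.\ n.d.\ of $X^\top D_{f_n}(\gamma P_\pi - I)X$. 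The argument is essentially a clean perturbation estimate, so I do not expect a serious conceptual obstacle; the only real care lies in the constant bookkeeping — selecting the operator norms (spectral norm for $\gamma P_\pi - I$ and $diag(f_n - f)$, $\ell_1$ for $m$, $\ell_\infty$ for $f_n - f$) so that the threshold lands exactly on \eqref{eq condition of n nd} — and in remembering that $\lambda_{min} > 0$ is inherited from the n.d.\ property of the original ETD matrix rather than established anew.
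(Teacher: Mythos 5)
Your proposal is correct and follows essentially the same route as the paper's proof: the same decomposition $D_{f_n} = D_f + diag(f_n - f)$, the same symmetrization to exploit $\lambda_{min}$, the same bound $\norm{diag(f_n-f)} = \norm{f_n - f}_\infty$ combined with Lemma \ref{lem emphasis bound}, and the same final appeal to the full column rank of $X$. The only differences are cosmetic — you argue positive definiteness of the negated matrix and make the $\sigma_{min}(X)$ step explicit, whereas the paper works directly with negative definiteness and leaves that last step implicit.
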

\citet{sutton2016emphatic} prove that $\lambda_{min} > 0$.
The proof of Lemma \ref{lem nd} is provided in Section~\ref{sec proof of nd}.
Since the LHS of \eqref{eq condition of n nd} diminishes geometrically as $n$ increases,
we argue that in practice we do not need a very large $n$.
Recall that the motivation of using the followon trace $F_t$ is to ensure the limiting update matrix to be n.d.
Lemma \ref{lem nd} shows that to ensure this negative definiteness,
we do not need to use all history to compute $F_t$.
\emph{Earlier steps contribute little to this negative definiteness due to discounting but introduce large variance due to the products of importance sampling ratios.}
As suggested by~\eqref{eq condition of n nd},
the desired value of $n$ depends on the magnitude of the emphasis $m$,
which is determined together by the behavior policy $\mu$,
the target policy $\pi$, the structure of the MDP,
and the magnitude of the interest $i$.
In general,
when the magnitude of the emphasis increases,
the desired truncation length also increases.
In practice,
we propose to treat the truncation length $n$ as an additional hyperparameter,
as estimating the desired $n$ without access to the transition kernel $p$ can be very challenging,
which we leave for future work.

We can now show the asymptotic convergence of Truncated Emphatic TD using the standard ODE-based approach.
\begin{assumption}
  \label{assu lr}
  The learning rates $\qty{\alpha_t}$ are positive, nonincreasing, and satisfy
  \begin{align}
    \sum_t \alpha_t = \infty, \sum_t \alpha_t^2 < \infty.
  \end{align}
\end{assumption}
\begin{theorem}
\label{thm etd convergence}
Let the assumptions and conditions of Lemma \ref{lem nd} hold.
Let Assumption \ref{assu lr} hold.
Then the iterates $\qty{w_t}$ generated by Truncated Empathic TD (Algorithm \ref{alg etd}) satisfy
\begin{align}
\lim_{t\to\infty} w_t &= w_{*,n} \qq{a.s., where} \\
w_{*, n} &\doteq -A_n^{-1} b_n, \, A_n \doteq X^\top D_{f_n} (\gamma P_\pi - I)X, \, b_n \doteq X^\top D_{f_n} r_\pi.
\end{align}
\end{theorem}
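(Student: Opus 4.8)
The plan is to recast Truncated Emphatic TD as a linear stochastic approximation driven by a \emph{finite}-state ergodic Markov chain, compute its steady-state update matrix, and then invoke the standard ODE-based convergence result (Theorem~\ref{thm ndp}). First I would augment the state: define $Y_t \doteq (S_{t-n}, A_{t-n}, \dots, S_t, A_t, S_{t+1})$, a time-homogeneous Markov chain on the finite space $(\fS\times\fA)^{n+1}\times\fS$ whose ergodicity is inherited from Assumption~\ref{assu ergodic}. Because $F_{t,n}$ in \eqref{eq truncated trace} is a deterministic function of $S_{t-n:t}$ and $A_{t-n:t-1}$, while $\rho_t, R_{t+1}, x_{t+1}$ are determined by $(S_t, A_t, S_{t+1})$, both $M(Y_t) \doteq \rho_t F_{t,n} x_t(\gamma x_{t+1}^\top - x_t^\top)$ and $b(Y_t)\doteq \rho_t F_{t,n} R_{t+1} x_t$ are deterministic functions of $Y_t$, so the update in Algorithm~\ref{alg etd} takes the canonical affine form $w_{t+1} = w_t + \alpha_t\qty(M(Y_t) w_t + b(Y_t))$.

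Next I would compute the steady-state expectation. Let $\nu$ be the invariant distribution of $\qty{Y_t}$, whose $S_t$-marginal is $d_\mu$. Conditioning on $S_t = s$, the Markov property makes the past (which, together with $S_t$, determines $F_{t,n}$) conditionally independent of the future pair $(A_t, S_{t+1})$, so that $\E\qty[M(Y_t)\mid S_t=s] = \E\qty[F_{t,n}\mid S_t=s]\, x(s)\,\E\qty[\rho_t(\gamma x_{t+1} - x_t)^\top \mid S_t = s]$. Using the identities $\E[\rho_t x_{t+1}^\top\mid S_t=s] = \sum_{s'} P_\pi(s,s') x(s')^\top$ and $\E[\rho_t x_t^\top \mid S_t=s] = x(s)^\top$, together with the limit $\lim_{t\to\infty}\E[F_{t,n}\mid S_t=s] = m_n(s)$ established in the proof of Lemma~\ref{lem emphasis expression}, averaging over $s\sim d_\mu$ and using $f_n = D_\mu m_n$ yields $\lim_{t\to\infty}\E_\nu[M(Y_t)] = X^\top D_{f_n}(\gamma P_\pi - I)X = A_n$; the same computation gives $\lim_{t\to\infty}\E_\nu[b(Y_t)] = X^\top D_{f_n} r_\pi = b_n$.

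Finally I would verify the hypotheses of Theorem~\ref{thm ndp} and conclude. Lemma~\ref{lem nd} under condition \eqref{eq condition of n nd} guarantees that $A_n$ is n.d., hence nonsingular, so the associated ODE $\dot w = A_n w + b_n$ has the globally asymptotically stable equilibrium $w_{*,n} = -A_n^{-1} b_n$. The remaining regularity conditions are immediate on the finite augmented space: the features are bounded, the rewards are bounded (finite MDP), and Assumption~\ref{assu coverage} makes $\rho_t \le \max_{s,a}\pi(a|s)/\mu(a|s) < \infty$, so $F_{t,n}$ and therefore $M(Y_t), b(Y_t)$ are uniformly bounded; Assumption~\ref{assu lr} supplies the Robbins--Monro step-size conditions. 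Theorem~\ref{thm ndp} then yields $w_t \to w_{*,n}$ a.s.

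I expect the main obstacle to be the steady-state computation rather than the invocation of the ODE theorem: one must carefully justify factoring $\E[\rho_t F_{t,n}\,\cdots]$ through the Markov property and then match the $d_\mu$-weighted truncated emphasis $m_n$ to the weighting vector $f_n$, thereby establishing the claimed identity that the update matrix of Algorithm~\ref{alg etd} is $X^\top D_{f_n}(\gamma P_\pi - I)X$. A secondary subtlety is that the expected per-step update equals $A_n$ only in the stationary limit, so the transient regime $t<n$ in \eqref{eq truncated trace} and the approach to stationarity must be shown to be harmless; however, the geometric mixing of the finite ergodic chain together with the uniform boundedness above is precisely what Theorem~\ref{thm ndp} is designed to absorb.
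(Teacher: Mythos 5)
Your proposal is correct and follows essentially the same route as the paper's proof: augmenting to the finite chain $(S_{t-n}, A_{t-n}, \dots, S_t, A_t, S_{t+1})$, writing the update in affine form, computing the stationary expectation via the Markov-property factorization and Lemma~\ref{lem emphasis expression} to obtain $A_n$ and $b_n$, and invoking Theorem~\ref{thm ndp} with negative definiteness from Lemma~\ref{lem nd} and boundedness from the finiteness of the augmented space. No substantive differences from the paper's argument.
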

The proof of Theorem \ref{thm etd convergence} is provided in Section \ref{sec proof of etd convergence},
which,
after the negative definiteness of $A_n$ is established with Lemma \ref{lem nd},
follows the same routine as the convergence proof of on-policy TD($\lambda$) in
Proposition 6.4 of \citet{bertsekas1996neuro}.

We now give a finite sample analysis of Projected Truncated Emphatic TD (Algorithm \ref{alg petd}).
Algorithm \ref{alg petd} is different from Algorithm \ref{alg etd} in that it adopts an additional projection $\Pi_R$ when updating the weight $w_t$. 
Here $\Pi_R$ denotes the projection onto the ball of a radius $R$ centered at the origin w.r.t.\ $\ell_2$ norm.
Introducing such a projection is common practice in finite sample analysis of TD methods \citep{bhandari2018finite,zou2019finite}.
This projection is mainly used to control the errors introduced by Markovian samples.
If i.i.d.\ samples are used instead,
such projection can indeed be eliminated \citep{bhandari2018finite,dalal2018finite}.
\begin{algorithm}
$S_0 \sim p_0(\cdot)$ \;
$t \gets 0$ \;
\While{True}{
  Sample $A_t \sim \mu(\cdot | S_t)$ \;
  Execute $A_t$, get $R_{t+1}, S_{t+1}$ \;
  $\rho_t \gets \frac{\pi(A_t | S_t)}{\mu(A_t | S_t)}$ \;
  $F_{t, n} \gets 0$ \;
  \For{$k = 0, \dots, n$}{
    $F_{t, n} \gets i_{t-n+k} + \gamma \rho_{t-n+k-1} F_{t, n}$
  }
  $w_{t+1} \gets \Pi_R \left(w_t + \alpha_t F_{t, n} \rho_t (R_{t+1} + \gamma x_{t+1}^\top w_t - x_t^\top w_t ) x_t \right)$ \;
  $t \gets t + 1$ \;
}
\caption{\label{alg petd} Projected Truncated Emphatic TD}
\end{algorithm}

\begin{theorem}
\label{thm finite sample}
Let the assumptions and conditions of Lemma \ref{lem nd} hold.
Let $R \geq \norm{w_{*, n}}$.
With proper learning rates $\qty{\alpha_t}$,
for sufficiently large $t$,
\begin{align}
\E\left[\norm{w_t - w_{*, n}}^2\right] = \fO\left(\frac{\ln^3 t}{t}\right).
\end{align}
\end{theorem}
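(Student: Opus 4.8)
The plan is to cast Projected Truncated Emphatic TD as a projected stochastic approximation scheme driven by a geometrically mixing, finite-state Markov chain, and to follow the now-standard Markovian-noise template of \citet{bhandari2018finite} and \citet{zou2019finite}. Write the per-step update direction as
\begin{align}
g_t(w) \doteq F_{t,n}\rho_t\left(R_{t+1} + \gamma x_{t+1}^\top w - x_t^\top w\right)x_t,
\end{align}
so the algorithm reads $w_{t+1} = \Pi_R\left(w_t + \alpha_t g_t(w_t)\right)$. The augmented chain $\qty{(S_{t-n}, A_{t-n}, \dots, S_t, A_t, S_{t+1})}$ is ergodic on the finite space $(\fS\times\fA)^{n+1}\times\fS$ by Assumption \ref{assu ergodic}, and the expectation of $g_t$ under its invariant distribution is exactly $\bar g(w) \doteq A_n w + b_n = A_n(w - w_{*,n})$, so that $w_{*,n}$ is the unique root of $\bar g$. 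Two structural facts underpin everything. First, because the truncated trace $F_{t,n}$ depends only on finite history it is \emph{uniformly bounded} — precisely the property the original $F_t$ lacks — and together with bounded features, rewards, and importance ratios (Assumption \ref{assu coverage}) it yields constants $G, L$ with $\norm{g_t(w)} \le G$ and $\norm{g_t(w) - g_t(w')} \le L\norm{w - w'}$ for all $w, w'$ in the ball of radius $R$ (both $g_t$ and $\bar g$ are affine in $w$). Second, by Lemma \ref{lem nd} the matrix $A_n$ is n.d., so there is $\lambda_n > 0$ with $\indot{w - w_{*,n}}{\bar g(w)} \le -\lambda_n\norm{w - w_{*,n}}^2$.

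Next I would derive the mean-square recursion. Since $R \ge \norm{w_{*,n}}$ we have $\Pi_R(w_{*,n}) = w_{*,n}$, so nonexpansiveness of $\Pi_R$ gives
\begin{align}
\norm{w_{t+1} - w_{*,n}}^2 \le \norm{w_t - w_{*,n}}^2 + 2\alpha_t\indot{w_t - w_{*,n}}{g_t(w_t)} + \alpha_t^2 G^2.
\end{align}
Splitting $g_t(w_t) = \bar g(w_t) + \left(g_t(w_t) - \bar g(w_t)\right)$ and applying the drift bound controls the $\bar g$ part by $-2\lambda_n\alpha_t\norm{w_t - w_{*,n}}^2$, leaving the Markovian noise term $\Xi_t \doteq \indot{w_t - w_{*,n}}{g_t(w_t) - \bar g(w_t)}$ to be bounded in expectation.

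Bounding $\E[\Xi_t]$ is the main obstacle, and it is handled by a mixing-time / coupling argument. The finite ergodic chain mixes geometrically, so there are $\varrho\in(0,1)$ and $C$ with total-variation gap $C\varrho^{k}$ after $k$ steps; I choose a window $\tau_t = \Theta\left(\ln(1/\alpha_t)\right)$ so that $C\varrho^{\,\tau_t - n} \le \alpha_t$. Decomposing
\begin{align}
\Xi_t &= \indot{w_t - w_{t-\tau_t}}{g_t(w_t) - \bar g(w_t)} + \indot{w_{t-\tau_t} - w_{*,n}}{g_t(w_t) - g_t(w_{t-\tau_t})} \\
&\quad + \indot{w_{t-\tau_t} - w_{*,n}}{\bar g(w_{t-\tau_t}) - \bar g(w_t)} + \indot{w_{t-\tau_t} - w_{*,n}}{g_t(w_{t-\tau_t}) - \bar g(w_{t-\tau_t})},
\end{align}
the first three terms are controlled by the Lipschitz bounds together with the slow-drift estimate $\norm{w_t - w_{t-\tau_t}} \le G\sum_{i=t-\tau_t}^{t-1}\alpha_i \le G\tau_t\alpha_{t-\tau_t}$, each contributing $\fO(\tau_t\alpha_t)$ once the step size is nonincreasing with controlled ratios. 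The last term is crucial: $w_{t-\tau_t}$ is measurable w.r.t.\ the history up to time $t-\tau_t$, while the window defining $g_t$ begins at $t-n$, which is $\tau_t - n$ steps ahead; conditioning on that history and invoking geometric mixing makes its expectation $\fO(R\,\varrho^{\,\tau_t-n}) = \fO(R\alpha_t)$. Altogether $\E[\Xi_t] = \fO(\tau_t\alpha_t)$, with constants depending on $R, G, L$.

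Combining the pieces yields the scalar recursion, with $u_t \doteq \E[\norm{w_t - w_{*,n}}^2]$,
\begin{align}
u_{t+1} \le (1 - 2\lambda_n\alpha_t)u_t + \fO\left(\alpha_t^2\tau_t\right).
\end{align}
Finally I would fix a properly decaying schedule (of order $1/t$ up to logarithmic corrections, with $\tau_t = \Theta(\ln t)$) and solve this recursion by induction for sufficiently large $t$ to obtain $u_t = \fO(\ln^3 t / t)$; the logarithmic factors accumulate from the mixing window $\tau_t$ inside the noise bound and from unrolling the geometric recursion against the $1/t$-type step size, and careful bookkeeping produces exactly the stated power. The conceptual heart of the whole argument is that hard truncation restores the uniform boundedness and Lipschitz regularity of the update that the unbounded original followon trace destroys, which is precisely what renders this Markov-noise machinery applicable in the first place.
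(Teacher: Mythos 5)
Your proposal is correct and follows essentially the same route as the paper: the paper omits the proof of this theorem, proving instead the more general control version (Theorem \ref{thm finite sample sa}) by exactly this Markovian-noise template of \citet{zou2019finite} — augmented finite ergodic chain, uniform boundedness and Lipschitzness of the truncated-trace update, negative definiteness of $A_n$ from Lemma \ref{lem nd}, a mixing window $\tau = \Theta(\ln t)$ with a four-way decomposition of the noise term, and a telescoped $1/t$ step-size recursion yielding $\fO(\ln^3 t / t)$. Your version is a clean specialization to the fixed-behavior-policy prediction setting, where the auxiliary frozen-policy chain the paper needs for control becomes unnecessary.
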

The proof of Theorem \ref{thm finite sample} is omitted to avoid verbatim repetition since it is just a special case of a more general result in the control setting (Theorem \ref{thm finite sample sa}).
The conditions on learning rates and the constants hidden by $\fO\left(\cdot\right)$ are also similar to those of Theorem \ref{thm finite sample sa}.
We now analyze the performance of $w_{*, n}$.
\begin{lemma}
\label{lem contraction}
Let $\kappa \doteq \min_s \frac{d_\mu(s)i(s)}{f(s)}$.
Let Assumptions \ref{assu ergodic}, \ref{assu coverage}, and \ref{assu full rank} hold.
If
\begin{align}
\label{eq condition of n contraction}
\gamma^{n+1} < \frac{\kappa d_{\mu, min}\min_s i(s)d_\mu(s) }{d_{\mu, max}^2 \norm{I - \gamma P_\pi^\top}_\infty \norm{m}_1},
\end{align}
then $\Pi_{f_n} \bop_\pi$ is a $\sqrt{\gamma}$-contraction in $\norm{\cdot}_{f_{n}}$ and
\begin{align}
  \label{eq etd performance bound}
\norm{Xw_{*, n} - v_\pi}_{f_n} \leq \frac{1}{\sqrt{1 - \gamma}} \norm{\Pi_{{f, n}} v_\pi - v_\pi}_{f_n}.
\end{align}
\end{lemma}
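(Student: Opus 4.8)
The plan is to prove the contraction first and then obtain \eqref{eq etd performance bound} by the classical projected-fixed-point argument of \citet{tsitsiklis1997analysis}. I work throughout in the weighted inner product $\indot{\cdot}{\cdot}$ reweighted by $f_n$ and the norm $\norm{\cdot}_{f_n}$; note that $diag(f_n)$ is p.d.\ because the closed form in Lemma \ref{lem emphasis expression} gives $f_n(s) \geq d_\mu(s) i(s) > 0$ under Assumptions \ref{assu ergodic} and \ref{assu coverage}, so $\Pi_{f_n}$ is well defined and is a nonexpansion in $\norm{\cdot}_{f_n}$. The first step is to record a perturbed backward recursion for $f_n$: from $f_n = \sum_{j=0}^n \gamma^j (P_\pi^\top)^j D_\mu i$ a one-line telescoping yields
\begin{align}
\gamma P_\pi^\top f_n = f_n - D_\mu i + e_n, \quad e_n \doteq \gamma^{n+1}(P_\pi^\top)^{n+1} D_\mu i \geq 0,
\end{align}
i.e.\ $f_n$ satisfies the exact emphatic recursion $f = D_\mu i + \gamma P_\pi^\top f$ behind \eqref{eq f} up to the nonnegative truncation remainder $e_n$.

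Since $\bop_\pi v - \bop_\pi v' = \gamma P_\pi(v - v')$, it suffices to show $\gamma P_\pi$ is a $\sqrt{\gamma}$-contraction in $\norm{\cdot}_{f_n}$. Let $v_{sq}$ denote the vector with $s$-th entry $v(s)^2$. Applying Jensen's inequality to each row of the stochastic matrix $P_\pi$, I bound $\norm{P_\pi v}_{f_n}^2 = \sum_s f_n(s)\big(\sum_{s'} P_\pi(s,s') v(s')\big)^2 \leq \indot{P_\pi^\top f_n}{v_{sq}}$, and substituting the recursion gives
\begin{align}
\norm{P_\pi v}_{f_n}^2 \leq \frac{1}{\gamma}\left(\norm{v}_{f_n}^2 - \indot{D_\mu i}{v_{sq}} + \indot{e_n}{v_{sq}}\right).
\end{align}
The middle term is the discounting margin: since $0 \leq f_n \leq f$ componentwise (Lemma \ref{lem emphasis expression}) and $d_\mu(s) i(s) \geq \kappa f(s) \geq \kappa f_n(s)$ by the definition of $\kappa$, we have $\indot{D_\mu i}{v_{sq}} \geq \kappa \norm{v}_{f_n}^2$. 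The last term is the truncation penalty that $n$ must control.

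The crux is therefore to bound $e_n$ and close the inequality with precisely the constants in \eqref{eq condition of n contraction}. Commuting $I - \gamma P_\pi^\top$ outward and using $D_\mu i = (I - \gamma P_\pi^\top) D_\mu m$ together with the fact that $P_\pi^\top$ is column-stochastic (so $\norm{(P_\pi^\top)^{n+1}}_1 \leq 1$), I obtain a geometric estimate of the form $\norm{e_n}_\infty \leq \gamma^{n+1}\norm{I - \gamma P_\pi^\top}_\infty \norm{m}_1$ up to the ratio of $d_{\mu,max}$ and $d_{\mu,min}$, which is exactly where the factor $\norm{I - \gamma P_\pi^\top}_\infty \norm{m}_1$ in \eqref{eq condition of n contraction} arises. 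Then $\indot{e_n}{v_{sq}} \leq (\norm{e_n}_\infty / \min_s f_n(s))\norm{v}_{f_n}^2$, and since $\min_s f_n(s) \geq \min_s d_\mu(s) i(s)$, condition \eqref{eq condition of n contraction} forces $\norm{e_n}_\infty/\min_s f_n(s) \leq \kappa$, whence the bracket above is at most $\norm{v}_{f_n}^2$ and $\norm{\gamma P_\pi v}_{f_n} \leq \sqrt{\gamma}\norm{v}_{f_n}$. Composing with the nonexpansive $\Pi_{f_n}$ yields the claimed $\sqrt{\gamma}$-contraction. I expect this norm bookkeeping---dominating the remainder $e_n$ by the margin $\kappa$ with exactly the stated constants---to be the main obstacle; everything else is mechanical.

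For \eqref{eq etd performance bound} I first identify $Xw_{*,n}$ as the unique fixed point of $\Pi_{f_n}\bop_\pi$: writing $\Pi_{f_n} y = X(X^\top D_{f_n} X)^{-1} X^\top D_{f_n} y$ and using Assumption \ref{assu full rank}, the fixed-point equation reduces to $A_n w = -b_n$. The contraction just established guarantees a unique fixed point, which in turn forces $A_n$ to be invertible, so the solution is $w_{*,n}$ and is well defined under the hypotheses of this lemma alone. The standard argument then applies: because $\bop_\pi v_\pi = v_\pi$ we have $\Pi_{f_n}\bop_\pi v_\pi = \Pi_{f_n} v_\pi$, and orthogonality of $\Pi_{f_n}$ in $\norm{\cdot}_{f_n}$ gives the Pythagorean decomposition
\begin{align}
\norm{Xw_{*,n} - v_\pi}_{f_n}^2 = \norm{Xw_{*,n} - \Pi_{f_n} v_\pi}_{f_n}^2 + \norm{\Pi_{f_n} v_\pi - v_\pi}_{f_n}^2.
\end{align}
Bounding the first summand by $\gamma \norm{Xw_{*,n} - v_\pi}_{f_n}^2$ via the contraction and rearranging yields \eqref{eq etd performance bound}.
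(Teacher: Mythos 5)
Your proof is correct and follows essentially the same route as the paper: your remainder $e_n = \gamma^{n+1}(P_\pi^\top)^{n+1}D_\mu i$ is algebraically identical to the paper's term $(I-\gamma P_\pi^\top)(f-f_n)$, and the Jensen step, the $\kappa$-margin from $\indot{D_\mu i}{v_{sq}}$, the control of the remainder via condition \eqref{eq condition of n contraction} together with Lemma \ref{lem emphasis bound}, and the closing Pythagorean argument all match. Your added remark that the contraction itself forces invertibility of $A_n$ (so $w_{*,n}$ is well defined under this lemma's hypotheses alone) is a small but genuine tightening over the paper, which leaves this implicit.
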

Here $\Pi_{f_n}$ denotes the projection onto the column space of $X$ w.r.t.\ the norm induced by $f_n$, i.e.,
\begin{align}
  \Pi_{f_n} v \doteq X \arg\min_{w} \norm{Xw - v}^2_{f_n}.
\end{align}
The proof of Lemma~\ref{lem contraction} is similar to \citet{hallak2016generalized} and is provided in Section \ref{sec proof of contraction}.
Again, the LHS of \eqref{eq condition of n contraction} diminishes geometrically.
So in practice, $n$ might not need to be too large.
Lemma~\ref{lem contraction} characterizes the performance of the fixed points of Truncated ETD methods in prediction settings.
In the following we highlight two points regarding those fixed points from different truncation length.
\begin{enumerate}
  \item We argue that those fixed points are equally good.
  The $\norm{\Pi_{{f, n}} v_\pi - v_\pi}_{f_n}$ term in~\eqref{eq etd performance bound} is the representation error resulting from the limit of the capacity of the linear function approximator. 
With different truncation length, 
we use different norm (i.e., $\norm{\cdot}_{f_n}$) to measure the representation error.
The multiplicative factor $\frac{1}{\sqrt{1 - \gamma}}$,
however,
does not depend on $n$.
In other words,
as long as $n$ is sufficiently large in the sense of~\eqref{eq condition of n contraction},
the exact value of $n$, 
including $n=\infty$ (i.e., no truncation),
does not seem to affect the performance of the fixed point much.
The intuition is straightforward.
Comparing~\eqref{eq truncated trace} and~\eqref{eq etd trace expanded},
it is easy to see that by using the truncation,
we discard the term $\sum_{j=n+1}^{t} \gamma^j \rho_{t-j:t-1} i_{t-j}$ corresponding to earlier transitions from steps $0$ to $t-n-1$.
This term has a large, possibly infinite, variance because of the product of importance sampling ratios.
The expectation of this term is,
however,
negligible because the expectation of the importance sampling ratios are well bounded (see the proof of Lemma~\ref{lem emphasis bound}) and the multiplicative factor $\gamma^j$ is negligible.
It is the expectation, not the variance,
of the trace that determines the performance of the corresponding fixed point.
Consequently,
the truncation proposed in this work does not seem to yield a compromise in the performance of the fixed point.
The truncation proposed in this work is more like a free variance reduction instead of a bias-variance tradeoff.
\item We argue that those fixed points are better than the fixed points of gradient TD methods
minimizing $d_\mu$-induced mean squared projected Bellman errors
(e.g., GTD in \citealt{sutton2009convergent}, 
GTD2 and TDC in \citealt{sutton2009fast}, 
Gradient Tree Backup and Gradient Retrace in \citealt{touati2018convergent}).
This is because the performance of the fixed points of Truncated ETD methods can be well-bounded by the representation error, 
provided that the length of the truncation is sufficiently large.
By contrast,
the performance of the fixed points of gradient TD methods
can be arbitrarily worse,
no matter how small the representation error is \citep{kolter2011fixed}.
\end{enumerate}

%
\section{Control: Emphatic Approximate Value Iteration}
\label{sec control eavi}

The study of the canonical approximate value iteration \eqref{eq on-policy avi} is essential to the study of the on-policy control algorithm SARSA \citep{melo2008analysis,zou2019finite}.
Similarly, in this section, we study approximate value iteration from an off-policy perspective,
which prepares us for the off-policy control algorithm in the next section.
In the rest of this paper,
we write $f, m, f_n, m_n$, $\kappa$ (defined in Lemma~\ref{lem contraction}), and $\lambda_{min}$ (defined in Lemma~\ref{lem nd}) as $f_{\mu, \pi}, m_{\mu, \pi}, f_{n, \mu, \pi}, m_{n, \mu, \pi}, \kappa_{\mu, \pi}, \lambda_{min, \mu, \pi}$ to explicitly acknowledge their dependence on $\mu$ and $\pi$.

The canonical approximate value iteration operator in \eqref{eq on-policy avi} is in a sense on-policy in that the projection operator is defined w.r.t.\ a norm induced by the policy of the current iteration.
In this section, we study approximation value iteration from an off-policy perspective,
i.e.,
with a projection operator defined w.r.t.\ a different norm. 
Let $\pi_{w}$ and $\mu_{w}$ be target and behavior policies respectively.
They depend on $w$, 
the parameters used for estimating the value function,
through the value function estimate $v = Xw \in \R^\ns$,
e.g.,
they can be softmax policies (cf. \eqref{eq softmax policy v}) with different temperatures.
We consider the iterates $\qty{v_k \doteq X w_k}$ generated by
\begin{align}
v_{k+1} \doteq \pbop{\mu_{w_k}}{\pi_{w_k}} v_k.
\end{align}
We call this scheme \emph{emphatic approximate value iteration} as the projection operator is defined w.r.t.\ the norm induced by the (truncated) followon trace.
In the rest of this section, we show that emphatic approximate value iteration adopts at least one fixed point.

With $\Lambda_M$ denoting the closure of $\qty{\mu_w \mid w \in \R^K}$ and $\Lambda_\Pi$ denoting the closure of \\$\qty{\pi_w \mid w \in \R^K}$,
we make the following assumptions.
\begin{assumption}
\label{assu continuous}
Both $\pi_w$ and $\mu_w$ are continuous in $w$.
\end{assumption}
\begin{assumption}
\label{assu closure ergodic}
For any $\mu \in \Lambda_M$, the Markov chain induced by $\mu$ is ergodic and $\mu(a | s) > 0$ holds for all $(s, a)$.
\end{assumption} 
Assumption \ref{assu continuous} is standard in analyzing approximate value iteration \citep{de2000existence}.
If $\pi_w$ is not continuous in $w$,
even the canonical approximate value iteration can fail to have a fixed point \citep{de2000existence}. 
The ergodicity assumption of all the policies in the closure in Assumption \ref{assu closure ergodic} is also standard for analyzing control algorithms,
in both on-policy \citep{marbach2001simulation} and off-policy \citep{zhang2021breaking,DBLP:journals/corr/abs-2111-02997} settings.
One common strategy to ensure this ergodicity in closure is to mix a softmax policy with a uniformly random policy,
assuming the uniformly random policy always induces an ergodic chain.

We now define two helper functions to understand how $n$ should be selected in emphatic approximate value iteration.
\begin{align}
n_1(\mu, \pi) &\doteq \frac{\ln (\lambda_{min, \mu, \pi} d_{\mu, min}) - \ln (d_{\mu, max}^2 \norm{\gamma P_\pi - I} \norm{m_{\mu, \pi}}_1)}{\ln \gamma} - 1, \\
n_2(\mu, \pi) &\doteq \frac{\ln \left(\kappa_{\mu, \pi} d_{\mu, min} \min_s i(s) d_\mu(s) \right) - \ln \left({d_{\mu, max}^2 \norm{I - \gamma P_{\pi}^\top}_\infty \norm{m_{\mu, \pi}}_1}\right)}{\ln \gamma} - 1.
\end{align}
Here $n_1$ and $n_2$ correspond to the conditions of $n$ in Lemmas \ref{lem nd} and \ref{lem contraction} respectively.
Assumption~\ref{assu closure ergodic} ensures that $n_1$ and $n_2$ are well defined on $\Lambda_M \times \Lambda_\Pi$.
The invariant distribution $d_\mu$ is continuous in $\mu$ (see, e.g., Lemma 9 of \citealt{zhang2021breaking}),
the minimum eigenvalue $\lambda_{min, \mu, \pi}$ is continuous in the elements of the matrix 
(see, e.g., Corollary 8.6.2 of \citealt{DBLP:books/daglib/0086372}) and thus is also continuous in $\mu$ and $\pi$,
and both $\Lambda_M$ and $\Lambda_\Pi$ are compact.
Therefore, $\sup_{\mu \in \Lambda_M, \pi \in \Lambda_\Pi} \max \qty{n_1(\mu, \pi), n_2(\mu, \pi)} < \infty$ by the extreme value theorem.
This allows us to select $n$ as suggested by the following lemma.
\begin{lemma}
\label{thm fixed point}
Let Assumptions \ref{assu full rank}, \ref{assu continuous}, and \ref{assu closure ergodic} hold.
If 
\begin{align}
\label{eq condition of n avi}
n > \sup_{\mu \in \Lambda_M, \pi \in \Lambda_\Pi} \max \qty{n_1(\mu, \pi), n_2(\mu, \pi)},
\end{align}
then there exists at least one $w_*$ such that
\begin{align}
Xw_* = \pbop{\mu_{w_*}}{\pi_{w_*}} Xw_*.
\end{align}
\end{lemma}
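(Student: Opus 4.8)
The plan is to recast the desired fixed point as a fixed point of an auxiliary \emph{solve-the-frozen-contraction} map whose range is a priori bounded, and then invoke Brouwer's fixed point theorem. The naive route of applying Brouwer directly to the one-step map $v \mapsto \pbop{\mu_{w}}{\pi_{w}} v$ fails, and this is exactly the main obstacle: this map does not send any fixed $\ell_2$-ball into itself, because the $\sqrt{\gamma}$-contraction of Lemma \ref{lem contraction} holds only in the \emph{iterate-dependent} norm $\norm{\cdot}_{f_{n,\mu_w,\pi_w}}$, and converting to a fixed norm reintroduces a factor $\sqrt{f_{max}/f_{min}}\,\sqrt{\gamma}$ that may exceed $1$. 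To sidestep this, I first note that condition \eqref{eq condition of n avi} makes $n$ large enough that, for \emph{every} $(\mu,\pi) \in \Lambda_M \times \Lambda_\Pi$, both Lemma \ref{lem nd} and Lemma \ref{lem contraction} apply; in particular $\Pi_{f_{n,\mu,\pi}}\bop_\pi$ is a $\sqrt{\gamma}$-contraction in $\norm{\cdot}_{f_{n,\mu,\pi}}$ on the column space of $X$ and hence, by Banach's theorem, has a unique fixed point $v_{\mu,\pi}$ there. Since $X$ has full column rank (Assumption \ref{assu full rank}), I define $T(w) \doteq (X^\top X)^{-1}X^\top v_{\mu_w, \pi_w}$, the unique parameter with $X T(w) = v_{\mu_w, \pi_w}$. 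A fixed point $w_* = T(w_*)$ then satisfies $X w_* = v_{\mu_{w_*},\pi_{w_*}} = \pbop{\mu_{w_*}}{\pi_{w_*}} X w_*$, which is exactly the claim, so it suffices to produce a fixed point of $T$.

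Next I would show the range of $T$ lies in a fixed compact convex ball. Because $f_{n,\mu,\pi} = D_\mu m_{n,\mu,\pi}$ with $m_{n,\mu,\pi} = \sum_{j=0}^n \gamma^j D_\mu^{-1}(P_\pi^\top)^j D_\mu i$ strictly positive (as $i > 0$ and $d_\mu > 0$ under Assumption \ref{assu closure ergodic}) and continuous in $(\mu,\pi)$, the compactness of $\Lambda_M \times \Lambda_\Pi$ and the extreme value theorem yield uniform constants $0 < f_{min} \le f_{n,\mu,\pi}(s) \le f_{max} < \infty$, hence a uniform equivalence between $\norm{\cdot}_{f_{n,\mu,\pi}}$ and $\norm{\cdot}$. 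Noting that $v_{\mu,\pi}$ is precisely $Xw_{*,n}$ (the TD fixed point solving $\Pi_{f_{n,\mu,\pi}}\bop_\pi Xw = Xw$), I apply the performance bound \eqref{eq etd performance bound}, use $\norm{\Pi_{f_{n,\mu,\pi}} v_\pi - v_\pi}_{f_{n,\mu,\pi}} \le \norm{v_\pi}_{f_{n,\mu,\pi}}$ and the uniform bound $\norm{v_\pi}_\infty \le (1-\gamma)^{-1}\max_{s,a}\abs{r(s,a)}$ to obtain $\norm{v_{\mu,\pi}}_{f_{n,\mu,\pi}} \le (1 + (1-\gamma)^{-1/2})\norm{v_\pi}_{f_{n,\mu,\pi}}$; the norm equivalence then collapses this to a single bound $\norm{v_{\mu,\pi}} \le B$ valid for all $(\mu,\pi)$. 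Consequently $T$ maps $\R^K$ into the compact convex ball $C \doteq \qty{w : \norm{w} \le B/\sigma_{min}(X)}$, where $\sigma_{min}(X) > 0$ is the smallest singular value of $X$, so in particular $T(C) \subseteq C$.

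It remains to establish continuity of $T$. The map $w \mapsto (\mu_w,\pi_w)$ is continuous by Assumption \ref{assu continuous}, and $(\mu,\pi) \mapsto \Pi_{f_{n,\mu,\pi}}\bop_\pi$ is continuous since $r_\pi$, $P_\pi$, and $f_{n,\mu,\pi}$ (hence the projection) all depend continuously on $(\mu,\pi)$. The delicate point — the second potential obstacle — is the continuity of $v_{\mu,\pi}$ in $(\mu,\pi)$ despite the contraction norm itself varying; I would resolve this with a boundedness-plus-uniqueness subsequence argument. Given $(\mu_k,\pi_k) \to (\mu,\pi)$, the fixed points $v_{\mu_k,\pi_k}$ are uniformly bounded by the previous paragraph, so any subsequence admits a convergent sub-subsequence $v_{\mu_k,\pi_k} \to v_\infty$; passing to the limit in $v_{\mu_k,\pi_k} = \Pi_{f_{n,\mu_k,\pi_k}}\bop_{\pi_k} v_{\mu_k,\pi_k}$ via joint continuity of the operator gives $v_\infty = \Pi_{f_{n,\mu,\pi}}\bop_\pi v_\infty$, whence $v_\infty = v_{\mu,\pi}$ by uniqueness of the contraction's fixed point. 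Since every convergent subsequence of a bounded sequence shares this limit, $v_{\mu_k,\pi_k} \to v_{\mu,\pi}$, and composing the continuous pieces shows $T$ is continuous. Brouwer's fixed point theorem applied to the continuous self-map $T : C \to C$ of a compact convex set then produces a $w_*$ with $T(w_*) = w_*$, which by the reduction in the first paragraph gives $Xw_* = \pbop{\mu_{w_*}}{\pi_{w_*}} Xw_*$, completing the proof.
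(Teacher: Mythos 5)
Your proof is correct, but it reaches Brouwer by a genuinely different map than the paper does. The paper follows \citet{de2000existence}: it defines $g_{\mu,\pi}(w) \doteq X^\top D_{f_{n,\mu,\pi}}(\bop_\pi Xw - Xw)$ and the damped iteration $z^\eta(w) \doteq w + \eta g(w)$, shows that fixed points of $z^\eta$ coincide with solutions of the projected Bellman equation, and then proves a uniform pseudo-contraction estimate $\norm{z^\eta_{\mu,\pi}(w) - w_{n,\mu,\pi}} \leq \beta_\eta \norm{w - w_{n,\mu,\pi}}$ with $\beta_\eta < 1$ for all sufficiently small $\eta$ (the constants $C_1, C_2$ being uniform over the compact sets $\Lambda_M \times \Lambda_\Pi$ by the extreme value theorem); this self-maps a large ball and Brouwer applies. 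You instead apply Brouwer to the ``best-response'' map $T(w)$ that fully solves the frozen contraction, bounding its range via the performance bound \eqref{eq etd performance bound} plus the uniform norm equivalence between $\norm{\cdot}_{f_{n,\mu,\pi}}$ and $\norm{\cdot}$, and establishing continuity by a boundedness-plus-uniqueness subsequence argument. Both routes are sound; yours is arguably more direct for pure existence and correctly identifies why the naive one-step map fails (the contraction norm is iterate-dependent), while the paper's $\eta$-map carries the extra payoff that the pseudo-contraction estimate quantifies the behavior of a damped approximate-value-iteration update, which is algorithmically meaningful beyond mere existence. One small simplification available to you: since under condition \eqref{eq condition of n avi} Lemma \ref{lem nd} makes $X^\top D_{f_{n,\mu,\pi}}(I - \gamma P_\pi)X$ invertible, the frozen fixed point admits the closed form $w_{n,\mu,\pi} = (X^\top D_{f_{n,\mu,\pi}}(I - \gamma P_\pi)X)^{-1}X^\top D_{f_{n,\mu,\pi}}r_\pi$ (the paper records this), from which both the boundedness and the continuity of your $T$ follow immediately without the subsequence argument.
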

The proof of Theorem \ref{thm fixed point}
is provided in \ref{sec proof of fixed point},
which follows the same steps of \citet{de2000existence} but generalizes their results from (on-policy) approximate value iteration to emphatic approximate value iteration.

\section{Control: Truncated Emphatic Expected SARSA}
\label{sec control sarsa}
We now present our control algorithm, Truncated Emphatic Expected SARSA.
Unlike planning methods such as approximate value iteration,
learning methods for control like SARSA usually work directly on action-value estimates.
To this end,
we overload notation for the ease of presentation.
In particular, we overload the feature function $x$ as $x: \fS \times \fA \to \R^K$ to denote a state-action feature function.
Correspondingly, 
the feature matrix $X$ is now a matrix in $\R^{\nsa \times K}$ whose $(s,a)$-th row is $x(s, a)^\top$.
The transition matrix $P_\pi$ is now a matrix in $\R^{\nsa \times \nsa}$ to denote the state-action pair transition, i.e.,
\begin{align}
P_\pi((s, a), (s', a')) \doteq p(s'|s, a)\pi(a' | s').
\end{align}
Consequently, the Bellman operator is overloaded as 
\begin{align}
\bop_{\pi} q \doteq r + \gamma P_\pi q.
\end{align}
The stationary distribution $d_\mu$ is now in $\R^\nsa$ and denotes the invariant state-action pair distribution under the policy $\mu$.
$D_\mu$ is then a diagonal matrix in $\R^{\nsa \times \nsa}$.
The interest function $i$ is now from $\fS \times \fA$ to $(0, +\infty)$ to denote user's preference for each state-action pair.
The followon trace $F_t$ is now defined as
\begin{align}
F_t \doteq i_t + \gamma \rho_t F_{t-1},
\end{align}
which is the same as the followon trace used in ELSTDQ($\lambda$) in \citet{white2017unifying}.
Correspondingly,
the truncated trace is defined as
\begin{align}
\label{eq sa trace}
F_{t, n} \doteq \begin{cases}
\sum_{j=0}^n \gamma^j \rho_{t-j+1:t} i_{t-j} & t \geq n \\
F_t & t < n
\end{cases}.
\end{align} 
The truncated emphasis $m_{n, \mu, \pi}$ is now in $\R^\nsa$ and defined as
\begin{align}
m_{n, \mu, \pi}(s, a) \doteq \lim_{t \to \infty} \E[F_{t, n} | S_t = s, A_t = a].
\end{align} 
Other notation is also overloaded accordingly, e.g., 
\begin{align}
m_{\mu, \pi} \doteq \lim_{n\to\infty} m_{n, \mu, \pi}, \, f_{n, \mu, \pi} \doteq D_{\mu} m_{n, \mu, \pi}, \, f_{\mu, \pi} \doteq D_\mu m_{\mu, \pi}.
\end{align}
Previous theoretical results also hold with the overloaded notation for state-action pairs.
In particular,
we have
\begin{lemma}
\label{thm fixed point sa}
Let Assumptions \ref{assu full rank}, \ref{assu continuous}, and \ref{assu closure ergodic} hold.
Define
\begin{align}
n_1(\mu, \pi) &\doteq \frac{\ln (\lambda_{min, \mu, \pi} d_{\mu, min}) - \ln (d_{\mu, max}^2 \norm{\gamma P_\pi - I} \norm{m_{\mu, \pi}}_1)}{\ln \gamma} - 1, \\
n_2(\mu, \pi) &\doteq \frac{\ln \left(\kappa_{\mu, \pi} d_{\mu, min} \min_{s,a} i(s, a) d_\mu(s, a) \right) - \ln \left({d_{\mu, max}^2 \norm{I - \gamma P_{\pi}^\top}_\infty \norm{m_{\mu, \pi}}_1}\right)}{\ln \gamma} -1,
\end{align}
where $\lambda_{min, \mu, \pi}$ is the minimum eigenvalue of
\begin{align}
\frac{1}{2}\left(D_{f_{\mu, \pi}}(I - \gamma P_\pi) + (I - \gamma P_\pi^\top) D_{f_{\mu, \pi}} \right),
\end{align}
$d_{\mu, min} \doteq \min_{s, a} d_\mu(s, a), d_{\mu, max} \doteq \max_{s, a}d_{\mu, max}(s, a), \kappa_{\mu, \pi} \doteq \min_{s, a} \frac{d_\mu(s, a)i(s, a)}{f(s, a)}$.
If 
\begin{align}
\label{eq condition of n avi sa}
n > \sup_{\mu \in \Lambda_M, \pi \in \Lambda_\Pi} \max \qty{n_1(\mu, \pi), n_2(\mu, \pi)}
\end{align}
holds, then
\begin{enumerate}[(i).]
  \item For any $\mu \in \Lambda_M, \pi \in \Lambda_\Pi$, $X^\top D_{f_{n, \mu, \pi}} (\gamma P_\pi - I)X$ is n.d.,
  \item For any $\mu \in \Lambda_M, \pi \in \Lambda_\Pi$, $\Pi_{f_{n, \mu, \pi}}\bop_\pi$ is a $\sqrt{\gamma}$ contraction in $\norm{\cdot}_{f_{n, \mu, \pi}}$,
  \item There exists at least one $w_*$ such that
\begin{align}
\label{eq projected bellman eq sa}
Xw_* = \pbop{\mu_{w_*}}{\pi_{w_*}} Xw_*.
\end{align}
We use $\fW_*$ to denote the set of all such $w_*$.
\end{enumerate}
\end{lemma}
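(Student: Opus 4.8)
The plan is to reduce parts (i) and (ii) to the prediction-setting lemmas already in hand, and to obtain (iii) by a Brouwer fixed-point argument. First I would record a purely algebraic equivalence: since $\gamma \in [0,1)$ and hence $\ln\gamma < 0$, taking logarithms in condition~\eqref{eq condition of n nd} and dividing by $\ln\gamma$ (which reverses the inequality) shows that $n > n_1(\mu,\pi)$ is exactly equivalent to~\eqref{eq condition of n nd}; the identical manipulation shows $n > n_2(\mu,\pi)$ is equivalent to~\eqref{eq condition of n contraction}. The hypothesis~\eqref{eq condition of n avi sa} therefore guarantees that both~\eqref{eq condition of n nd} and~\eqref{eq condition of n contraction} hold simultaneously for every $\mu \in \Lambda_M$ and every $\pi \in \Lambda_\Pi$. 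Invoking Lemma~\ref{lem nd} and Lemma~\ref{lem contraction} in the overloaded state--action setting then yields (i) and (ii) directly, one pair $(\mu,\pi)$ at a time.

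For (iii) I would construct an auxiliary map on $\R^K$ whose fixed points are exactly the solutions of~\eqref{eq projected bellman eq sa}. For each $w$, freeze the policies to $\mu_w$ and $\pi_w$; part (i) makes $A_n(w) \doteq X^\top D_{f_{n,\mu_w,\pi_w}}(\gamma P_{\pi_w} - I)X$ negative definite, hence invertible, so I may define
\begin{align}
\Phi(w) \doteq -A_n(w)^{-1} b_n(w), \quad b_n(w) \doteq X^\top D_{f_{n,\mu_w,\pi_w}} r.
\end{align}
By the weighted normal equations, $X\Phi(w)$ is the unique solution of the frozen-policy projected Bellman equation $Xw' = \Pi_{f_{n,\mu_w,\pi_w}}\bop_{\pi_w} Xw'$, uniqueness following from the $\sqrt{\gamma}$-contraction in part (ii). Consequently a fixed point $w_* = \Phi(w_*)$ satisfies $Xw_* = \Pi_{f_{n,\mu_{w_*},\pi_{w_*}}}\bop_{\pi_{w_*}} Xw_* = \pbop{\mu_{w_*}}{\pi_{w_*}} Xw_*$, which is precisely~\eqref{eq projected bellman eq sa}, so it suffices to produce a fixed point of $\Phi$.

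It then remains to verify the two hypotheses of Brouwer's theorem for $\Phi$. Continuity: $w \mapsto (\mu_w,\pi_w)$ is continuous by Assumption~\ref{assu continuous}; $\mu \mapsto d_\mu$ is continuous (Lemma~9 of \citealt{zhang2021breaking}); the truncated emphasis $m_{n,\mu,\pi}$ is a finite polynomial in the entries of $P_\pi$ and $D_\mu$ by Lemma~\ref{lem emphasis expression}, so $f_{n,\mu,\pi} = D_\mu m_{n,\mu,\pi}$ is continuous; and since $A_n(w)$ is invertible everywhere, $w \mapsto A_n(w)^{-1}$ is continuous, making $\Phi$ continuous. Boundedness of the range: the definiteness constant of $A_n(w)$ extracted in the proof of Lemma~\ref{lem nd} is continuous in $(\mu,\pi)$ and strictly positive, so by compactness of $\Lambda_M \times \Lambda_\Pi$ and the extreme value theorem it is bounded below by some $\lambda_0 > 0$ uniformly; negative definiteness then gives $\norm{A_n(w)^{-1}} \le 1/\lambda_0$ for all $w$. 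Likewise $\norm{b_n(w)}$ is uniformly bounded because $f_{n,\mu,\pi}$ and $r$ are uniformly bounded over the compact closures. Hence $\norm{\Phi(w)} \le \norm{b_n(w)}/\lambda_0 \le B$ for a finite constant $B$, so $\Phi$ maps the closed Euclidean ball of radius $B$ into itself. Brouwer's fixed-point theorem then furnishes the desired $w_*$, showing $\fW_*$ is nonempty.

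The main obstacle is this boundedness step. Because the projection norm $\norm{\cdot}_{f_{n,\mu_w,\pi_w}}$ itself varies with $w$, one cannot simply iterate a single contraction to trap the iterates in a ball; the norm-equivalence constants relating $\norm{\cdot}_{f_n}$ to $\norm{\cdot}$ could in principle scale with $\gamma$ in an unfavorable way. Routing the argument through $\Phi(w) = -A_n(w)^{-1}b_n(w)$ sidesteps this by trading the varying-norm contraction for a single uniform operator-norm bound $\norm{A_n(w)^{-1}} \le 1/\lambda_0$, and the crux is precisely that the negative-definiteness constant does not degrade to zero as $(\mu,\pi)$ ranges over the closures. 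This is where Assumption~\ref{assu closure ergodic} — keeping $d_\mu$ bounded away from $0$ so that $f_{n,\mu,\pi}$ stays well defined and uniformly positive — together with the compactness of $\Lambda_M \times \Lambda_\Pi$ does the essential work. The overall scheme mirrors \citet{de2000existence}, but replaces their on-policy projection with the truncated-emphatic projection $\Pi_{f_n}$.
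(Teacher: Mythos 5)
Your proof is correct, and for parts (i) and (ii) it coincides with the paper's: the paper omits the proof of this lemma as a verbatim repetition of Lemmas~\ref{lem nd}, \ref{lem contraction}, and \ref{thm fixed point} in the state--action setting, and your logarithmic rewriting of $n > n_1$ and $n > n_2$ into \eqref{eq condition of n nd} and \eqref{eq condition of n contraction} is exactly the intended reduction. For part (iii), however, you take a genuinely different route to Brouwer. The paper (following \citealt{de2000existence}) works with the gradient-step map $z^\eta(w) = w + \eta\, X^\top D_{f_{n,\mu_w,\pi_w}}(\bop_{\pi_w}Xw - Xw)$, proves via a sub-lemma that for sufficiently small $\eta$ this map contracts toward the frozen-policy solution $w_{n,\mu,\pi}$ with a factor $\beta_\eta < 1$ uniform over $\Lambda_M \times \Lambda_\Pi$, and deduces that $z^\eta$ maps the ball of radius $\tfrac{1+\beta_\eta}{1-\beta_\eta}\sup_{\mu,\pi}\norm{w_{n,\mu,\pi}}$ into itself. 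You instead apply Brouwer directly to the best-response map $\Phi(w) = -A_n(w)^{-1}b_n(w)$, replacing the uniform contraction estimate by a uniform resolvent bound $\norm{A_n(w)^{-1}} \le 1/\lambda_0$ obtained from the definiteness constant of part (i) via compactness and the extreme value theorem (and the inequality $\norm{A_n(w)x}\,\norm{x} \ge |x^\top A_n(w)x| \ge \lambda_0\norm{x}^2$, which is valid even though $A_n(w)$ is not symmetric). This is sound and arguably leaner: the paper itself already establishes, at the start of its proof of Lemma~\ref{thm fixed point}, that $w_{n,\mu,\pi}$ (which is precisely your $\Phi(w)$ at $\mu = \mu_w$, $\pi = \pi_w$) is continuous in $(\mu,\pi)$ with a compact, hence bounded, range, so your argument shows the $\eta$-machinery is not needed for bare existence. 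What the paper's $z^\eta$ construction buys in exchange is the quantitative uniform contraction factor $\beta_\eta$, which ties the fixed point to the incremental update direction $g(w)$ actually used by the learning algorithm; your resolvent map does not provide that dynamical information, but none of it is required for the statement of this lemma.
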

The proof of Lemma \ref{thm fixed point sa} is omitted 
since it is a verbatim repetition of the proofs of Lemmas \ref{lem nd}, \ref{lem contraction}, and \ref{thm fixed point}.

The iterative update scheme \eqref{eq projected bellman eq sa} is emphatic approximate value iteration applied to action-value estimation.
To implement this scheme incrementally in a learning sense,
we propose  
Truncated Emphatic Off-Policy Expected SARSA (Algorithm \ref{alg esarsa}).
When computing $F_{t,n}$,
we require that the previous importance sampling ratios be recomputed with the current weight $w_t$. 
This requirement is mainly for the ease of asymptotic analysis and is eliminated in Projected Truncated Emphatic Expected SARSA,
for which we provide a finite sample analysis.
\begin{algorithm}
$S_0 \sim p_0(\cdot)$ \;
$A_0 \sim \mu_{w_0}(\cdot | S_0)$ \;
$t \gets 0$ \;
\While{True}{
  Execute $A_t$, get $R_{t+1}, S_{t+1}$ \;
  $A_{t+1} \sim \mu_{w_t}(\cdot | S_{t+1})$ \;
  $\rho_t \gets \frac{\pi_{w_t}(A_t | S_t)}{\mu_{w_t}(A_t | S_t)}$ \;
  $F_{t, n} \gets 0$ \;
  \For{$k = 0, \dots, n$}{
    $F_{t, n} \gets i_{t-n+k} + \gamma \frac{\pi_{w_t}(A_{t-n+k} | S_{t-n+k})}{\mu_{w_t}(A_{t-n+k}|S_{t-n+k})} F_{t, n}$
  }
  $w_{t+1} \gets w_t + \alpha_t F_{t, n} (R_{t+1} + \gamma \sum_a {\pi_{w_t}(a | S_{t+1})} x(S_{t+1}, a)^\top w_t - x_t^\top w_t ) x_t $ \;
  $t \gets t+ 1$ \;
}
\caption{\label{alg esarsa}Truncated Emphatic Expected SARSA}
\end{algorithm}

We can now present our asymptotic convergence analysis of Algorithm \ref{alg esarsa}.
We first study the properties of the possible fixed points.
We can rewrite \eqref{eq projected bellman eq sa} as
\begin{align}
A_{w_*} w_* + b_{w_*} = 0,
\end{align}
where 
\begin{align}
A_w &\doteq X^\top D_{f_{n, \mu_{w}, \pi_w}}(\gamma P_{\pi_w} - I)X, \\
b_w &\doteq X^\top D_{f_{n, \mu_{w}, \pi_w}} r.
\end{align}
Consequently,
\begin{align}
w_* = A_{w_*}^{-1}b_{w_*}.
\end{align}
Since $\Lambda_M$ and $\Lambda_\Pi$ are compact,
both $\pi_w$ and $\mu_w$ are continuous in $w$,
the RHS of the above equation is bounded from above by the extreme value theorem.
Consequently, 
there exists a constant $R < \infty$ such that
\begin{align}
\sup_{w_* \in \fW_*} \norm{w_*} \leq R.
\end{align} 
We then make several regularization conditions on the policies $\pi_w$ and $\mu_w$.
For the analysis of on-policy SARSA \eqref{eq on-policy sarsa},
it is commonly assumed that the policy $\pi_w$ is Lipschitz continuous in $w$
and the Lipschitz constant is not too large \citep{perkins2003convergent,zou2019finite}.
This technical assumption is mainly used to ensure that a small change in the value estimate does not result in a big difference in the policy thus enforces certain smoothness of the overall learning process.
Without such assumptions,
even on-policy SARSA can chatter and fail to converge \citep{gordon1996chattering,gordon2001reinforcement}. 
In this paper, we adopt similar assumptions in our off-policy setting.
\begin{assumption}
\label{assu lipschitz continuous}
Both $\mu_w$ and $\pi_w$ are Lipschitz continuous in $w$, i.e.,
there exist constants $L_\mu$ and $L_\pi$ such that for any $s \in \fS, a \in \fA$, 
\begin{align}
|\pi_w(a|s) - \pi_{w'}(a|s)| &\leq L_\mu \norm{w - w'}, \\
|\mu_w(a|s) - \mu_{w'}(a|s)| &\leq L_\mu \norm{w - w'}.
\end{align}
\end{assumption}
The Lipschitz continuity of the policies immediately implies the Lipschitz continuity of $A_w$ and $b_w$ governing the expected updates of Algorithm \ref{alg esarsa}.
\begin{lemma}
  \label{lem lipschitz}
Let Assumptions \ref{assu closure ergodic} and \ref{assu lipschitz continuous} hold.
There exist positive constants $C_1, C_2, C_3,$ and $C_4$ such that for any $w, w'$
\begin{align}
\norm{A_{w} - A_{w'}} &\leq (C_1 L_\mu + C_2 L_\pi) \norm{w - w'}, \\
\norm{b_{w} - b_{w'}} &\leq (C_3 L_\mu + C_4 L_\pi) \norm{w - w'}.
\end{align}
\end{lemma}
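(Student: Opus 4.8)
The plan is to reduce everything to the Lipschitz continuity of two building blocks, $w \mapsto P_{\pi_w}$ and $w \mapsto D_{\mu_w}$, then propagate those bounds through the closed form for $f_{n, \mu_w, \pi_w}$ and finally through the definitions of $A_w$ and $b_w$, keeping the $L_\mu$ and $L_\pi$ contributions in separate constants at each stage. First I would establish that $w \mapsto P_{\pi_w}$ is Lipschitz with a constant proportional to $L_\pi$. Using the entrywise formula $P_{\pi_w}((s,a),(s',a')) = p(s'|s,a)\pi_w(a'|s')$ together with Assumption \ref{assu lipschitz continuous}, each entry differs by at most $p(s'|s,a) L_\pi \norm{w - w'}$; summing over the finitely many state-action pairs and converting to the operator norm gives $\norm{P_{\pi_w} - P_{\pi_{w'}}} \leq c_P L_\pi \norm{w - w'}$ for a problem-dependent constant $c_P$. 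The identical argument applied to the behavior chain yields $\norm{P_{\mu_w} - P_{\mu_{w'}}} \leq c_P' L_\mu \norm{w - w'}$.

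Second, I would show that $w \mapsto D_{\mu_w}$ is Lipschitz with a constant proportional to $L_\mu$. The invariant distribution $d_\mu$ is a rational, hence smooth, function of the entries of $P_\mu$ on the set of ergodic chains; this is precisely the continuity fact already invoked via Lemma 9 of \citet{zhang2021breaking}, strengthened to a uniform Lipschitz bound using the compactness of $\Lambda_M$ (Assumption \ref{assu closure ergodic}). Composing the Lipschitz map $w \mapsto P_{\mu_w}$ from the first step with this Lipschitz dependence gives $\norm{d_{\mu_w} - d_{\mu_{w'}}}_\infty \leq c_d L_\mu \norm{w - w'}$, and since the operator norm of a diagonal matrix equals the infinity norm of its diagonal, the same bound holds for $\norm{D_{\mu_w} - D_{\mu_{w'}}}$.

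Third, I would combine these two estimates to control $f_{n, \mu_w, \pi_w}$ through the closed form $f_{n, \mu_w, \pi_w} = \sum_{j=0}^n \gamma^j (P_{\pi_w}^\top)^j D_{\mu_w} i$ implied by Lemma \ref{lem emphasis expression}. Each summand is a finite product of the uniformly bounded, Lipschitz factors $P_{\pi_w}^\top$ (contributing $L_\pi$) and $D_{\mu_w} i$ (contributing $L_\mu$); writing the difference of two such products as a telescoping sum and using that every factor has operator norm at most one on the compact domain yields $\norm{f_{n, \mu_w, \pi_w} - f_{n, \mu_{w'}, \pi_{w'}}}_\infty \leq (a_1 L_\mu + a_2 L_\pi)\norm{w - w'}$. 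For $A_w$ I would then split
\begin{align}
A_w - A_{w'} = X^\top \left(D_{f_{n, \mu_w, \pi_w}} - D_{f_{n, \mu_{w'}, \pi_{w'}}}\right)\left(\gamma P_{\pi_w} - I\right)X + \gamma X^\top D_{f_{n, \mu_{w'}, \pi_{w'}}}\left(P_{\pi_w} - P_{\pi_{w'}}\right)X,
\end{align}
bounding the first term by the $f_n$-estimate (which carries both $L_\mu$ and $L_\pi$) and the second by the $P_{\pi_w}$-estimate (which carries only $L_\pi$), using that $\norm{D_{f_{n, \mu_{w'}, \pi_{w'}}}}$, $\norm{\gamma P_{\pi_w} - I}$, and $\norm{X}$ are uniformly bounded by compactness; collecting constants gives $\norm{A_w - A_{w'}} \leq (C_1 L_\mu + C_2 L_\pi)\norm{w - w'}$. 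The estimate for $b_w = X^\top D_{f_{n, \mu_w, \pi_w}} r$ is immediate from the $f_n$-estimate alone, since $b_w$ depends on $w$ only through $D_{f_{n, \mu_w, \pi_w}}$.

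I expect the main obstacle to be the second step: upgrading the mere continuity of the invariant distribution to a \emph{uniform} Lipschitz bound. The clean route is to note that on the compact set $\Lambda_M$ of ergodic chains the balance equations $d_\mu^\top(I - P_\mu) = 0$ with $\mathbf{1}^\top d_\mu = 1$ determine $d_\mu$ smoothly in $P_\mu$ through the fundamental matrix $(I - P_\mu + \mathbf{1} d_\mu^\top)^{-1}$, whose norm is uniformly bounded by compactness; everything thereafter is routine bookkeeping to keep the $L_\mu$ and $L_\pi$ contributions separated into distinct constants $C_1,\dots,C_4$.
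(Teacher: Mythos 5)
Your proposal is correct and follows essentially the same route as the paper's proof: establish Lipschitz continuity of $D_{\mu_w}$ via the Lipschitz dependence of the invariant distribution on the (ergodic) transition matrix over the compact set $\Lambda_M$ (Lemma 9 of \citet{zhang2021breaking}), then propagate through the closed form of $f_{n, \mu_w, \pi_w}$ using the fact that finite products of bounded Lipschitz factors are Lipschitz, and finally through $A_w$ and $b_w$. The paper states this in three sentences and leaves the bookkeeping implicit; your version simply makes explicit the telescoping decomposition and the separation of the $L_\mu$ and $L_\pi$ contributions into distinct constants.
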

The proof Lemma~\ref{lem lipschitz} is provided in Section~\ref{sec proof of lipschitz lemma}.
Under the conditions of Lemma~\ref{thm fixed point sa}, 
for any $w$, the matrix
\begin{align}
M(w) \doteq \frac{1}{2} \left(X^\top D_{f_{n, \mu_{w}, \pi_{w}}} (I - \gamma P_{\pi_{w}}) X + X^\top (I - \gamma P_{\pi_{w}}^\top) D_{f_{n, \mu_{w}, \pi_{w}}} X \right)
\end{align}
is p.d.
For a symmetric positive definite matrix $M$, 
let $\lambda(M)$ denote the smallest eigenvalue of $M$.
For any $w \in \R^K$, we have $\lambda(M(w)) > 0$.
By the continuity of eigenvalues in the elements of the matrix,
the compactness of $\Lambda_M$ and $\Lambda_\Pi$,
and the extreme value theorem,
we have
\begin{align}
\inf_{w \in \R^K} \lambda(M(w)) > 0.
\end{align} 
This allows us to make the following assumptions about the Lipschitz constants $L_\mu$ and $L_\pi$,
akin to \citet{perkins2003convergent,zou2019finite}.
\begin{assumption}
\label{assu smooth enough asymptotic}
$L_\mu$ and $L_\pi$ are small enough such that
\begin{align}
\lambda_{min}'  \doteq \inf_{w \in \fW} \lambda(M(w)) - \left((C_1 L_\mu + C_2 L_\pi)R + C_3 L_\mu + C_4 L_\pi\right) > 0.
\end{align}
\end{assumption}

With these regularizations on $\pi_w$ and $\mu_w$,
we can now present a high probability asymptotic convergence analysis for Algorithm \ref{alg esarsa}.
\begin{theorem}
\label{thm esarsa convergence}
Let the assumptions and conditions in Lemma~\ref{thm fixed point sa} hold.
Let Assumptions \ref{assu lr}, \ref{assu lipschitz continuous}, and \ref{assu smooth enough asymptotic} hold. 
Then for any compact set $\fW \subset \R^K$ and any $w \in \fW$,
there exists a constant $C_\fW$ such that for any $w_* \in \fW_*$,
the iterates $\qty{w_t}$ generated by Algorithm \ref{alg esarsa} satisfy
\begin{align}
\label{eq high prob bound}
\Pr(\lim_{t\to\infty} w_t = w_* \mid w_0 = w) \geq 1 - C_\fW \sum_{t=0}^\infty \alpha_t^2.
\end{align} 
This immediately implies that $\fW_*$ contains only one element (under the conditions of this theorem).
\end{theorem}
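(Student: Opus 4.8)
The plan is to cast Algorithm~\ref{alg esarsa} as a stochastic approximation recursion driven by controlled Markovian noise and to identify each $w_* \in \fW_*$ as a globally asymptotically stable equilibrium of the associated mean-field ODE $\dot w = g(w)$, where $g(w) \doteq A_w w + b_w$ and, by the fixed-point equation, $g(w_*) = 0$. The heart of the argument is a strict-Lyapunov drift estimate for the candidate $V(w) \doteq \tfrac12 \norm{w - w_*}^2$, namely
\begin{align}
\indot{w - w_*}{g(w)} \leq -\lambda_{min}' \norm{w - w_*}^2 .
\end{align}
To establish it I would write $g(w) - g(w_*) = A_w(w - w_*) + (A_w - A_{w_*})w_* + (b_w - b_{w_*})$. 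The first term gives $(w - w_*)^\top A_w (w - w_*) = -(w-w_*)^\top M(w) (w-w_*) \leq -\lambda(M(w)) \norm{w-w_*}^2$, since $-\tfrac12(A_w + A_w^\top)$ is exactly the p.d.\ matrix $M(w)$ by part~(i) of Lemma~\ref{thm fixed point sa}. The remaining two terms are bounded, via Lemma~\ref{lem lipschitz}, Cauchy--Schwarz, and $\norm{w_*}\leq R$, by $\big((C_1 L_\mu + C_2 L_\pi)R + C_3 L_\mu + C_4 L_\pi\big)\norm{w-w_*}^2$. Combining the two and invoking Assumption~\ref{assu smooth enough asymptotic} produces the displayed drift with $\lambda_{min}' > 0$, so that $w_*$ is globally asymptotically stable for the ODE.

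Next I would decompose the increment as
\begin{align}
w_{t+1} - w_t = \alpha_t\left(g(w_t) + \xi_t + \zeta_t\right),
\end{align}
where $\xi_t$ is a martingale-difference term and $\zeta_t$ is the Markovian bias arising because the window $(S_{t-n}, A_{t-n}, \dots, S_{t+1}, A_{t+1})$ is drawn from a time-inhomogeneous chain whose kernel depends on the current iterate $w_t$. The decisive role of truncation enters here: because $F_{t,n}$ depends only on the finite window $(S_{t-n}, A_{t-n}, \dots, S_t, A_t)$, it has uniformly bounded moments, and the relevant object is the finite chain on $(\fS\times\fA)^{n+1}$, which is ergodic for every frozen behavior policy by Assumption~\ref{assu closure ergodic}. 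This uniform ergodicity, together with the Lipschitz continuity of the invariant distribution in the policy parameter (hence in $w$ through Assumption~\ref{assu lipschitz continuous}), lets one control $\zeta_t$ at rate $\fO(\alpha_t)$ after a summation-by-parts argument, exactly as in the controlled-Markov-noise analyses of \citet{zhang2021breaking,DBLP:journals/corr/abs-2111-02997}.

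Finally I would invoke the general stochastic-approximation theorem for controlled Markov noise (the same machinery underlying the finite-sample Theorem~\ref{thm finite sample sa}, specialised to the asymptotic regime): under Assumption~\ref{assu lr}, the strict drift, the bounded-moment martingale noise, and the controlled bias, the iterates started at any $w_0 = w$ in a compact set $\fW$ converge a.s.\ to $w_*$ on an event of probability at least $1 - C_\fW \sum_t \alpha_t^2$, with $C_\fW$ absorbing the noise-variance and bias constants over $\fW$. The uniqueness claim then follows by disjointness: the events $\qty{\lim_t w_t = w_*}$ for distinct $w_* \in \fW_*$ are mutually exclusive, yet each has probability at least $1 - C_\fW \sum_t \alpha_t^2$; since $\fW_*$ is defined independently of the step sizes, rescaling $\alpha_t \mapsto c\alpha_t$ (which preserves $\sum_t \alpha_t = \infty$ while driving $\sum_t \alpha_t^2$ to $0$) makes $C_\fW \sum_t \alpha_t^2 < \tfrac12$, so two such events would have probabilities summing above $1$ \qq{a contradiction unless} $\fW_*$ is a singleton.

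I expect the main obstacle to be the control of the bias $\zeta_t$ for the time-inhomogeneous, iterate-dependent chain: one must quantify how fast the frozen chains mix uniformly over $\Lambda_M$ and how Lipschitz their invariant distributions are in $w$, so that the accumulated bias is summable against $\qty{\alpha_t^2}$. The truncation is precisely what makes this tractable, replacing the unbounded followon trace by a bounded finite-memory statistic, but turning uniform ergodicity and Assumption~\ref{assu lipschitz continuous} into the required bias estimate, and verifying the drift uniformly over the bounded region actually visited by the iterates rather than merely pointwise, is the delicate part.
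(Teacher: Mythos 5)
Your proposal follows essentially the same route as the paper: the identical Lyapunov drift estimate $\indot{w-w_*}{A_w w + b_w} \leq -\lambda_{min}'\norm{w-w_*}^2$ via the decomposition $A_w\tilde w + (A_w - A_{w_*})w_* + (b_w - b_{w_*})$ with Lemma~\ref{lem lipschitz} and Assumption~\ref{assu smooth enough asymptotic}, the same reduction to a finite ergodic chain on windows of length $n+1$ enabled by truncation (and by recomputing past ratios with $w_t$ so the update is a function of $(w_t, O_t)$ alone), an appeal to the same controlled-Markov-noise stochastic approximation theorem yielding the $1 - C_\fW\sum_t\alpha_t^2$ bound (the paper uses Theorem 13 of \citet{DBLP:books/sp/BenvenisteMP90} via its finite-chain corollary, which resolves the Markovian bias through a Poisson-equation construction rather than your summation-by-parts sketch), and the same disjoint-events argument with shrunken step sizes for uniqueness. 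The approach is correct and matches the paper's proof in all essentials.
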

In \eqref{eq high prob bound},
$C_\fW$ depends on the compact set $\fW$ from which the weight $w_0$ is selected.
For \eqref{eq high prob bound} to be nontrivial, 
the learning rates have to be small enough,
depending on the choice of initial weights.
The proof of Theorem \ref{thm esarsa convergence} is provided in Section \ref{sec proof esarsa convergence} and depends on Theorem 13 of \citet{DBLP:books/sp/BenvenisteMP90}.\footnote{It might be possible to obtain an almost sure convergence of Algorithm \ref{alg esarsa} like Theorem \ref{thm etd convergence} by invoking Theorem 17 of \citet{DBLP:books/sp/BenvenisteMP90}.
Doing so
requires verifying (1.9.5) of \citet{DBLP:books/sp/BenvenisteMP90}.
If how \citet{melo2008analysis} verify (1.9.5) was documented in the context of on-policy SARSA with linear function approximation, 
it is expected that (1.9.5) can also be similarly verified in the context of Algorithm \ref{alg esarsa}.
}

\begin{algorithm}
Initialize $w_0$ such that $\norm{w_0} \leq R$ \; 
$S_0 \sim p_0(\cdot)$ \;
$A_0 \sim \mu_{w_0}(\cdot | S_0)$ \;
$t \gets 0$ \;
\While{True}{
  Execute $A_t$, get $R_{t+1}, S_{t+1}$ \;
  $A_{t+1} \sim \mu_{w_t}(\cdot | S_{t+1})$ \;
  $\rho_t \gets \frac{\pi_{w_t}(A_t | S_t)}{\mu_{w_t}(A_t | S_t)}$ \;
  $F_{t, n} \gets 0$ \;
  \For{$k = 0, \dots, n$}{
    $F_{t, n} \gets i_{t-n+k} + \gamma \rho_{t-n+k} F_{t, n}$
  }
  $w_{t+1} \gets \Pi_R\left(w_t + \alpha_t F_{t, n} (R_{t+1} + \gamma \sum_a {\pi_{w_t}(a | S_{t+1})} x(S_{t+1}, a)^\top w_t - x_t^\top w_t ) x_t \right)$ \;
  $t \gets t+ 1$ \;
}
\caption{\label{alg pesarsa}Projected Truncated Emphatic Expected SARSA}
\end{algorithm}
We now analyze the convergence rate of Projected Truncated Emphatic Expected SARSA (Algorithm \ref{alg pesarsa}).
Unlike Algorithm \ref{alg esarsa},
when computing $F_{t,n}$ in Algorithm \ref{alg pesarsa},
we do \emph{not} need to recompute previous importance sampling ratios.
Similar to Assumption \ref{assu smooth enough asymptotic},
we make the following assumption about the Lipschitz constants $L_\mu$ and $L_\pi$ for analyzing Algorithm~\ref{alg pesarsa}.
\begin{assumption}
\label{assu smooth enough}
$L_\mu$ and $L_\pi$ are not too large such that
\begin{align}
\lambda_{min}''  \doteq \inf_{w_* \in \fW_*} \lambda(M(w_*)) - \left((C_1 L_\mu + C_2 L_\pi)R + C_3 L_\mu + C_4 L_\pi\right) > 0.
\end{align}
\end{assumption}
When defining $\lambda_{min}'$ in Assumption \ref{assu smooth enough asymptotic},
the infimum is taken over all possible $w$.
When defining $\lambda_{min}''$ in Assumption \ref{assu smooth enough},
the infimum is taken over only $\fW_*$.
This improvement is made possible by the introduction of the projection $\Pi_R$.  

To analyze the convergence rate of Algorithm~\ref{alg pesarsa},
it is crucial to know how fast the induced chain mixes,
which is provided by the following lemma.

\begin{lemma}
\label{assu uniform ergodicity}
(Lemma 1 of \citet{DBLP:journals/corr/abs-2111-02997})
Let Assumption~\ref{assu closure ergodic} hold.
Then there are constants $C_0 > 0$ and $\kappa \in (0, 1)$ such that for any $w \in \R^K$ and $t \geq 0$,
the chain $\qty{S_t}_{t=0, 1, \dots}$ induced by the policy $\mu_w$ satisfies
\begin{align}
\sum_{s \in \fS} \left|\Pr(S_t = s) - \bar d_{\mu_w}(s) \right| \leq C_0 \kappa^t,
\end{align}
where $\bar d_{\mu_w}$ is the invariant state distribution induced by the policy $\mu_w$.
\end{lemma}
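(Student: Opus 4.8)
The plan is to establish uniform geometric ergodicity by passing from the non-compact parameter space $\R^K$ to the compact set $\Lambda_M$ of behavior policies and then controlling the total-variation contraction through Dobrushin's ergodicity coefficient. First I would fix $w$, write $\nu$ for the initial distribution $\Pr(S_0 = \cdot)$ (as a row vector) and $P_{\mu_w}$ for the transition matrix, and observe that $\Pr(S_t = \cdot) - \bar d_{\mu_w} = (\nu - \bar d_{\mu_w})P_{\mu_w}^t$ because $\bar d_{\mu_w}P_{\mu_w} = \bar d_{\mu_w}$. The signed vector $\nu - \bar d_{\mu_w}$ has entries summing to zero and satisfies $\norm{\nu - \bar d_{\mu_w}}_1 \leq 2$; since right-multiplication by a row-stochastic matrix preserves the zero-sum property, it suffices to show that $P_{\mu_w}^t$ contracts the $\ell_1$ norm of zero-sum vectors geometrically, at a rate uniform in $w$.

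For the contraction I would invoke the classical fact that for any zero-sum row vector $v$ one has $\norm{vP}_1 \leq \delta(P)\norm{v}_1$, where $\delta(P) \in [0,1]$ is Dobrushin's ergodicity coefficient. The difficulty is that a single step need not contract, i.e.\ $\delta(P_{\mu_w})$ may equal $1$, so I would work with a power $P_{\mu_w}^m$ instead. The key is that Assumption \ref{assu closure ergodic} makes every chain in $\Lambda_M$ irreducible and aperiodic, hence primitive: for each $\mu \in \Lambda_M$ there is an integer $m_\mu$ with $P_\mu^{m_\mu}$ entrywise strictly positive, and once a power is strictly positive, irreducibility forces every higher power to remain strictly positive. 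Strict positivity is an open condition and $\mu \mapsto P_\mu^{m}$ is continuous, so each $\mu$ has a neighborhood on which $P_{(\cdot)}^{m_\mu}$ stays positive; a finite subcover of the compact $\Lambda_M$ then yields a single $m$ with $P_\mu^m$ strictly positive, and therefore $\delta(P_\mu^m) < 1$, for every $\mu \in \Lambda_M$.

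With the uniform power $m$ in hand, I would use that $\mu \mapsto \delta(P_\mu^m)$ is continuous and $\Lambda_M$ is compact, so by the extreme value theorem $\delta^* \doteq \sup_{\mu \in \Lambda_M}\delta(P_\mu^m) < 1$. For arbitrary $t$ I would write $t = qm + r$ with $0 \leq r < m$, apply the contraction once per block of length $m$ (each block contracting by at most $\delta^*$) and bound the residual block of length $r$ by the trivial factor $\delta(P_\mu^r) \leq 1$, obtaining
\begin{align}
\sum_{s \in \fS}\left|\Pr(S_t = s) - \bar d_{\mu_w}(s)\right| \leq 2(\delta^*)^{\lfloor t/m\rfloor}.
\end{align}
Setting $\kappa \doteq (\delta^*)^{1/m} \in (0,1)$ and $C_0 \doteq 2/\delta^*$ absorbs the floor and yields the claimed bound $C_0 \kappa^t$ uniformly in $w$, since $\mu_w \in \Lambda_M$ for every $w$.

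The main obstacle is exactly the uniformity over the non-compact set $\R^K$: for a single fixed policy geometric ergodicity is textbook, but here the constants $C_0$ and $\kappa$ must not depend on $w$. The crux is extracting a single mixing power $m$ valid for all policies in $\Lambda_M$, which rests on the compactness of $\Lambda_M$, the continuity of $\mu \mapsto P_\mu$, and the monotonic persistence of strict positivity under irreducibility; everything else reduces to continuity-plus-compactness applications of the extreme value theorem.
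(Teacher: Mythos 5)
Your argument is correct, but note that the paper does not prove this lemma at all: it is imported verbatim as Lemma 1 of the cited reference \citep{DBLP:journals/corr/abs-2111-02997}, so there is no in-paper proof to match against. Your self-contained route --- reduce to contraction of zero-sum vectors under $P_{\mu_w}^t$, extract a single power $m$ with $P_\mu^m$ entrywise positive for all $\mu \in \Lambda_M$ via primitivity, openness of strict positivity, and a finite subcover, then push through Dobrushin's coefficient with the extreme value theorem to get $\delta^* < 1$ uniformly --- is the standard proof of uniform geometric ergodicity over a compact family of ergodic finite-state chains, and each step checks out: the persistence of positivity of higher powers does follow from irreducibility (every column of $P_\mu$ has a positive entry), and the block decomposition $t = qm + r$ with $\delta(P_\mu^r) \leq 1$ on the residual gives exactly the claimed $C_0\kappa^t$ with $\kappa = (\delta^*)^{1/m}$. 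The only cosmetic wrinkle is the degenerate case $\delta^* = 0$, where $C_0 = 2/\delta^*$ is undefined; there you can simply take $\kappa = 1/2$ and $C_0 = 2^{m+1}$ since the left-hand side vanishes for $t \geq m$. What your approach buys relative to the paper's citation is a fully self-contained argument that makes explicit exactly which hypotheses do the work: compactness of $\Lambda_M$, continuity of $\mu \mapsto P_\mu$, and ergodicity of every chain in the closure --- precisely the content of Assumption \ref{assu closure ergodic}.
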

Lemma~\ref{assu uniform ergodicity} is usually referred to as \emph{uniform mixing}
since the mixing rate $\kappa$ does not depend on $w$.
This uniform mixing appears to be a technical assumption in \citet{zou2019finite}.
We are now ready to present our finite sample analysis.
\begin{theorem}
\label{thm finite sample sa}
Let the assumptions and conditions in Lemma \ref{thm fixed point sa} hold.
Let Assumptions \ref{assu lipschitz continuous} and \ref{assu smooth enough} hold.
Set the learning rate $\qty{\alpha_t}$ in Algorithm \ref{alg esarsa} to
\begin{align}
  \label{eq learning rate}
\alpha_t \doteq \frac{1}{2 \alpha_\lambda (t+1)},
\end{align}
where $\alpha_\lambda \in (0, \lambda_{min}'')$ is some constant.
Then for any $w_* \in \fW_*$, for sufficiently large $t$ (in the sense that $t - \fO(\ln t) > n$), 
the weight vector $w_t$ generated by Algorithm \ref{alg pesarsa} satisfies
\begin{align}
\label{eq esarsa convergence}
\E\left[ \norm{w_t - w_*}^2\right] = \fO\left( \frac{\ln^3t}{t}\right).
\end{align}
This immediately implies that $\fW_*$ contains only one element (under the conditions of this theorem).
\end{theorem}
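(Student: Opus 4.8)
The plan is to view Algorithm \ref{alg pesarsa} as a projected stochastic approximation scheme and track the squared error $z_t \doteq \norm{w_t - w_*}^2$ through a one-step recursion. Writing the per-step update direction as
\begin{align}
g_t \doteq F_{t, n}\left(R_{t+1} + \gamma \sum_a \pi_{w_t}(a | S_{t+1}) x(S_{t+1}, a)^\top w_t - x_t^\top w_t \right) x_t,
\end{align}
so that the algorithm reads $w_{t+1} = \Pi_R(w_t + \alpha_t g_t)$, I would first use that $\Pi_R$ is nonexpansive and that $R \geq \norm{w_*}$ (hence $\Pi_R w_* = w_*$) to obtain $z_{t+1} \leq z_t + 2\alpha_t \indot{g_t}{w_t - w_*} + \alpha_t^2 \norm{g_t}^2$. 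The projection bounds $\norm{w_t} \leq R$, and the finiteness of $\fS \times \fA$ together with the uniform positivity of $\mu_w$ from Assumption \ref{assu closure ergodic} (bounded importance ratios), a fixed $n$, and bounded rewards give a deterministic bound $\norm{g_t} \leq G$, so the last term is $\fO(\alpha_t^2)$.

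Next I would split the inner product around the mean-path update $\bar g(w) \doteq A_w w + b_w$ (the stationary expectation of $g_t$ with the policy frozen at $w$). For the drift $\indot{\bar g(w_t)}{w_t - w_*}$, using $A_{w_*} w_* + b_{w_*} = 0$, I would decompose $\bar g(w_t) = A_{w_*}(w_t - w_*) + (A_{w_t} - A_{w_*})w_t + (b_{w_t} - b_{w_*})$. The negative definiteness of $A_{w_*}$ from Lemma \ref{thm fixed point sa} gives $\indot{A_{w_*}(w_t - w_*)}{w_t - w_*} \leq -\lambda(M(w_*)) z_t$, while the Lipschitz bounds of Lemma \ref{lem lipschitz} together with $\norm{w_t} \leq R$ control the remaining two terms by $\left[(C_1 L_\mu + C_2 L_\pi)R + C_3 L_\mu + C_4 L_\pi\right] z_t$. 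Assumption \ref{assu smooth enough} then collapses these into a strict negative drift $\indot{\bar g(w_t)}{w_t - w_*} \leq -\lambda_{min}'' z_t$. Crucially, placing the projected (hence $R$-bounded) iterate $w_t$ inside the difference $A_{w_t} - A_{w_*}$, and centering the quadratic term at $A_{w_*}$ rather than $A_{w_t}$, is exactly what lets the eigenvalue be evaluated at $w_*$ and the infimum be restricted to $\fW_*$; this is the role of the projection advertised in the text.

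The technical heart, and the step I expect to be the main obstacle, is controlling the Markovian and non-stationary noise term $\E\left[\indot{g_t - \bar g(w_t)}{w_t - w_*}\right]$, since here the time-varying policy, the changing stationary distribution, and the trace staleness must all be absorbed at once. I would introduce a mixing delay $\tau = \fO(\ln t)$ and decouple $g_t$ from $w_t$ by (i) replacing $w_t$ with $w_{t-\tau}$ at cost $\norm{w_t - w_{t-\tau}} \leq G \sum_{i=t-\tau}^{t-1} \alpha_i = \fO(\tau \alpha_{t-\tau})$, and (ii) conditioning on the history up to $t-\tau$ and applying the uniform mixing of Lemma \ref{assu uniform ergodicity}, so that the conditional law of the window $(S_{t-n}, A_{t-n}, \dots, S_{t+1})$ is within $\fO(\kappa^\tau)$ of its invariant counterpart under the nearly constant policy. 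A wrinkle specific to Algorithm \ref{alg pesarsa} is that $F_{t,n}$ uses ratios evaluated at the stale weights $w_{t-n}, \dots, w_t$ rather than a single $w$; the Lipschitz continuity of $\pi_w, \mu_w$ (Assumption \ref{assu lipschitz continuous}) and $\norm{w_{t-j} - w_t} = \fO(n \alpha_{t-n})$ bound the gap between the algorithm's trace and the idealized $f_{n, \mu_{w_t}, \pi_{w_t}}$. Choosing $\tau$ proportional to $\ln t$ so that $\kappa^\tau = \fO(1/t)$, and using Young's inequality to fold the residual cross terms into the drift, the whole contribution $2\alpha_t \E\left[\indot{g_t - \bar g(w_t)}{w_t - w_*}\right]$ is bounded by $\fO\!\left(\alpha_t(\tau \alpha_{t-\tau} + \kappa^\tau)\right)$.

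Finally I would assemble the recursion $\E[z_{t+1}] \leq (1 - 2\alpha_t \lambda_{min}'')\E[z_t] + \fO(\alpha_t^2) + \fO\!\left(\alpha_t(\tau \alpha_{t-\tau} + \kappa^\tau)\right)$. With $\alpha_t = \frac{1}{2\alpha_\lambda(t+1)}$ and $\alpha_\lambda < \lambda_{min}''$, the contraction factor is $1 - \frac{c}{t+1}$ with $c = \frac{\lambda_{min}''}{\alpha_\lambda} > 1$, which is precisely the regime where the per-step forcing of order $\frac{\mathrm{polylog}(t)}{t^2}$ accumulates to an $\fO(\mathrm{polylog}(t)/t)$ bound. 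Unrolling the product of contraction factors against the accumulated forcing and carefully tracking the $\ln t$ factors introduced by the delay $\tau$ (which enters both the weight drift and the summation) yields the stated $\E\left[\norm{w_t - w_*}^2\right] = \fO(\ln^3 t / t)$, valid once $t$ is large enough that $t - \fO(\ln t) > n$. Uniqueness of $\fW_*$ is immediate: the bound holds simultaneously for every $w_* \in \fW_*$, so two distinct fixed points would each be the limit of the same sequence $\qty{w_t}$, a contradiction.
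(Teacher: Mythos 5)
Your proposal is correct and follows essentially the same route as the paper's proof: both cast Algorithm \ref{alg pesarsa} as a projected stochastic approximation, use nonexpansiveness of $\Pi_R$ and a uniform bound on the update direction, obtain the strict negative drift from the negative definiteness of $A_{w_*}$ combined with the Lipschitz bounds of Lemma \ref{lem lipschitz} and Assumption \ref{assu smooth enough}, control the Markovian/non-stationary noise by a logarithmic mixing delay together with an auxiliary frozen-policy chain and Lemma \ref{assu uniform ergodicity}, and unroll the resulting recursion under the $1/t$ step size to get $\fO(\ln^3 t/t)$, with the same uniqueness argument for $\fW_*$. The only differences are cosmetic (you center the drift at $w_t$ where the paper centers it at the delayed iterate $w_{t-n-\tau}$, and you unroll the contraction product where the paper divides by $2\alpha_t$ and telescopes).
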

The proof of Theorem \ref{thm finite sample sa} and the constants hidden by $\fO(\cdot)$ are detailed in Section \ref{sec proof finite sample sa}.
The proof follows the same steps as \citet{zou2019finite}
but generalizes the analysis of the on-policy SARSA in \citet{zou2019finite} to the off-policy setting and includes backward traces, which are not included in \citet{zou2019finite}.

In this section,
we present (Projected) Truncated Emphatic Expected SARSA as a convergent off-policy control algorithm with linear function approximation.
Importantly,
in Algorithms \ref{alg esarsa} and \ref{alg pesarsa},
the behavior policy is a function of the current action-value estimates and thus changes every time step and can be very different from the target policy.
These two features are common in practice (see, e.g. \citet{mnih2015human}) but rarely appreciated in existing literature.
For example,
in Greedy-GQ \citep{maei2010toward,wang2020finite},
a control algorithm in the family of the gradient TD methods,
the behavior policy is assumed to be fixed.
In the convergent analysis of linear $Q$-learning \citep{melo2008analysis,lee2019unified},
the behavior policy is assumed to be sufficiently close to the policy that linear $Q$-learning is expected to converge to.

\section{Related Work}

Besides gradient TD and emphatic TD methods,
there are also other methods for addressing the deadly triad,
including density-ratio-based methods \citep{hallak2017consistent,liu2018breaking,gelada2019off,nachum2019dualdice,zhang2020gradientdice} and target-network-based methods \citep{zhang2021breaking}.
Density-ratio-based methods rely on learning the density ratio $\frac{d_\pi(s)}{d_\mu(s)}$ directly via function approximation.
This ratio can then be used to reweight the off-policy TD update \eqref{eq off-policy td} \citep{hallak2017consistent} if the goal is to learn the value function of the target policy or 
reweight rewards directly when computing the empirical average of rewards \citep{liu2018breaking} if the goal is to obtain a scalar performance metric of the target policy.
Density ratio learning suffers from the same critical problem as emphasis learning and gradient TD methods.
Under general conditions for linear function approximation, 
it is hard to bound the distance between the learned density ratio and the ground truth.
Consequently,
the downstream updates relying on the learned density ratio rarely have  rigorous performance guarantees.
By contrast,
the performance of the value function estimation resulting from emphatic TD methods is well bounded (\citet{hallak2016generalized} and Lemma \ref{lem contraction}).

Target-network-based methods rely on the use of a target network \citep{mnih2015human} for bootstrapping. 
Though provably convergent,
the methods of \citet{zhang2021breaking} require a sufficiently large ridge regularization,
introducing extra bias.
By contrast,
emphatic TD methods do not require any extra regularization.

Variance reduction is an active research area in RL (e.g., \citealt{du2017stochastic,papini2018stochastic}),
which is usually achieved by designing proper control variates, see, e.g., \citet{johnson2013accelerating}.
By contrast, we reduce variance by truncating the followon trace directly.
This technique is specifically designed for emphatic TD methods and we leave the possible combination of truncated emphatic TD methods and standard variance reduction techniques for future work. 

\section{Experiments}
In this section, 
we empirically investigate the proposed truncated emphatic TD methods,
focusing on the effect of $n$.
The implementation is made publicly available to facilitate future research.\footnote{\url{https://github.com/ShangtongZhang/DeepRL}}

\begin{figure}[h]
  \centering
  \includegraphics[width=0.5\textwidth]{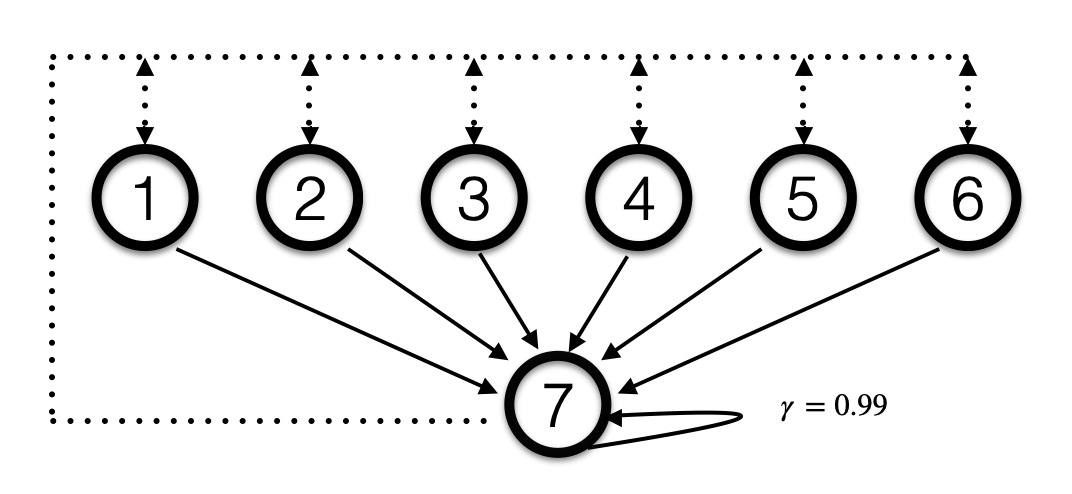}
  \caption{\label{fig MDP} Baird's counterexample from Chapter 11.2 of \citet{sutton2018reinforcement}.
  The figure is taken from \citet{zhang2019provably}. 
  There are two actions available at each state, \texttt{dashed} and \texttt{solid}.
  The \texttt{solid} action always leads to state $7$.
  The \texttt{dashed} action leads to one of states 1 - 6,
  with equal probability.
  The discount factor is $\gamma = 0.99$.
  The reward is always 0.
  The initial state is sampled uniformly from all the seven states.}
\end{figure}

\begin{figure}[t]
  \centering
  \includegraphics[width=\textwidth]{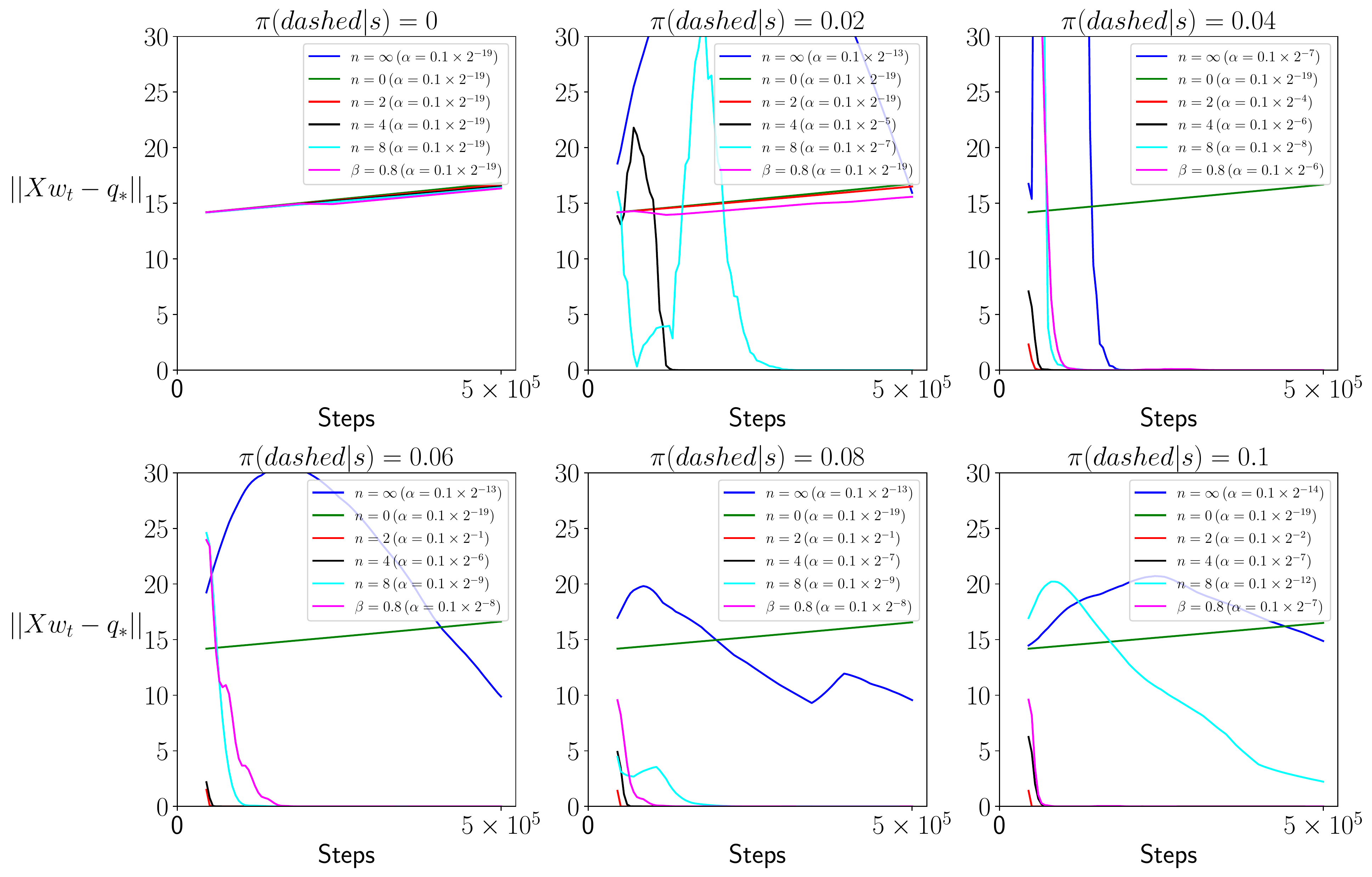}
  \caption{\label{fig prediction} Truncated Emphatic TD and ETD(0, $\beta$) in the prediction setting. 
  To improve readability,
  this figure contains only \emph{one representative run} and uses a sliding window of size 10 for smoothing.
  A more informative but harder to read version including 30 independent runs without smoothing is provided in Figure~\ref{fig prediction full} in Section~\ref{sec comp plots}.
  The curves in the two figures share similar trends and all the discussion in the paper is based on the comprehensive results in Figure~\ref{fig prediction full}.}
\end{figure}

\begin{table}
  \centering
\begin{tabular}{c|cccccc}
  \toprule
  {} &         $n=\infty$ & $n=0$ &                $n=2$ &              $n=4$ &      $n=8$ & $\beta=0.8$ \\
  \midrule
  $\pi(dashed=0|s)$    &          - &     - &                    - &                  - &          - &           - \\
  $\pi(dashed=0.02|s)$ &          - &     - &                    - &  \textbf{$10^{4}$} &  $10^{14}$ &           - \\
  $\pi(dashed=0.04|s)$ &   $10^{7}$ &     - &    \textbf{$10^{1}$} &           $10^{1}$ &   $10^{9}$ &    $10^{9}$ \\
  $\pi(dashed=0.06|s)$ &          - &     - &             $10^{2}$ &  \textbf{$10^{0}$} &   $10^{4}$ &    $10^{4}$ \\
  $\pi(dashed=0.08|s)$ &          - &     - &   \textbf{$10^{-1}$} &           $10^{0}$ &   $10^{7}$ &    $10^{7}$ \\
  $\pi(dashed=0.1|s)$  &          - &     - &  \textbf{$10^{-11}$} &           $10^{0}$ &   $10^{2}$ &    $10^{4}$ \\
  \bottomrule
\end{tabular}
\caption{\label{tab prediction} 
Average variance of curves in Figure~\ref{fig prediction full}.
Each curve in Figure~\ref{fig prediction full} consists of 100 data points. 
The average variance of those data points is reported in this table.
Here we consider only successful configurations whose averaged prediction error at the end of training is smaller than 5.
The average variance of other curves are not included and denoted as ``-''.
Other tables in this section also follow this reporting protocol.
}
\end{table}

We first use Baird's counterexample as the benchmark,
which is illustrated in Figure \ref{fig MDP}.
We consider three different settings:
prediction, control with a fixed behavior policy,
and control with a changing behavior policy.
For the prediction setting,
we consider a behavior policy $\mu(\texttt{solid}|s) = \frac{1}{7}$ and $\mu(\texttt{dashed} | s) = \frac{6}{7}$,
which is the same as the behavior policy used in \citet{sutton2018reinforcement}.
We consider different target policies 
from $\pi(\texttt{dashed}|s)=0$ to $\pi(\texttt{dashed}|s)=0.1$.
We consider linear function approximation,
where the features and the initialization of the weight vector are the same as Section D.2 of \citet{zhang2021breaking}.
We benchmark Algorithm \ref{alg etd} with different selection of $n$.
When $n=\infty$,
Algorithm~\ref{alg etd} reduces to the original ETD(0).
When $n=0$,
Algorithm~\ref{alg etd} reduces to the naive off-policy TD. 
We use a fixed learning rate $\alpha$,
which is tuned from 
$\Lambda_\alpha \doteq \qty{0.1 \times 2^{0}, 0.1 \times 2^{-1}, \dots, 0.1 \times 2^{-19}}$ for each $n$,
with 30 independent runs.
We report learning curves with the learning rate minimizing the value prediction error at the end of training.
Additionally,
we also benchmark ETD$(0, \beta)$,
where we replace the $F_{t, n}$ in Algorithm \ref{alg etd} with the trace $F_{t, \beta}$ computed via \eqref{eq hallak trace}.
We tune $\beta$ in $\qty{0.1, 0.2, 0.4, 0.8}$.
For each $\beta$,
we tune the learning rate $\alpha$ in $\Lambda_\alpha$ as before.
The interest is 1 for all states (i.e., $i(s) \equiv 1 \, \forall s$). 
We report the learning curves with the best $\beta$.
All curves are averaged over 30 independent runs with shaded regions indicating \emph{standard errors},
unless otherwise specified.
This experimental and reporting protocol is also used in all the remaining experiments in this paper.

As shown by Figures \ref{fig prediction} and \ref{fig prediction full} with $n=0$,
the naive off-policy TD makes no progress in this prediction setting.
The curve is almost flat because the best learning rate is $0.1 \times 2^{-19}$;
using any larger learning rate simply accelerates divergence.
As shown by the curves with $n=\infty$,
naive ETD(0) does make some progresses when $\pi(\texttt{dashed}|s) > 0$,
though the final prediction errors at the end of training are usually large.
By contrast,
using $n=4$ leads to quick convergence in all the tasks with $\pi(\texttt{dashed}|s) > 0$.
Reducing $n$ from 4 to 2 also works when $\pi(\texttt{dashed}|s) \geq 0.04$ and
increasing $n$ from 4 to 8 significantly increases the variance.
Obviously increasing $n$ leads to a larger variance,
so in practice we want to find the smallest $n$.
Moreover, though ETD($0, \beta$) converges when $\pi(\texttt{dashed}|s) \geq 0.04$,
it usually exhibits larger variance than our Truncated ETD with $n=2$ or $n=4$ (Table~\ref{tab prediction}).
We conjecture that this is because the trace \eqref{eq hallak trace} still relies on all the history. 
Consider, e.g., $\pi(\texttt{dashed}|s) = 0.02$: the maximum importance ratio is $\rho_{max} = 0.98 \times 7 = 6.86$.
If $\beta \rho_{max} > 1$,
there is still a chance that the trace in \eqref{eq hallak trace} goes to infinity since it depends on all the history.
However, requiring $\beta \rho_{max} < 1$ would require using a small $\beta$,
which itself could also lead to instability.
By contrast,
with truncation, $F_{t,n}$ is always guaranteed to be bounded.
The results suggest that our hard truncation also has empirical advantages over the soft truncation in \citet{hallak2016generalized},
besides the theoretical advantages of enabling finite sample analysis for both prediction and control settings. 
It can be analytically computed that 
for all $\pi(\texttt{dashed} | s) \in \qty{0, 0.02, 0.04, 0.06, 0.08, 0.1}$,
the desired $n$ as suggested by Lemma~\ref{lem nd} is around 700.
The $n$ we use in the experiments is much smaller than the suggested one.
This is because Lemma~\ref{lem nd} has to be conservative enough to cope well with all possible MDPs.
In this work,
we focus on establishing the existence of such an $n$ and giving an initial but possibly loose bound. 
We leave the improvement of Lemma~\ref{lem nd} for future work.
For computational experiments,
we recommend to treat $n$ as an additional hyperparameter.

When $\pi(\texttt{dashed}|s) = 0$,
which is used in the original Baird's counterexample,
no selection of $n$ or $\beta$ is able to make any progress.
The failure of ETD(0) with this target policy is also observed by \citet{sutton2018reinforcement}.
This target policy is particularly challenging because the its off-policyness is the largest in all the tested target policies,
making it hard to observe progresses in computational experiments.
Though truncation is not guaranteed to always reduce the variance to desired levels while maintaining convergence,
our experiments in the prediction setting do suggest it is a promising approach. 
We leave a more in-depth investigation with this target policy for future work.

\begin{figure}[t]
  \centering
  \includegraphics[width=\textwidth]{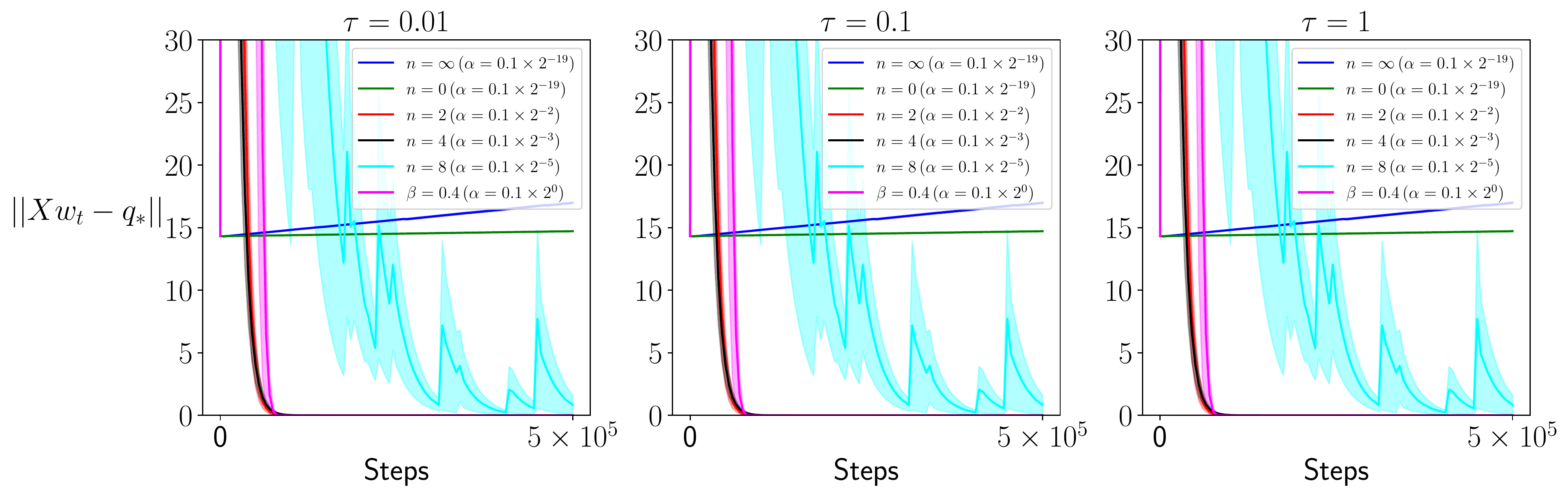}
  \caption{\label{fig control1} Truncated Emphatic Expected SARSA and its $\beta$-variant in the control setting with a fixed behavior policy. 
  The shaded regions are invisible for some curves because their standard errors are too small.}
\end{figure}

\begin{table}
  \centering
  \begin{tabular}{c|cccccc}
    \toprule
    {} & $n=\infty$ & $n=0$ &     $n=2$ &              $n=4$ &     $n=8$ & $\beta=0.8$ \\
    \midrule
    $\tau=0$    &          - &     - &  $10^{4}$ &  \textbf{$10^{3}$} &  $10^{6}$ &   $10^{11}$ \\
    $\tau=0.01$ &          - &     - &  $10^{4}$ &  \textbf{$10^{3}$} &  $10^{6}$ &   $10^{11}$ \\
    $\tau=0.1$  &          - &     - &  $10^{4}$ &  \textbf{$10^{3}$} &  $10^{6}$ &   $10^{11}$ \\
    \bottomrule
  \end{tabular}
  \caption{\label{tab control1} Average variance of curves in Figure~\ref{fig control1}. Here $n=4$ has smaller variance than $n=2$ because the former converges slightly faster.}
\end{table}

In the control setting with a fixed behavior policy,
we benchmark Algorithm~\ref{alg pesarsa} with different selection of $n$,
as well as its $\beta$-variant (cf.\ \eqref{eq hallak trace}).
In particular, 
we set the radius of the ball for projection to be infinity (i.e., the projection is now an identity mapping).
Consequently,
when $n=\infty$,
our implementation of Algorithm \ref{alg pesarsa} becomes a straightforward extension of ETD(0) to the control setting. 
We use the same behavior policy as the prediction setting.
The target policy is a softmax policy with a temperature $\tau$: 
\begin{align}
  \pi(\texttt{dashed}|s) \doteq \frac{\exp\left(q(s, \texttt{dashed})/\tau\right)}{\exp\left(q(s, \texttt{dashed})/\tau\right) + \exp\left(q(s, \texttt{solid})/\tau\right)}.
\end{align}
We test three different temperatures $\tau \in \qty{0.01, 0.1, 1}$.
When $\tau$ approaches $0$,
the target policies become more and more greedy.
Consequently, Algorithm \ref{alg pesarsa} approaches $Q$-learning. 
As shown in Figure~\ref{fig control1},
neither the naive off-policy expected SARSA (i.e., $n=0$) nor the naive extension of ETD(0) (i.e., $n=\infty$) makes any progress in this setting.
By contrast, 
our Truncated Empathic Expected SARSA consistently converges,
with lower variance than its $\beta$-variant (Table~\ref{tab control1}).

\begin{figure}[t]
  \centering
  \includegraphics[width=\textwidth]{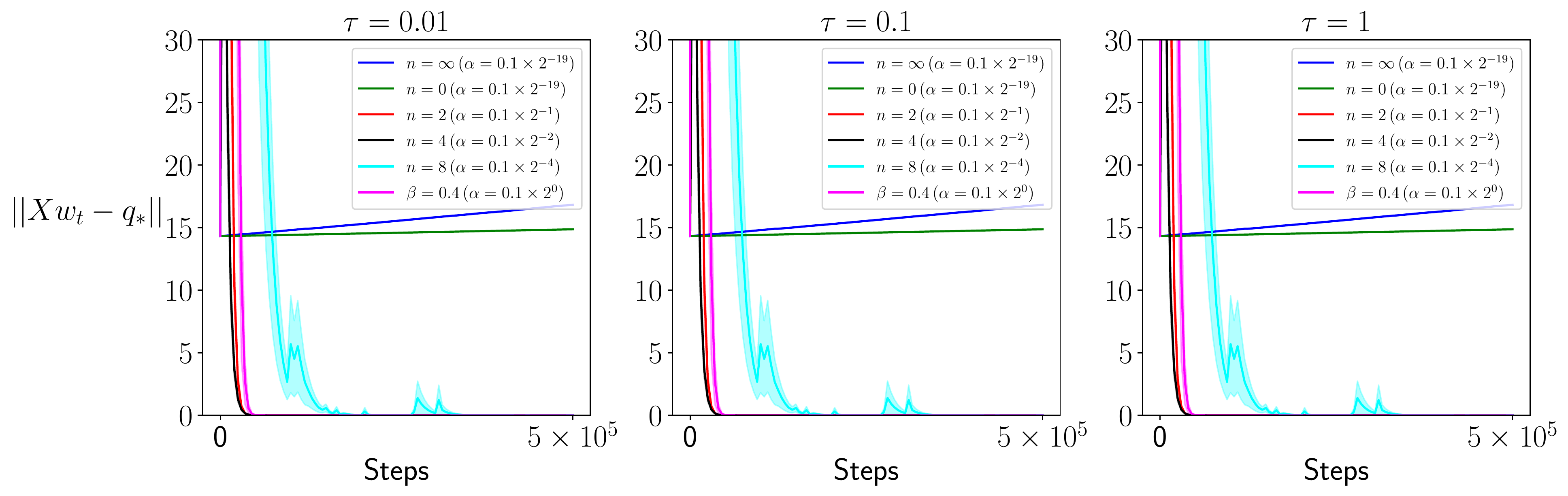}
  \caption{\label{fig control2} Truncated Emphatic Expected SARSA and its $\beta$-variant in the control setting with a changing behavior policy.
  The shaded regions are invisible for some curves because their standard errors are too small.}
\end{figure}

\begin{table}
  \centering
  \begin{tabular}{c|cccccc}
    \toprule
    {} & $n=\infty$ & $n=0$ &     $n=2$ &              $n=4$ &     $n=8$ & $\beta=0.8$ \\
    \midrule
    $\tau=0$    &          - &     - &  $10^{3}$ &  \textbf{$10^{2}$} &  $10^{6}$ &    $10^{6}$ \\
    $\tau=0.01$ &          - &     - &  $10^{3}$ &  \textbf{$10^{2}$} &  $10^{6}$ &    $10^{6}$ \\
    $\tau=0.1$  &          - &     - &  $10^{3}$ &  \textbf{$10^{2}$} &  $10^{6}$ &    $10^{6}$ \\
    \bottomrule
  \end{tabular}
  \caption{\label{tab control2} Average variance of curves in Figure~\ref{fig control2}. Here $n=4$ has smaller variance than $n=2$ because the former converges slightly faster.}
\end{table}

In the control setting with a changing behavior policy,
we still benchmark Algorithm~\ref{alg pesarsa} with a different selection of $n$ and its $\beta$-variant.
The target policy is still the softmax policy with a temperature $\tau$.
The behavior policy is now a mixture policy same as the one used in \citet{zhang2021breaking}.
At each time step,
with probability 0.9,
the behavior policy is the same as the behavior policy used in the prediction setting;
with probability 0.1,
the behavior policy is a softmax policy with temperature 1.
As shown by Figure~\ref{fig control2} and Table~\ref{tab control2},
the results in this setting are similar to the previous setting with a fixed behavior policy but the variance with $n \in \qty {2, 4}$ is reduced.
This is because the behavior policy is now related to the target policy,
i.e.,
the off-policyness is reduced.

\begin{figure}[h]
  \centering
  \includegraphics[width=0.5\textwidth]{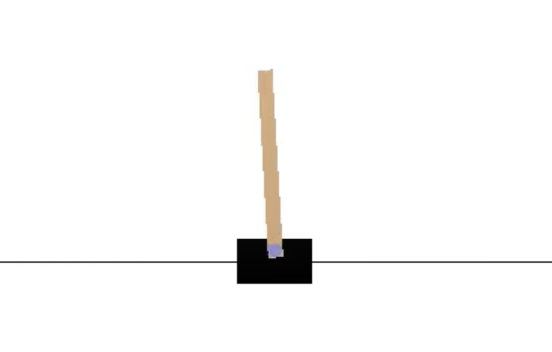}
  \caption{\label{fig cartpole} CartPole. 
  At each time step,
  we observe the \texttt{velocity}, \texttt{acceleration}, \texttt{angular velocity}, and \texttt{angular acceleration} of the pole and move the car \texttt{left} or \texttt{right} to keep the pole balanced.
  The reward is +1 every time step.
  An episode ends if a maximum of 1000 steps is reached or the pole falls.}
\end{figure}

We further evaluate Truncated Emphatic TD methods in the CartPole domain (Figure~\ref{fig cartpole}),
which is a classical nonsynthetic control problem.
We use tile coding \citep{sutton1995generalization} to map the four-dimensional observation 
(velocity, acceleration, angular velocity, angular acceleration) 
to a binary vector in $\R^{1024}$ and then apply linear function approximation.
In particular,
we use the tile coding software recommended in Chapter 10.1 of \citet{sutton2018reinforcement}.
We benchmark Algorithm~\ref{alg pesarsa} and its $\beta$-variant (cf. \eqref{eq hallak trace}),
following the same hyperparameter tuning protocol as in Baird's counterexample.  
We use $\gamma = 0.99$ and $i(s) = 1$.
The target policy is a softmax policy with temperature $\tau = 0.01$.
The behavior policy is a $\epsilon$-softmax policy with $\epsilon = 0.95$ and $\tau = 1$.
In other words,
at each time step,
with probability 0.95,
the agent selects an action according to a uniformly random policy;
with probability 0.05,
the agent selects an action according to a softmax policy with temperature $\tau = 1$.
We grant large randomness to the behavior policy to enlarge the off-policyness of the problem,
making it more challenging.
We evaluate the agent every $5 \times 10^3$ steps during the training process for 10 episodes and report the averaged undiscounted episodic return.
Figure~\ref{fig cartpole expt} (Left) investigates the effect of different truncation length.
We recall that the learning rate $\alpha$ is tuned from $\Lambda_\alpha$ maximizing the evaluation return at the end of the training.
With $n=\infty$ (i.e., no truncation),
the agent barely learns anything.
With $n=0$ (i.e., naive off-policy expected SARSA without followon trace),
the agent reaches a reasonable performance level but using $n=4$ performs better.
Using $n=2$ performs better than using $n=4$
in the middle of the training but the performance drops near the end of the training.
We conjecture that this may suggest that a truncation length of 2 is not enough to stabilize the off-policy training in the tested problem.
We note that being able to achieve a reasonable performance with $n=0$ does not mean there is no stability issue with $n=0$,
since divergence to infinity and failing to learn at all is not the only consequence of instability.
For example,
the iterates can also chatter in a bounded region \citep{gordon1996chattering,gordon2001reinforcement},
which might be accountable for the early plateau of the curve with $n=0$.
Increasing $n$ improves stability and might help escape from the early plateau.
Figure~\ref{fig cartpole expt}~(Right) further investigates the soft truncation using~\eqref{eq hallak trace}.
We recall that $\beta$ is tuned from $\qty{0.1, 0.2, 0.4, 0.8}$.
Using the soft truncation with $\beta = 0.2$ performs similar to using the hard truncation with $n=4$.
It can, however, 
be computed that the data points of the curve with $\beta=0.2$ has an average variance around $1.8 \times 10^4$ while that of $n=4$ is around $7 \times 10^3$.
This suggests that our proposed hard truncation might be a better option for variance reduction than the existing soft truncation for the tested problem.

\begin{figure}[h]
  \centering
  \includegraphics[width=0.4\textwidth]{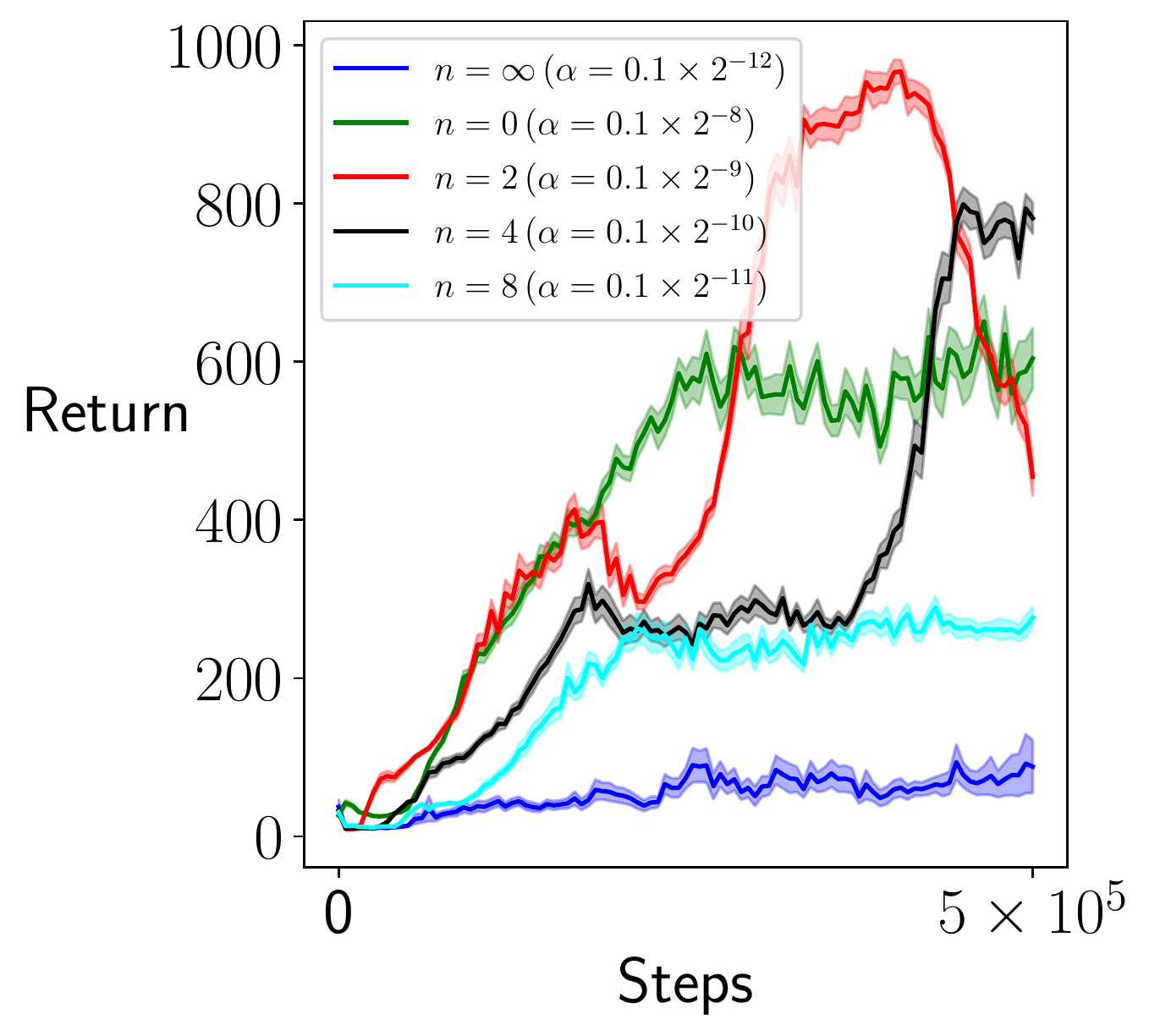}
  \includegraphics[width=0.4\textwidth]{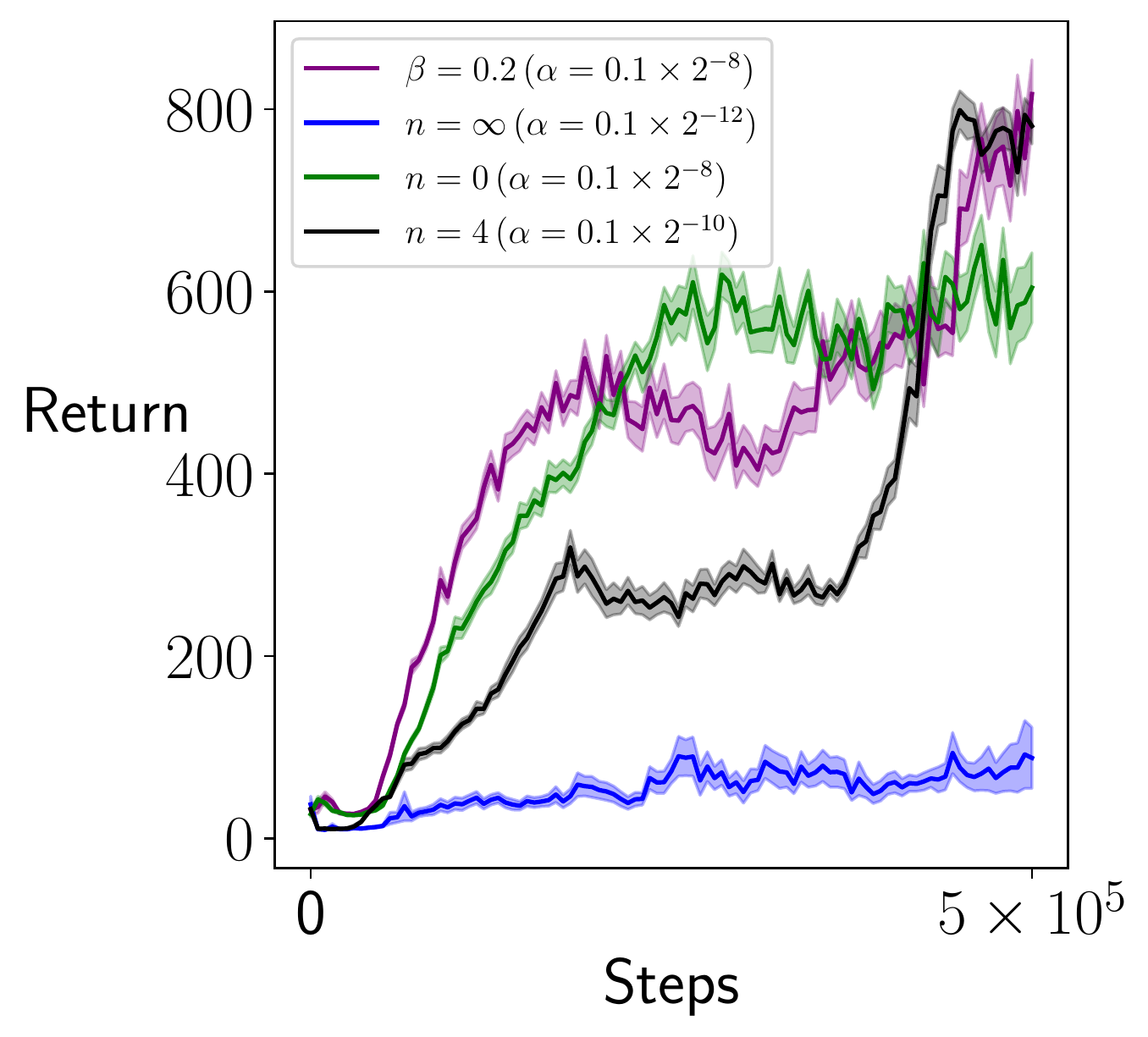}
  \caption{\label{fig cartpole expt} Truncated Emphatic Expected SARSA and its $\beta$-variant in the CartPole domain.}
\end{figure}

\section{Conclusion}
In this paper,
we addressed the two open problems in emphatic TD methods simultaneously by using truncated followon traces.
Our analysis is limited to ETD(0) but the extension to ETD($\lambda$) is straightforward,
which we leave for future work.
The idea of using truncated traces as a variance reduction technique can also be applied to other trace-based off-policy RL algorithms,
e.g., GTD($\lambda$) \citep{maei2011gradient},
and other variants of followon traces, e.g., ETD($\lambda, \beta$) and NETD,
which we also leave for future work.
In this paper,
we mainly focused on value-based methods.
A possibility for future work is to equip followon-trace-based actor-critic algorithms (e.g., \citealt{imani2018off,zhang2019provably}) with the truncated followon trace.
Further,
similar to the canonical approximate value iteration, 
bounding the performance of the fixed points of emphatic approximate value iteration also remains an open problem. 
In this paper,
we restricted our empirical study to
linear function approximation.
Empirically investigating truncated emphatic TD methods with large neural networks like \citet{jiang2021} is also a possibility for future work.   

\acks{The authors thank the action editor and the anonymous reviewers for their insightful feedback. 
SZ is generously funded by the Engineering and Physical Sciences Research Council (EPSRC).
This project has received funding from the European Research Council under the European Union’s Horizon 2020 research and innovation programme (grant agreement number 637713). The experiments were made possible by a generous equipment grant from NVIDIA.}

\appendix

\section{ODE-Based Convergent Results}

\subsection{Proposition 4.8 of \citet{bertsekas1996neuro}}
\label{sec ndp}
Consider the iterates $\qty{w_t}$ evolving in $\R^K$ defined as
\begin{align}
  w_{t+1} \doteq w_t + \alpha_t \left(A(O_t)w_t + b(O_t)\right),
\end{align}
where $\qty{O_t}$ denote a Markov chain in a space $\fO$, 
$\qty{\alpha_t}$ is a sequence of learning rates, 
$A$ and $b$ are functions from $\fO$ to $\R^{K \times K}$ and $\R^K$ respectively.

\begin{assumption}
  \label{assu ndp lr}
  $\qty{\alpha_t}$ is a deterministic, positive, nonincreasing sequence such that 
  \begin{align}
    \sum_t \alpha_t = \infty, \sum_t \alpha_t^2 < \infty.
  \end{align}
\end{assumption}
\begin{assumption}
\label{assu ndp ergodic}
  The chain $\qty{O_t}$ adopts a unique invariant distribution denoted by $d_\fO$.
\end{assumption}
\begin{assumption}
  \label{assu ndp nd}
  The matrix $\bar A \doteq \E_{O_t \sim d_\fO}[A(O_t)]$ is n.d..
\end{assumption}
\begin{assumption}
  \label{assu ndp bounded}
  $\sup_{o \in \fO} \norm{A(o)} < \infty, \sup_{o \in \fO} \norm{b(o)} < \infty$
\end{assumption}
\begin{assumption}
  \label{assu ndp mixing}
  There exist constants $C_0 > 0$ and $\kappa \in (0, 1)$ such that
  \begin{align}
    \norm{\E \left[ A(O_t)\right] - \bar A}  \leq C_0 \kappa^t, \\
    \norm{\E \left[ b(O_t)\right] - \bar b}  \leq C_0 \kappa^t,
  \end{align}
  where $\bar b \doteq \E_{O_t \sim d_\fO}[b(O_t)]$.
\end{assumption}

\begin{theorem}
  \label{thm ndp}
  Let Assumptions \ref{assu ndp lr} - \ref{assu ndp mixing} hold. Then
  \begin{align}
    \lim_{t\to \infty} w_t = \bar A^{-1}\bar b \qq{a.s..}
  \end{align}
\end{theorem}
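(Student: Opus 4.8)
The plan is to establish convergence via the ODE method combined with a supermartingale (Robbins--Siegmund) argument, treating the correlated noise through the mixing rate in Assumption~\ref{assu ndp mixing}. Since $\bar A$ is n.d.\ (Assumption~\ref{assu ndp nd}), it is invertible, so the affine ODE $\dot w = \bar A w + \bar b$ has a unique equilibrium $w^*$ solving $\bar A w^* + \bar b = 0$; moreover, with the Lyapunov function $V(w) \doteq \norm{w - w^*}^2$ one computes $\frac{d}{dt} V = (w - w^*)^\top (\bar A + \bar A^\top)(w - w^*) \le -2\lambda \norm{w - w^*}^2$ along the ODE, so $w^*$ is globally asymptotically stable. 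The target is to show the discrete iterates track this ODE and hence $w_t \to w^*$ a.s.

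First I would rewrite the recursion in terms of the error $e_t \doteq w_t - w^*$. Using $\bar A w^* + \bar b = 0$,
\begin{align}
e_{t+1} = e_t + \alpha_t \bar A e_t + \alpha_t \xi_t, \qquad \xi_t \doteq \left(A(O_t) - \bar A\right) w_t + \left(b(O_t) - \bar b\right).
\end{align}
The difficulty is that $\xi_t$ is not a martingale-difference sequence, because $\qty{O_t}$ is Markovian. To handle this I would invoke the geometric mixing of Assumption~\ref{assu ndp mixing}: it guarantees that the Poisson equation associated with $\qty{O_t}$ (in the sense of \citealt{DBLP:books/sp/BenvenisteMP90}) admits a bounded solution, which lets me decompose $\xi_t$ into a martingale difference plus a telescoping difference plus a remainder of order $\alpha_t$. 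Because $\qty{\alpha_t}$ is nonincreasing with $\sum_t \alpha_t^2 < \infty$ (Assumption~\ref{assu ndp lr}), the telescoping and remainder contributions are summable once weighted by $\qty{\alpha_t}$, and the martingale part converges by the martingale convergence theorem.

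Next I would run the Lyapunov estimate on the discrete iteration. Expanding $V(e_{t+1})$ and using $e_t^\top \bar A e_t \le -\lambda \norm{e_t}^2$ together with the uniform bounds $\sup_o \norm{A(o)} < \infty$ and $\sup_o \norm{b(o)} < \infty$ from Assumption~\ref{assu ndp bounded} gives
\begin{align}
V(e_{t+1}) \le V(e_t) - 2\lambda \alpha_t \norm{e_t}^2 + 2\alpha_t e_t^\top \xi_t + \fO(\alpha_t^2)\left(1 + \norm{e_t}^2\right).
\end{align}
Substituting the noise decomposition turns $2\alpha_t e_t^\top \xi_t$ into a martingale-difference term plus terms that are summable a.s. The strictly negative drift $-2\lambda \alpha_t \norm{e_t}^2$, together with $\sum_t \alpha_t = \infty$, then forces $V(e_t) \to 0$; formally this is the Robbins--Siegmund supermartingale convergence theorem applied after the noise has been absorbed. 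This yields $w_t \to w^*$ a.s., the claimed limit (as in Theorem~\ref{thm etd convergence}, $w^*$ is the unique solution of $\bar A w + \bar b = 0$).

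The main obstacle is taming the Markovian, non-martingale noise $\xi_t$: for i.i.d.\ observations $\xi_t$ would already be a martingale difference and the supermartingale argument would apply directly, but the temporal correlation of $\qty{O_t}$ forces the use of the geometric mixing rate to control the conditional bias $\E[\xi_t \mid \mathcal{F}_{t-\tau}]$, whose geometric decay in $\tau$ is exactly what makes the telescoping decomposition summable. A secondary technical point is establishing almost-sure boundedness of $\qty{w_t}$ without assuming it; this follows because the n.d.\ drift is contractive once $\alpha_t$ is small while $A$ and $b$ are uniformly bounded, so the iterates cannot escape to infinity.
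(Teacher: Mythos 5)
This statement is quoted verbatim from Proposition 4.8 of \citet{bertsekas1996neuro} ``for completeness,'' and the paper supplies no proof of it; there is therefore no in-paper argument to compare against, only the cited source. Your sketch follows essentially the same route as that source: identify the equilibrium of the affine ODE $\dot w = \bar A w + \bar b$, use the negative definiteness of $\bar A$ to get a strict Lyapunov drift for $V(w) = \norm{w - w^*}^2$, decompose the Markovian noise via the Poisson equation whose solvability is underwritten by the geometric mixing in Assumption~\ref{assu ndp mixing}, and close with a Robbins--Siegmund supermartingale argument using $\sum_t \alpha_t^2 < \infty$ and $\sum_t \alpha_t = \infty$. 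As a sketch this is sound.

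Two points deserve attention. First, the equilibrium you correctly derive from $\bar A w^* + \bar b = 0$ is $w^* = -\bar A^{-1}\bar b$, whereas the theorem as printed states the limit is $\bar A^{-1}\bar b$; the minus sign is evidently a typo in the statement (compare Theorem~\ref{thm etd convergence}, where $w_{*,n} \doteq -A_n^{-1}b_n$), but you should not have silently identified $-\bar A^{-1}\bar b$ with ``the claimed limit'' without flagging the discrepancy. Second, the almost-sure boundedness of $\qty{w_t}$ is where most of the technical work in the actual proof lives, and your one-sentence dismissal underplays it: you cannot first assume boundedness to make the telescoping/remainder terms summable and then ``establish'' boundedness afterwards. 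In the linear setting with $\sup_o \norm{A(o)}$ and $\sup_o \norm{b(o)}$ finite the circularity is resolvable---one runs the same Lyapunov recursion on $\norm{w_t}^2$ directly, where the quadratic noise terms are $\fO(\alpha_t^2)(1+\norm{w_t}^2)$ and the drift is eventually dominant, together with a Gronwall-type bound over the mixing window---but that argument needs to be carried out before the Poisson-equation bounds can be applied with constants independent of the trajectory. With those two repairs the proposal matches the standard proof of the cited proposition.
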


\subsection{Theorem 13 of \citet{DBLP:books/sp/BenvenisteMP90}}
Consider the iterates $\qty{w_t}$ evolving in $\R^K$ defined as 
\begin{align}
\label{eq aasa iterates}
w_{t+1} \doteq w_t + \alpha_t H(w_t, O_{t+1}),
\end{align}
where $\qty{O_t \in \R^L}$ are random variables, $\qty{\alpha_t}$ is a sequence of learning rates, $H$ is a function from $\R^K \times \R^L$ to $\R^K$.
We use $\fF_t$ to denote the $\sigma$-field generated by $\qty{w_0, O_0, O_1, \dots, O_t}$ and make the following assumptions:
\begin{assumption}
\label{assu aasa lr}
$\qty{\alpha_t}$ is a deterministic, positive, nonincreasing sequence such that 
\begin{align}
\sum_t \alpha_t = \infty, \sum_t \alpha_t^2 < \infty.
\end{align}
\end{assumption}
\begin{assumption}
\label{assu aasa markov}
There exists a family $\qty{P_w \mid w \in \R^L}$ of parameterized transition probabilities $P_w$ on $\R^L$ such that for any $B \in \fB(\R^K)$,
\begin{align}
\Pr(O_{t+1} \in B | \fF_t) = P_{w_t}(O_t, B).
\end{align}
Additionally, for any function $f$ defined on $\R^L$,
we define $(P_w f)(o) \doteq \int f(y)P_w(o, dy)$.
Here $\fB(\cdot)$ denotes the Borel sets.
\end{assumption}
\begin{assumption}
\label{assu aasa bounds}
Let $D$ be an open subset of $\R^K$.
For any compact subset $Q$ of $D$,
there exists constants $C_1, q_1$ (depending on $Q$), 
such that for any $w \in Q$ and any $o$, we have
\begin{align}
\norm{H(w, o)} \leq C_1 (1 + \norm{o}^{q_1}).
\end{align}
\end{assumption}
\begin{assumption}
\label{assu aasa possion}
There exists a function $h: D \to \R^K$,
and for each $w \in D$ a function $\nu_w: \R^L \to \R^K$,
such that
\begin{enumerate}[(i)]
  \item $h$ is locally Lipschitz continuous on $D$
  \item $\nu_w(o) - (P_w \nu_w)(o) = H(w, o) - h(w)$ holds for all $w \in D, o \in \R^L$
  \item for all compact subsets $Q$ of D, there exist constants $C_2, C_3, q_2, q_3$ (depending on $Q$), such that for all $w, w' \in Q, z \in \R^L$,
  \begin{align}
       \norm{\nu_w(o)} &< C_2 (1 + \norm{o}^{q_2}), \\
       \norm{(P_w \nu_w)(o) - (P_{w'}\nu_{w'})(o)} &\leq C_3 \norm{w - w'}(1 + \norm{o}^{q_3}).
  \end{align}
\end{enumerate}
\end{assumption}
\begin{assumption}
\label{assu aasa bounds expectation}
For any compact subset $Q$ of $D$ and any $q > 0$, 
there exists constant $C_4$ (depending on $Q, q$) such that for all $t$, $o \in \R^L, w \in \R^K$,
\begin{align}
\E\left[\mathbb{I}(\qty{w_k \in Q, k \leq t})(1 + \norm{O_{t+1}}^q) \mid O_0 = o, w_0 = w\right] \leq  C_4 (1 + \norm{o}^q),
\end{align}
where $\mathbb{I}$ is the indicator function.
\end{assumption}

\begin{assumption}
\label{assu aasa lyapunov}
There exist a function $U \in \mathcal{C}^2(\R^K)$ and $w_* \in D$ such that
\begin{enumerate}[(i)]
  \item $U(w) \to C \leq +\infty$ if $w \to \partial D$ or $\norm{w} \to \infty$ 
  \item $U(w) < C$ for all $w \in D$
  \item $U(w) \geq 0$, where the equality holds i.f.f. $w = w_*$
  \item $\indot{\dv{U(w)}{w}}{h(w)} \leq 0$ for all $w \in D$, where the equality holds i.f.f. $w = w_*$.
\end{enumerate}
\end{assumption}
\begin{theorem}
\label{thm aasa}
(Theorem 13 of \citet{DBLP:books/sp/BenvenisteMP90} (p. 236))
Let Assumptions \ref{assu aasa lr}~-~\ref{assu aasa lyapunov} hold.
For any compact $Q \subset D$, there exist constants $C_0, q_0$ such that for all $w \in Q, o \in \R^L$,
the iterates $\qty{w_t}$ generated by \eqref{eq aasa iterates} satisfy
\begin{align}
\Pr( \lim_{t \to \infty} w_{t} = w_* \mid O_0 = o, w_0 = w) \geq 1 - C_0(1 + \norm{o}^{q_0}) \sum_{t=0}^\infty \alpha_t^2.
\end{align}
\end{theorem}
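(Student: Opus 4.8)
The plan is to reconstruct the classical \emph{ODE method for Markovian stochastic approximation}: since Theorem~\ref{thm aasa} is verbatim Theorem 13 of \citet{DBLP:books/sp/BenvenisteMP90}, in the paper it is invoked by citation, but the argument behind it is the standard Poisson-equation decomposition. The essential difficulty is that the increment $H(w_t, O_{t+1})$ is driven by a Markov chain whose transition kernel $P_{w_t}$ depends on the slowly drifting iterate, so the noise is neither i.i.d.\ nor a martingale difference. The device that breaks this correlation is precisely the Poisson solution $\nu_w$ furnished by Assumption~\ref{assu aasa possion}(ii), namely $\nu_w - P_w \nu_w = H(w, \cdot) - h(w)$.

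First I would center the recursion~\eqref{eq aasa iterates} on the mean field, writing $w_{t+1} = w_t + \alpha_t h(w_t) + \alpha_t \epsilon_{t+1}$ with $\epsilon_{t+1} \doteq H(w_t, O_{t+1}) - h(w_t)$. Evaluating the Poisson identity at $O_{t+1}$ gives $\epsilon_{t+1} = \nu_{w_t}(O_{t+1}) - (P_{w_t}\nu_{w_t})(O_{t+1})$, and adding and subtracting $(P_{w_t}\nu_{w_t})(O_t)$ splits $\epsilon_{t+1}$ into a martingale-difference term $M_{t+1} \doteq \nu_{w_t}(O_{t+1}) - (P_{w_t}\nu_{w_t})(O_t)$, which satisfies $\E[M_{t+1}\mid\fF_t]=0$ by the Markov property (Assumption~\ref{assu aasa markov}), plus a one-step difference $(P_{w_t}\nu_{w_t})(O_t) - (P_{w_t}\nu_{w_t})(O_{t+1})$. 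An Abel summation of $\sum_t \alpha_t \epsilon_{t+1}$ against the monotone sequence $\qty{\alpha_t}$ then reorganizes the difference part into three controllable pieces: vanishing boundary terms $\alpha_T (P_{w_T}\nu_{w_T})(\cdot)$, a step-size-variation term weighted by $\alpha_t - \alpha_{t-1}$, and a drift-correction term $(P_{w_t}\nu_{w_t})(O_t) - (P_{w_{t-1}}\nu_{w_{t-1}})(O_t)$ coming from the re-parameterization of the kernel at each step.

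Next I would bound these pieces on the event that the iterates stay in a fixed compact $Q \subset D$. The martingale $\sum_t \alpha_t M_{t+1}$ is handled by Doob's maximal inequality together with $\sum_t \alpha_t^2 < \infty$ (Assumption~\ref{assu aasa lr}) and the polynomial moment control $\norm{\nu_{w_t}(O_{t+1})} < C_2(1+\norm{O_{t+1}}^{q_2})$ of Assumption~\ref{assu aasa possion}(iii), whose expectation is tamed by the conditional moment bound of Assumption~\ref{assu aasa bounds expectation}; the boundary and step-variation terms vanish because $\alpha_T \to 0$ and $\sum_t |\alpha_t - \alpha_{t-1}|$ telescopes; the drift-correction term is controlled by the Lipschitz estimate $\norm{(P_{w_t}\nu_{w_t})(O_t) - (P_{w_{t-1}}\nu_{w_{t-1}})(O_t)} \leq C_3 \norm{w_t - w_{t-1}}(1+\norm{O_t}^{q_3})$ of Assumption~\ref{assu aasa possion}(iii) combined with $\norm{w_t - w_{t-1}} = \alpha_{t-1}\norm{H(w_{t-1},O_t)} = \fO(\alpha_{t-1})$ via the growth bound of Assumption~\ref{assu aasa bounds}, which is again summable against $\qty{\alpha_t}$. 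Crucially, every one of these bounds requires confinement to $Q$, which is why Assumption~\ref{assu aasa bounds expectation} is conditioned on $\qty{w_k \in Q, k \leq t}$ and why the final estimate carries the initial-state factor $(1+\norm{o}^{q_0})$.

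Finally I would close with the Lyapunov function $U$ of Assumption~\ref{assu aasa lyapunov}. A second-order Taylor expansion along the recursion yields $U(w_{t+1}) \leq U(w_t) + \alpha_t \indot{\dv{U(w_t)}{w}}{h(w_t)} + (\text{noise}) + \fO(\alpha_t^2)$, and the descent property $\indot{\dv{U(w)}{w}}{h(w)} \leq 0$ with equality only at $w_*$ turns $U(w_t)$ into an almost-supermartingale; the Robbins--Siegmund almost-supermartingale convergence theorem then forces $U(w_t) \to 0$, hence $w_t \to w_*$, on the confinement event. A stopping-time argument shows that the iterates started in $Q$ leave this event with probability at most $C_0(1+\norm{o}^{q_0})\sum_t \alpha_t^2$, giving exactly the bound in the conclusion of Theorem~\ref{thm aasa}. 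The main obstacle is the circularity between confinement and noise control: the moment bounds that tame the noise are only valid while the iterates remain in $Q$, yet confinement to $Q$ is itself what must be proved from those bounds. This is resolved by running the whole estimate up to the first exit time from $Q$, where the square-summability $\sum_t \alpha_t^2 < \infty$ is what converts the accumulated noise into the small failure probability.
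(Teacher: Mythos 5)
The paper offers no proof of this statement: it is Theorem 13 of \citet{DBLP:books/sp/BenvenisteMP90}, restated verbatim in the appendix and invoked purely by citation. Your sketch is a correct reconstruction of the standard argument behind that cited theorem --- the Poisson-equation decomposition of the Markovian noise into a martingale-difference part plus telescoping/drift-correction terms, the perturbed-Lyapunov/Robbins--Siegmund step, and the stopping-time control of the exit probability --- which is exactly the route taken in the reference, so there is nothing to compare against within the paper itself.
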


\subsection{Theorem 13 of \citet{DBLP:books/sp/BenvenisteMP90} with a Finite Chain}
Consider the iterates $\qty{w_t}$ evolving in $\R^K$ defined as 
\begin{align}
\label{eq aasa finite iterates}
w_{t+1} \doteq w_t + \alpha_t \bar H(w_t, O_{t+1}),
\end{align}
where $\qty{O_t}$ are random variables evolving in a \emph{finite} space $\fO$, $\qty{\alpha_t}$ is a sequence of learning rates, $\bar H$ is a function from $\R^K \times \fO$ to $\R^K$.
Without loss of generality, let $\fO \doteq \qty{1, 2, \dots, N} \subset \R$.
We make the following assumptions.
\begin{assumption}
\label{assu aasa finite lr}
$\qty{\alpha_t}$ is a deterministic, positive, nonincreasing sequence such that 
\begin{align}
\sum_t \alpha_t = \infty, \sum_t \alpha_t^2 < \infty.
\end{align}
\end{assumption}
\begin{assumption}
\label{assu aasa finite markov}
There exists a family $\qty{\bar P_w \in \R^{N \times N} \mid w \in \R^K}$ of parameterized transition matrices such that 
the random variables $\qty{O_t}$ evolve according to
\begin{align}
O_{t+1} \sim \bar P_{w_t}(O_t, \cdot)
\end{align}
Let $\Lambda_w$ be the closure of $\qty{\bar P_w \mid w \in \R^K}$, for any $P \in \Lambda_w$, the Markov chain in $\fO$ induced by the transition matrix $P$ is ergodic.
We use $d_P$ to denote the invariant distribution of the chain induced by $P$.
In particular, $d_w$ denotes the invariant distribution of the chain induced by $\bar P_w$.
We define
\begin{align}
h(w) \doteq \sum_{o \in \fO} d_w(o) \bar H(w, o).
\end{align}
\end{assumption}
\begin{assumption}
\label{assu aasa finite continuity}
$\bar P_w$ is Lipschitz continuous in $w$. 
For any compact $Q \subset \R^K$ and any $o \in \fO$, $\bar{H}(w, o)$ is Lipschitz continuous in $w$ on $Q$.
\end{assumption}
\begin{assumption}
\label{assu aasa finite lyapunov}
There exist function $U \in \mathcal{C}^2(\R^K)$ and $w_* \in \R^K$ such that
\begin{enumerate}[(i)]
  \item $U(w) \to \infty$ when $\norm{w} \to \infty$ 
  \item $U(w) < \infty$ for all $w \in \R^K$
  \item $U(w) \geq 0$, where the equality holds i.f.f. $w = w_*$
  \item $\indot{\dv{U(w)}{w}}{h(w)} \leq 0$ for all $w \in \R^K$, where the equality holds i.f.f. $w = w_*$.
\end{enumerate}
\end{assumption}
\begin{corollary}
\label{cor aasa finite}
Under Assumptions \ref{assu aasa finite lr} - \ref{assu aasa finite lyapunov},
for any compact set $Q \subset \R^K$,
there exists constants $C_0$ (depending on $Q$) such that for all $w \in Q, o \in \fO$,
the iterates $\qty{w_t}$ generated by \eqref{eq aasa finite iterates} satisfy
\begin{align}
\Pr(\lim_{t\to\infty} w_t = w_* \mid O_0 = o, w_0 = w) \geq 1 - C_0 \sum_{t=0}^\infty \alpha_t^2.
\end{align}
\end{corollary}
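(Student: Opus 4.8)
The plan is to deduce Corollary~\ref{cor aasa finite} from the general result Theorem~\ref{thm aasa} by verifying, in the finite-chain setting, each of its hypotheses~\ref{assu aasa lr}--\ref{assu aasa lyapunov}, and then simplifying the resulting bound. To embed the finite chain into the framework of Theorem~\ref{thm aasa} I would take $L \doteq 1$, identify $\fO = \qty{1, \dots, N}$ with a subset of $\R$, set $H \doteq \bar H$, and extend both $\bar H$ and the transition kernels to all of $\R$ arbitrarily (e.g.\ $\bar H(w, o) \doteq 0$ and $P_w(o, \cdot)$ a point mass for $o \notin \fO$). Since the chain started in $\fO$ remains in $\fO$ almost surely, this extension does not affect the iterates. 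Assumption~\ref{assu aasa lr} is then identical to~\ref{assu aasa finite lr}, and Assumption~\ref{assu aasa markov} holds with $P_w(o, B) = \sum_{o' \in B} \bar P_w(o, o')$ by the definition of the chain in~\ref{assu aasa finite markov}; the associated drift $h$ coincides with the one defined there.

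Several of the remaining hypotheses become trivial because $\fO$ is finite. Assumption~\ref{assu aasa bounds} holds because, for any compact $Q$, each $\bar H(\cdot, o)$ is Lipschitz hence bounded on $Q$ and there are only finitely many $o$, so $\norm{\bar H(w, o)}$ is uniformly bounded on $Q \times \fO$; the growth form $1 + \norm{o}^{q_1}$ is immediate since $\norm{o} \le N$. Assumption~\ref{assu aasa bounds expectation} is also immediate, as $1 + \norm{O_{t+1}}^q \le 1 + N^q$ is a deterministic constant. For the Lyapunov hypothesis I would take $D \doteq \R^K$, so that $\partial D = \emptyset$, $C = +\infty$, and the boundary condition in~\ref{assu aasa lyapunov} reduces to $U(w) \to \infty$ as $\norm{w} \to \infty$; the four conditions then coincide with those of~\ref{assu aasa finite lyapunov}.

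The substantive step is Assumption~\ref{assu aasa possion}, the solvability of the Poisson equation $\nu_w(o) - (P_w \nu_w)(o) = \bar H(w, o) - h(w)$ together with the required regularity. The key enabling fact is uniform geometric ergodicity: since every matrix in the compact closure $\Lambda_w$ is ergodic by~\ref{assu aasa finite markov}, a compactness argument yields constants $C > 0$ and $\rho \in (0,1)$ with $\max_o \sum_{o'} \left| \bar P_w^k(o, o') - d_w(o') \right| \le C \rho^k$ uniformly in $w$. This makes the Neumann series $\nu_w(o) \doteq \sum_{k=0}^\infty \left( P_w^k (\bar H(w, \cdot) - h(w)) \right)(o)$ converge and bounds it, giving condition~(ii) by construction (telescoping) and the growth bound in~(iii) trivially (again $\norm{o} \le N$). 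For condition~(i) and the Lipschitz bound in~(iii), I would use that the invariant distribution $d_w$ is Lipschitz in $w$ (the invariant vector of the Lipschitz-continuous, uniformly ergodic family $\bar P_w$), whence $h$ is locally Lipschitz and, together with the uniform mixing, $P_w \nu_w$ is Lipschitz in $w$ with a bound of the form $C_3 \norm{w - w'}(1 + \norm{o}^{q_3})$.

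Having verified~\ref{assu aasa lr}--\ref{assu aasa lyapunov}, Theorem~\ref{thm aasa} yields
\begin{align}
\Pr(\lim_{t\to\infty} w_t = w_* \mid O_0 = o, w_0 = w) \ge 1 - C_0(1 + \norm{o}^{q_0}) \sum_{t=0}^\infty \alpha_t^2.
\end{align}
Since $o \in \fO$ and hence $1 + \norm{o}^{q_0} \le 1 + N^{q_0}$, I would absorb this bounded factor into a redefined constant $C_0$ (depending on $Q$) to obtain the stated clean bound. I expect the main obstacle to be the regularity half of Assumption~\ref{assu aasa possion}: establishing uniform geometric ergodicity over the closure $\Lambda_w$ and the Lipschitz continuity of $d_w$ (and therefore of $h$ and of $P_w \nu_w$) in $w$, which is precisely where the compactness of $\Lambda_w$ and the uniform-ergodicity hypothesis do the real work.
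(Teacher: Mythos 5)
Your proposal is correct and follows essentially the same route as the paper: embed the finite chain into Theorem~\ref{thm aasa} with $L=1$, verify the boundedness and moment conditions trivially from finiteness of $\fO$, and resolve the Poisson equation via the uniformly ergodic family $\bar P_w$ (your Neumann series $\sum_k P_w^k(\bar H(w,\cdot)-h(w))$ is exactly the fundamental-matrix/MRP-bias construction the paper uses, since $\bar H(w,\cdot)-h(w)$ has zero mean under $d_w$), with the Lipschitz regularity coming from the Lipschitz continuity of $d_w$ and compactness of $\Lambda_w$. The only cosmetic difference is the extension outside $\fO$: the paper sets $H(w,o)\doteq h(w)$ and $\nu_w(o)\doteq 0$ there so that Assumption~\ref{assu aasa possion}(ii) holds identically on all of $\R$, whereas your choice $\bar H(w,o)\doteq 0$ would need the same small adjustment to make the Poisson identity hold off $\fO$.
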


\begin{proof}
We proceed by expressing \eqref{eq aasa finite iterates} in the form of \eqref{eq aasa iterates} and invoking Theorem \ref{thm aasa}.
Let
\begin{align}
H(w, o) &\doteq \begin{cases}
\bar H(w, o) & o \in \fO \\
h(w) & o \notin \fO
\end{cases}.
\end{align}
Then \eqref{eq aasa finite iterates} can be rewritten as
\begin{align}
w_{t+1} \doteq w_t + \alpha_t H(w_t, O_{t+1}),
\end{align}
which has the same form as \eqref{eq aasa iterates}.
Here the $L$ in $\R^L$ is 1 and we consider $D$ to be $\R^K$.

Assumption~\ref{assu aasa lr} is identical to Assumption~\ref{assu aasa finite lr}.
Assumption~\ref{assu aasa lyapunov} is implied by Assumption~\ref{assu aasa finite lyapunov} via considering $C = \infty$.

To verify Assumption \ref{assu aasa markov},
let
\begin{align}
P_w(o, B) &\doteq \begin{cases}
\sum_{o'} \delta_{o'}(B) \bar P_w(o, o') & o \in \fO \\
\mathcal{N}(B) &  o \notin \fO
\end{cases},
\end{align}
where $\delta_{o'}(B)$ is the Dirac measure,
$\mathcal{N}(\cdot)$ denotes the normal distribution (we can use any well-defined distribution on $\R$ here).
Then Assumption \ref{assu aasa markov} follows from Assumption \ref{assu aasa finite markov}.

We now verify Assumption \ref{assu aasa bounds}.
From Assumption \ref{assu aasa finite continuity} 
and the finiteness of $\fO$,
for any compact $Q$, $\bar H(w, o)$ is bounded on $Q$.
So $h(w)$ is also bounded on $Q$.
Then the boundedness of $H(w, o)$ on $Q$ follows immediately.

We now verify Assumption \ref{assu aasa possion}(i).
First, for any compact $Q \subset \R^K$,
$\bar H(w, o)$ is Lipschitz continuous in $w$ and bounded on $Q$.
$d_w(o)$ is apparently bounded.
By Assumption \ref{assu aasa finite markov}, 
for any $P \in \Lambda_w$,
the chain induced by $P$ is ergodic.
It can be easily proved (see, e.g., Lemma 9 of \citet{zhang2021breaking}) that $d_w$ is also Lipschitz continuous in $w$.
The Lipschitz continuity of $h(w)$ on $Q$ then follows immediately
from the fact that the product of two bounded Lipschitz functions are still bounded and Lipschitz.
Since we are free to choose any $Q$,
$h(w)$ is locally Lipschitz continuous in $\R^K$.

We verify Assumption \ref{assu aasa possion}(ii) by constructing auxiliary Markov Reward Processes (MRPs) and using standard properties of MRPs.  
To construct the $i$-th MRP $(i=1, \dots, K)$,
let $H_{w, i}$ denote a vector in $\R^N$ whose $i$-th element is $H_i(w, o)$, the $i$-th element of $H(w, o)$.
For any $w \in \R^K, o \in \fO$,
we define a vector $\bar \nu_w(o)$ in $\R^K$ by defining its $i$-th element $\bar \nu_{w, i}(o)$ as
\begin{align}
\bar \nu_{w, i}(o) \doteq \E\left[\sum_{k = 0}^\infty [H_{w, i}(O_k) - h_i(w)] \mid O_0 = o, O_{k+1} \sim P_w(O_{k}, \cdot)\right],
\end{align}
where $h_i(w)$ is the $i$-the element of $h(w)$.
By definition,
$\bar \nu_{w, i}$ is the bias vector of the MRP induced by $\bar P_w$ in $\fO$ with the reward vector being $H_{w, i}$.
Since $\bar P_w$ induces an ergodic chain under Assumption \ref{assu aasa finite markov},
$\bar \nu_w$ is always well defined.
Moreover, $h_i(w)$ is the gain of this MRP.
It follows from Chapter 8.2.1 of \citet{puterman2014markov} that for any $w \in \R^K$ and $o \in \fO$,
\begin{align}
\label{eq proof tmp eq1}
\bar \nu_{w, i}(o) &= H_{w, i}(o) - h_i(w) + \sum_{o'} \bar \nu_{w, i}(o') \bar P_w(o, o') \\
\bar \nu_{w, i} &= H_{\bar P_w} H_{w, i},
\end{align}
where $H_P \doteq (I - P + 1 d_P^\top)^{-1}(I - 1 d_P^\top)$ is the fundamental matrix of the chain induced by a transition matrix $P$.
Define 
\begin{align}
\nu_w(o) &\doteq \begin{cases}
\bar \nu_w(o) & o \in \fO \\
0 & o \notin \fO
\end{cases}.
\end{align}
It is then easy to verify that 
for $o \in \fO$,
\begin{align}
(P_w \nu_w)(o) = \int \nu_w(y) P_w(o, dy) = 
\int \nu_w(y) \sum_{o'} \delta_{o'}(dy) \bar P_w(o, o') = \sum_{o'} \nu_w(o') \bar P_w(o, o').
\end{align}
For $o \notin \fO$,
$(P_w \nu_w)(o) = 0$.
For $o \in \fO$, 
Assumption \ref{assu aasa possion}(ii) holds since it is just \eqref{eq proof tmp eq1}.
For $o \notin \fO$, Assumption \ref{assu aasa possion}(ii) holds as well since since both LHS and RHS are 0.

We now verify Assumption \ref{assu aasa possion}(iii).
Since $d_P$ is Lipschitz continuous in $P$ for all $P \in \Lambda_w$
and $\Lambda_w$ is compact,
we have $\sup_{P \in \Lambda_w} \norm{(I - P + 1 d_P^\top)^{-1}} < \infty$ by the extreme value theorem.
Using the ergodicity of the chain induced by $P$ and
\begin{align}
  \norm{X^{-1} - Y^{-1}} = \norm{X^{-1}YY^{-1} - X^{-1}XY^{-1}} \leq \norm{X^{-1}}\norm{X-Y}\norm{Y^{-1}},
\end{align}
it is then easy to see that $H_{\bar P_w}$ is bounded and Lipschitz continuous in $w$.
Consequently,
for any compact $Q \subset \R^K, w \in Q, w' \in Q$,
$\nu_w$ is bounded on $Q$ and
\begin{align}
\norm{\bar \nu_{w, i} - \bar \nu_{w', i}} &\leq \norm{H_{\bar P_w} - H_{\bar P_{w'}}}\norm{H_{w, i}} + \norm{H_{\bar P_{w'}}} \norm{H_{w, i} - H_{w', i}} \\
&\leq C_1 \norm{w - w'} + C_2 \norm{w - w'},
\end{align}
where the constant $C_1$ results from the Lipschitz continuity of $H_{\bar P_w}$ and the boundedness of $H(w, o)$ on $Q$,
the constant $C_2$ results from the Lipschitz continuity of $H(w, o)$ on $Q$ and the boundedness of $H_{\bar P_w}$.
Since by Assumption \ref{assu aasa finite continuity},
$\bar P_w$ in Lipschitz continuous in $w$,
the Lipschitz continuity of $P_w \nu_w$ in $Q$ follows immediately,
which completes the verification of
Assumption \ref{assu aasa possion}(iii).

Assumption \ref{assu aasa bounds expectation} is trivial since $\fO$ is finite, which completes the proof.
\end{proof}
\newpage

\section{Proofs}
\subsection{Proofs of Lemmas \ref{lem emphasis expression}}
\label{sec proof of empahsis expression lemma}

\begin{proof}
Let $\tau_j \doteq (s_j, a_j, s_{j-1}, a_{j-1}, \dots s_1, a_1)$, $\Gamma_j \doteq (S_{t-j}, A_{t-j}, \dots, S_{t-1}, A_{t-1})$,
\begin{align}
\label{eq proof tmp mtn}
m_{t, n}(s) &\doteq \E[ F_t^n | S_t = s]  \\
&= \sum_{j=0}^n \gamma^j \E \left[ \rho_{t-j:t-1} i_{t-j} \right | S_t = s] \\
&= \sum_{j=0}^n \gamma^j \sum_{\tau_j \in (\fS \times \fA)^j} \Pr(\Gamma_j = \tau_j | S_t = s) \E[\rho_{t-j:t-1} i_{t-j} | \Gamma_j = \tau_j, S_t = s] \\
\intertext{\hfill (Law of total expectation)}
&= \sum_{j=0}^n \gamma^j \sum_{\tau_j \in (\fS \times \fA)^j} \frac{\Pr(\Gamma_j = \tau_j, S_t = s)}{\Pr(S_t = s)} \E[\rho_{t-j:t-1} i_{t-j} | \Gamma_j = \tau_j, S_t = s] \\
\intertext{\hfill (Bayes' rule)}
&= \sum_{j=0}^n \gamma^j \sum_{\tau_j \in (\fS \times \fA)^j} \frac{\Pr(\Gamma_j = \tau_j, S_t = s)}{\Pr(S_t = s)} i(s_j) \rho(s_{j}, a_{j}) \cdots \rho(s_{1}, a_{1}) \\
&= \sum_{j=0}^n \gamma^j \sum_{\tau_j \in (\fS \times \fA)^j} \frac{\Pr(S_{t-j} = s_j) P_\pi(s_{j}, s_{j-1}) \cdots P_\pi(s_{2}, s_{1}) P_\pi(s_{1}, s)}{\Pr(S_t = s)} i(s_j) \\
&= \sum_{j=0}^n \gamma^j \sum_{s_j} \frac{\Pr(S_{t-j} = s_j) P_\pi^j(s_j, s)}{\Pr(S_t = s)} i(s_j)
\end{align}
Assumption \ref{assu ergodic} implies 
\begin{align}
\lim_{t\to\infty} \Pr(S_t = s) = d_\mu(s).
\end{align}
Consequently,
\begin{align}
m_n(s) &= \lim_{t\to \infty} m_{t, n}(s) \\
&= \sum_{j=0}^n  \gamma^j \sum_{s_j \in \fS} \frac{d_\mu(s_j) P_\pi^j (s_j, s)}{d_\mu(s)} i(s_j).
\end{align}
In a matrix form,
\begin{align}
m_n &= \sum_{j=0}^n \gamma^j D_\mu^{-1}(P_\pi^\top)^j D_\mu i, \\
m &= \lim_{n \to \infty} m_n = D_\mu^{-1}(I - \gamma P_\pi^\top)^{-1} D_\mu i.
\end{align}
\end{proof}

\subsection{Proof of Lemma \ref{lem emphasis bound}}
\label{sec proof of emphasis bound lemma}
\begin{proof}
\begin{align}
m_{n} - m &= \sum_{j=n+1}^\infty \gamma^j D_\mu^{-1}(P_\pi^\top)^j D_\mu i \\
&= D_\mu^{-1} \left(\sum_{j=n+1}^\infty \gamma^j (P_\pi^\top)^j \right) D_\mu i \\
&= D_\mu^{-1} \gamma^{n+1} (P_\pi^\top)^{n+1} (I - \gamma P_\pi^\top)^{-1} D_\mu i \\
&=  \gamma^{n+1} D_\mu^{-1} (P_\pi^\top)^{n+1} D_\mu m ,
\end{align}
implying
\begin{align}
\norm{m_n - m}_1 &\leq \gamma^{n+1} \norm{D_\mu^{-1}}_1 \norm{D_\mu}_1 \norm{m}_1 \qq{(Using $\norm{P_\pi^\top}_1 = \norm{P_\pi}_\infty = 1$)}\\
&= \gamma^{n+1} \frac{d_{\mu, max}}{d_{\mu, min}} \norm{m}_1, \\
\norm{f_n - f}_\infty &\leq \norm{D_\mu}_\infty \norm{m_n - m}_\infty  \leq d_{\mu, max} \norm{m_n - m}_1 \leq d_{\mu, max} \norm{m_n - m}_\infty \\
&= \gamma^{n+1} \frac{d_{\mu, max}^2}{d_{\mu, min}} \norm{m}_1
\end{align}
 
\end{proof}

\subsection{Proof of Lemma \ref{lem nd}}
\label{sec proof of nd}
\begin{proof}
Let $D_{f_n}$ be a diagonal matrix whose diagonal entry is $f_n$
and $D_{f_n - f}$ be a diagonal matrix whose diagonal entry is $f_n - f$.
We have 
\begin{align}
y^\top D_{f_n}(\gamma P_\pi - I) y &= y^\top D_{f_n - f}(\gamma P_\pi - I)y + y^\top D_f (\gamma P_\pi - I) y \\
&\leq \norm{y}^2 \norm{D_{f_n - f}} \norm{\gamma P_\pi - I} +  y^\top D_f (\gamma P_\pi - I) y \\
&\leq \norm{y}^2 \norm{f_n - f}_\infty \norm{\gamma P_\pi - I} +  y^\top D_f (\gamma P_\pi - I) y \\
\end{align}
Similarly,
\begin{align}
y^\top (\gamma P_\pi^\top - I) D_{f_n} y \leq \norm{y}^2 \norm{f_n - f}_\infty \norm{\gamma P_\pi^\top - I} +  y^\top (\gamma P_\pi^\top - I) D_f y.
\end{align}
Combining the above two inequalities together,
we get
\begin{align}
&\frac{1}{2} y^\top \left( D_{f_n}(\gamma P_\pi - I)+ (\gamma P_\pi^\top - I) D_{f_n} \right) y \\
\leq& \norm{f_n - f}_\infty \norm{\gamma P_\pi - I} \norm{y}^2 + \frac{1}{2} y^\top \left( D_f (\gamma P_\pi - I) + (\gamma P_\pi^\top - I) D_f \right)y \\
\intertext{\hfill (Invariance of $\ell_2$ norm under transpose)}
\leq&(\norm{f_n - f}_\infty \norm{\gamma P_\pi - I} -\lambda_{min}) \norm{y}^2 \\
\intertext{\hfill (Eigendecomposition of real symmetric matrices)}
\leq&(\gamma^{n+1} \frac{d_{\mu, max}^2}{d_{\mu, min}} \norm{m}_1 \norm{\gamma P_\pi - I} -\lambda_{min}) \norm{y}^2
\qq{(Lemma \ref{lem emphasis bound})}
\end{align}
As long as the condition \eqref{eq condition of n nd} holds,
the above inequality asserts that $$D_{f_n} (\gamma P_\pi - I) + (\gamma P_\pi^\top - I)D_{f_n}$$ is n.d.,
implying $D_{f_n}(\gamma P_\pi - I)$ is n.d..
This together with Assumption \ref{assu full rank} completes the proof.
\end{proof}

\subsection{Proof of Theorem \ref{thm etd convergence}}
\label{sec proof of etd convergence}
\begin{proof}
Let $O_t \doteq (S_{t-n}, A_{t-n}, \dots, S_t, A_t, S_{t+1})$ be a sequence of random variables generated by Algorithm~\ref{alg etd}.
Let $o_t \doteq (s_{t-n}, a_{t-n}, \dots, s_t, a_t, s_{t+1})$ and define
functions
\begin{align}
A(o_t) &\doteq \left(\sum_{j=0}^n \gamma^j \left(\prod_{k=t-j}^{t-1} \frac{\pi(a_k | s_k)}{\mu(a_k | s_k)} \right) i(s_{t-j}) \right) \frac{\pi(a_t | s_t)}{\mu(a_t | s_t)} x(s_t) \left(\gamma x(s_{t+1})^\top - x(s_t)^\top \right), \\
b(o_t) &\doteq \left(\sum_{j=0}^n \gamma^j \left(\prod_{k=t-j}^{t-1} \frac{\pi(a_k | s_k)}{\mu(a_k | s_k)} \right) i(s_{t-j}) \right) \frac{\pi(a_t | s_t)}{\mu(a_t | s_t)} x(s_t) r(s_t, a_t).
\end{align}
Here $o_t$ is just placeholder for defining $A(o_t)$ and $b(o_t)$.
Then the update for $\qty{w_t}$ in Algorithm \ref{alg etd} can be expressed as
\begin{align}
w_{t+1} = w_t + \alpha_t \left(A(O_t)w_t + b(O_t)\right).
\end{align}
We now proceed to confirming its convergence via verifying 
Assumptions \ref{assu ndp lr} - \ref{assu ndp mixing} thus invoking Theorem \ref{thm ndp}.

Assumption \ref{assu ndp lr} is identical to Assumption \ref{assu lr}.
Assumption \ref{assu ndp ergodic} follows directly from Assumption \ref{assu ergodic}.
And it is easy to see the invariant distribution of $\qty{O_t}$ is 
\begin{align}
\label{eq proof tmp doot}
d_{\fO}(o_t) = d_\mu(s_{t-n}) \mu(a_{t-n} | s_{t-n}) p(s_{t-n+1}|s_{t-n}, a_{t-n}) \cdots p(s_{t+1}|s_t, a_t).
\end{align}
Moreover,
\begin{align}
\bar A \doteq &\E_{O_t \sim d_\fO}\left[A(O_t)\right] \\
=& \E_{O_t \sim d_\fO}\left[F_{t, n} \rho_t x_t (\gamma x_{t+1}^\top - x_t^\top) \right] \\
=& \sum_{s, a, s'} d_\mu(s) \mu(a|s) p(s'|s, a) \E\left[F_{t,n}\rho_t x_t (\gamma x_{t+1}^\top - x_t^\top)\right | S_t = s, A_t = a, S_{t+1} = s'] \\
\intertext{\hfill (Law of total expectation)}
=& \sum_{s, a, s'} d_\mu(s) \pi(a|s) p(s' | s, a) \E\left[F_{t, n} | S_t = s \right] x(s) (\gamma x(s')^\top - x(s)^\top) \\
\intertext{\hfill (Conditional independence and Markov property)}
=& \sum_{s, a, s'} d_\mu(s) \pi(a|s) p(s' | s, a) m_n(s) x(s) (\gamma x(s')^\top - x(s)^\top) 
\qq{\hfill (Using \eqref{eq proof tmp mtn} and \eqref{eq proof tmp doot})} \\
=& \sum_{s, a, s'} f_n(s) \pi(a|s) p(s' | s, a) x(s) (\gamma x(s')^\top - x(s)^\top) \qq{(Definition of $f_n$)} \\
=& X^\top D_{f_n} (\gamma P_\pi - I) X.
\end{align}
In the above equation,
we have abused the notation slightly to use $O_t$ to denote random variables sampled from $d_\fO$.
Similarly, it can be shown that 
\begin{align}
\bar b \doteq \E_{O_t \sim d_\fO}\left[b(O_t)\right] = X^\top D_{f_n} r_\pi.
\end{align}
Lemma \ref{lem nd} confirms that $\bar A$ is n.d., 
verifying Assumption \ref{assu ndp nd}.
Assumption \ref{assu ndp bounded} is obvious since $\ns, \na, n$ are all finite.
Assumption \ref{assu ndp mixing} follows immediately from the geometrically mixing rate of ergodic Markov chain.
For example,
\begin{align}
  \norm{\E[A(O_t)] - \bar A} &= \norm{\sum_{o \in \fO} \left(\Pr(O_t = o) - d_\fO(o) \right)A(o)} \\
  &\leq \max_o \norm{A(o)} \sum_{o} \left|\Pr(O_t = o) - d_\fO(o) \right| \\
  &\leq C_0 \kappa^t,
\end{align}
for some $C_0 > 0$ and $\kappa \in (0, 1)$.
Here the last inequality is a standard result, see,
e.g., Theorem 4.9 of \citet{levin2017markov}.

Note this procedure cannot be used to verify the convergence of the original ETD(0),
where we would need to consider $O_t = (F_t, S_t, A_t)$.
Since $F_t$ involves in $\R$, Assumption \ref{assu ndp bounded} cannot be verified.
\end{proof}

\subsection{Proof of Lemma \ref{lem contraction}}
\label{sec proof of contraction}
\begin{proof}
Since $i(s) > 0$ holds for any $s$
and $P_\pi$ is nonnegative,
from Lemma \ref{lem emphasis expression} it is easy to see for any $n_1 > n_2$,
\begin{align}
  m_{n_1}(s) > m_{n_2}(s)
\end{align}
always holds.
Then by the definition of $f_n$,
\begin{align}
  f_{n_1}(s) > f_{n_2}(s)
\end{align}
holds as well.
In particular, for any $n \geq 1$, 
\begin{align}
\label{eq monotonic f}
f(s) > f_n(s) > f_0(s) = d_\mu(s)i(s) > 0.
\end{align}
For any $v$, we have
\begin{align}
\gamma \norm{P_\pi v}_{{f_n}}^2 &= \gamma \sum_s f_n(s) \left( \sum_{s'} P_\pi(s, s') v(s') \right)^2 \\
&\leq \gamma\sum_s f_n(s) \sum_{s'} P_\pi(s, s') v^2(s') \qq{(Jensen's inequality)} \\
&=  \gamma\sum_{s'} v^2(s') \sum_s f_n(s) P_\pi(s, s') \\
&= v^\top diag(\gamma P_\pi^\top f_n) v \\
&= v^\top diag\left(f_n - (I - \gamma P_\pi^\top) f_n\right) v \\
&= v^\top diag\left(f_n - (I - \gamma P_\pi^\top) f + (I - \gamma P_\pi^\top)(f - f_n)\right) v \\
&= \norm{v}_{f_n}^2 - v^\top diag\left((I - \gamma P_\pi^\top)f\right) v + v^\top diag\left((I - \gamma P_\pi^\top)(f - f_n)\right) v \\
&= \norm{v}_{f_n}^2 - \norm{v}^2_{f_0} + v^\top diag\left((I - \gamma P_\pi^\top)(f - f_n)\right) v \\
\intertext{\hfill (Using $(I - \gamma P_\pi^\top)f = (I - \gamma P_\pi^\top) (I - \gamma P_\pi^\top)^{-1} D_\mu i = f_0$)}
&\leq \norm{v}_{f_n}^2 - \norm{v}^2_{f_0} + \norm{(I - \gamma P_\pi^\top)(f - f_n)}_\infty \norm{v}^2 \\
\intertext{\hfill (Property of $\ell_2$ norm of a diagonal matrix)}
&\leq \norm{v}_{f_n}^2 - \norm{v}^2_{f_0} + \norm{(I - \gamma P_\pi^\top)}_\infty \gamma^{n+1} \frac{d_{\mu, max}^2}{d_{\mu, min}} \norm{m}_1 \norm{v}^2 \qq{(Lemma \ref{lem emphasis bound})} \\
&\leq \norm{v}_{f_n}^2 - \norm{v}^2_{f_0} + \kappa \min_s i(s) d_\mu(s) \norm{v}^2 \qq{(Using \eqref{eq condition of n contraction})} \\
&\leq \norm{v}_{f_n}^2 - \norm{v}^2_{f_0} + \kappa \norm{v}^2_{f_n} \qq{(Using $\min_{s'} i(s')d_\mu(s') \leq f_0(s) < f_n(s)$ )}\\
&= (1 + \kappa) \norm{v}_{f_n}^2 - \norm{v}^2_{f_0}  \\
&= (1 + \kappa) \norm{v}_{f_n}^2 - \sum_s v(s)^2 d_\mu(s)i(s) \\
&= (1 + \kappa) \norm{v}_{f_n}^2 - \sum_s v(s)^2 f(s) \frac{d_\mu(s)i(s)}{f(s)} \\
&\leq (1 + \kappa) \norm{v}_{f_n}^2 - \kappa \sum_s v(s)^2 f(s) \qq{(Definition of $\kappa$ and $f(s) > 0$)} \\
&\leq (1 + \kappa) \norm{v}_{f_n}^2 - \kappa \sum_s v(s)^2 f_n(s) \qq{(Using $f(s) > f_n(s)$)} \\
&= \norm{v}_{f_n}^2
\end{align}
Consequently,
\begin{align}
\norm{\bop_\pi v_1 - \bop_\pi v_2}_{f_n}^2 = \gamma^2 \norm{P_\pi (v_1 - v_2)}_{f_n}^2 \leq \gamma \norm{v_1 - v_2}_{f_n}^2,
\end{align}
implying that $\bop_\pi$ is a $\sqrt{\gamma}$-contraction in $\norm{\cdot}_{f_n}$.
Since $\Pi_{f_n}$ is nonexpansive in $\norm{\cdot}_{f_n}$,
it is easy to see that $\Pi_{f_n} \bop_\pi$ is a $\sqrt{\gamma}$ contraction in $\norm{\cdot}_{f_n}$ as well.

Further,
let $D_{f_n} \doteq diag(f_n)$,
it is easy to verify that
\begin{align}
  \Pi_{f_n} = X (X^\top D_{f_n} X)^{-1} X^\top D_{f_n}.
\end{align}
Consequently, 
\begin{align}
\label{eq proof tmp projection and ab}
  A_n w_{*,n} &= b_n \\
\iff X^\top D_{f_n} (\gamma P_\pi - I)X w_{*, n} &= X^\top D_{f_n} r_\pi \\
\iff X^\top D_{f_n}(r_\pi + \gamma P_\pi Xw_{*, n}) &= X^\top D_{f_n} X w_{*, n} \\
\iff X(X^\top D_{f_n} X)^{-1} X^\top D_{f_n}(r_\pi + \gamma P_\pi Xw_{*,n}) &= X(X^\top D_{f_n} X)^{-1} X^\top D_{f_n} X w_{*, n} \\
\iff \Pi_{f_n}\bop_\pi (Xw_{*, n}) &= Xw_{*, n}.
\end{align}
Then,
\begin{align}
  \norm{Xw_{*, n} - v_\pi}_{f_n}^2 &= \norm{ Xw_{*, n}  - \Pi_{f_n}  v_\pi}_{f_n}^2 + \norm{\Pi_{f_n} v_\pi - v_\pi}_{f_n}^2 \\
  \intertext{\hfill (Pythagorean theorem)}
  &= \norm{ \Pi_{f_n} \bop_\pi (Xw_{*, n})  - \Pi_{f_n} \bop_\pi v_\pi}_{f_n}^2 + \norm{\Pi_{f_n} v_\pi - v_\pi}_{f_n}^2 \\
  &\leq \gamma \norm{Xw_{*, n} - v_\pi}_{f_n}^2 + \norm{\Pi_{f_n} v_\pi - v_\pi}_{f_n}^2.
\end{align}
Rearranging terms completes the proof.
\end{proof}

\subsection{Proof of Lemma \ref{thm fixed point}}
\label{sec proof of fixed point}
\begin{proof}
If \eqref{eq condition of n avi} holds,
then Lemma \ref{lem contraction} implies that for any $u \in \Lambda_M$ and $\pi \in \Lambda_\pi$,
$\pbop{\mu}{\pi}$ is a $\sqrt{\gamma}$-contraction in $\norm{\cdot}_{f_{n, \mu, \pi}}$.
We use $Xw_{n, \mu, \pi}$ to denote its unique fixed point.
Lemma~\ref{lem nd} ensures that $X^\top D_{f_{n, \mu, \pi}} (I - \gamma P_\pi)X$ is p.d..
Similar to \eqref{eq proof tmp projection and ab},
it is easy to verify that
\begin{align}
w_{n, \mu, \pi} = (X^\top D_{f_{n, \mu, \pi}} (I - \gamma P_\pi)X)^{-1} X^\top D_{f_{n, \mu, \pi}} r_\pi,
\end{align}
from which it is easy to see
$w_{n, \mu, \pi}$ is continuous in $\mu$ and $\pi$
since the invariant distribution $d_\mu$ is continuous in $\mu$.

Similar to \citet{de2000existence},
we first define several helper functions.
For any policy $\mu \in \Lambda_M$, $\pi \in \Lambda_\Pi$, and $\eta > 0$,
let
\begin{align}
g_{\mu, \pi}(w) &\doteq X^\top D_{f_{n, \mu, \pi}} (\bop_{\pi} Xw - Xw) \\
&= X^\top D_{f_{n, \mu, \pi}} X (X^\top D_{f_{n, \mu, \pi}} X)^{-1}X^\top D_{f_{n, \mu, \pi}} (\bop_{\pi} Xw - Xw) \\
&= X^\top D_{f_{n, \mu, \pi}} (\pbop{\mu}{\pi}Xw - Xw), \\
g(w) &\doteq X^\top D_{f_{n, \mu_w, \pi_w}} (\bop_{\pi_w} Xw - Xw)\\
&= X^\top D_{f_{n, \mu_w, \pi_w}} (\pbop{\mu_w}{\pi_w} Xw - Xw),\\
z_{\mu, \pi}^\eta(w) &\doteq w + \eta g_{\mu, \pi}(w), \\
z^\eta(w) &\doteq w + \eta g(w).
\end{align}
We have
\begin{align}
z^\eta_{\mu, \pi}(w) &= w \\
\iff g_{\mu, \pi}(w) & = 0\\
\iff X^\top D_{f_{n, \mu, \pi}} \bop_\pi Xw &= X^\top D_{f_{n, \mu, \pi}} Xw \\
\iff X(X^\top D_{f_{n, \mu, \pi}} X)^{-1} X^\top D_{f_{n, \mu, \pi}} \bop_\pi Xw &= X(X^\top D_{f_{n, \mu, \pi}} X)^{-1} X^\top D_{f_{n, \mu, \pi}} Xw \\
\iff \pbop{\mu}{\pi}Xw &= Xw,
\end{align}
i.e., $Xw$ is a fixed point of $\pbop{\mu}{\pi}$ if and only if $w$ is a fixed point of $z_{\mu, \pi}^\eta$.
With the same procedure,
we can also show 
\begin{align}
  z^\eta(w) = w \iff \pbop{\mu_w}{\pi_w} (Xw) = Xw.
\end{align}
This suggests that to study the fixed points of emphatic approximate value iteration is to study the fixed points of $z^\eta$ with any $\eta > 0$.
To this end,
we first study $z_{\mu, \pi}^\eta$ with the following lemma,
which is analogous to Lemma 5.4 of \citet{de2000existence}.
\begin{lemma}
There exists an $\eta_0 > 0$ such that for all $\eta \in (0, \eta_0)$,
there exists a constant $\beta_\eta \in (0, 1)$ such that for all
$\mu \in \Lambda_M, \pi \in \Lambda_\Pi$, 
\begin{align}
\norm{z_{\mu, \pi}^\eta(w) - w_{n, \mu, \pi}} \leq \beta_\eta \norm{w - w_{n, \mu, \pi}}.
\end{align}
\end{lemma}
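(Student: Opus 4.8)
The plan is to exploit that $g_{\mu,\pi}$ is affine in $w$ and vanishes at $w_{n,\mu,\pi}$, which reduces the claimed contraction to an operator-norm bound on $I + \eta A_{n,\mu,\pi}$, where $A_{n,\mu,\pi} \doteq X^\top D_{f_{n,\mu,\pi}}(\gamma P_\pi - I)X$. First I would substitute $\bop_\pi Xw = r_\pi + \gamma P_\pi Xw$ to obtain $g_{\mu,\pi}(w) = A_{n,\mu,\pi} w + X^\top D_{f_{n,\mu,\pi}} r_\pi$, which is affine. Since $Xw_{n,\mu,\pi}$ is the fixed point of $\pbop{\mu}{\pi}$, the equivalence established just above the lemma gives $g_{\mu,\pi}(w_{n,\mu,\pi}) = 0$, so subtracting yields $g_{\mu,\pi}(w) = A_{n,\mu,\pi}(w - w_{n,\mu,\pi})$ and hence
$$z_{\mu,\pi}^\eta(w) - w_{n,\mu,\pi} = (I + \eta A_{n,\mu,\pi})(w - w_{n,\mu,\pi}).$$
It therefore suffices to bound $\norm{I + \eta A_{n,\mu,\pi}}$ by a constant $\beta_\eta < 1$ that is uniform over $\mu \in \Lambda_M, \pi \in \Lambda_\Pi$.

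Next I would expand, for any $y$,
$$\norm{(I + \eta A_{n,\mu,\pi})y}^2 = \norm{y}^2 + 2\eta\, y^\top A_{n,\mu,\pi} y + \eta^2 \norm{A_{n,\mu,\pi} y}^2.$$
Under the hypothesis \eqref{eq condition of n avi}, Lemma~\ref{lem nd} guarantees that $A_{n,\mu,\pi}$ is n.d., so there is $\lambda_{\mu,\pi} > 0$, namely $\lambda_{\mu,\pi} = -\lambda_{\max}\big(\tfrac12(A_{n,\mu,\pi}+A_{n,\mu,\pi}^\top)\big)$, with $y^\top A_{n,\mu,\pi} y \le -\lambda_{\mu,\pi}\norm{y}^2$. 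Writing $\Lambda_{\mu,\pi} \doteq \norm{A_{n,\mu,\pi}}$ this gives
$$\norm{(I + \eta A_{n,\mu,\pi})y}^2 \le \big(1 - 2\eta\lambda_{\mu,\pi} + \eta^2 \Lambda_{\mu,\pi}^2\big)\norm{y}^2,$$
which falls strictly below $\norm{y}^2$ as soon as $\eta < 2\lambda_{\mu,\pi}/\Lambda_{\mu,\pi}^2$.

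The crux is making these constants uniform on $\Lambda_M \times \Lambda_\Pi$. The entries of $A_{n,\mu,\pi}$ are continuous in $(\mu,\pi)$: $P_\pi$ is continuous in $\pi$, $d_\mu$ is continuous in $\mu$ (Lemma 9 of \citet{zhang2021breaking}), and therefore $f_{n,\mu,\pi} = D_\mu m_{n,\mu,\pi}$ is continuous via the closed form \eqref{eq def mn}. Hence $\Lambda_{\mu,\pi}$ is continuous, and so is $\lambda_{\mu,\pi}$ by continuity of eigenvalues in the matrix entries (Corollary 8.6.2 of \citet{DBLP:books/daglib/0086372}). Since $\Lambda_M \times \Lambda_\Pi$ is compact and each $\lambda_{\mu,\pi}$ is strictly positive, the extreme value theorem gives $\lambda \doteq \inf_{\mu,\pi}\lambda_{\mu,\pi} > 0$ and $\Lambda \doteq \sup_{\mu,\pi}\Lambda_{\mu,\pi} < \infty$. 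Setting $\eta_0 \doteq 2\lambda/\Lambda^2$ and, for $\eta \in (0,\eta_0)$, $\beta_\eta \doteq \sqrt{1 - 2\eta\lambda + \eta^2\Lambda^2}$, I would close the argument by noting that $\lambda_{\mu,\pi} \ge \lambda$ and $\Lambda_{\mu,\pi} \le \Lambda$ imply $1 - 2\eta\lambda_{\mu,\pi} + \eta^2\Lambda_{\mu,\pi}^2 \le \beta_\eta^2$ for every $(\mu,\pi)$, while $\eta < \eta_0$ forces $\beta_\eta \in (0,1)$ (the radicand is nonnegative because $\lambda \le \Lambda$).

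The main obstacle is this uniformity step: one must rule out that the n.d.\ constants $\lambda_{\mu,\pi}$ degenerate to $0$ as $(\mu,\pi)$ ranges over the closure. Compactness of $\Lambda_M \times \Lambda_\Pi$ together with the strict positivity supplied by Lemma~\ref{lem nd} (valid uniformly precisely because of \eqref{eq condition of n avi}) is exactly what prevents this. The affine reduction and the quadratic estimate are routine once this uniform spectral control is secured.
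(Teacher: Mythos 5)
Your proof is correct, but it takes a genuinely different route from the paper's. The paper never uses the affine structure of $g_{\mu,\pi}$: it works from the $\sqrt{\gamma}$-contraction of $\pbop{\mu}{\pi}$ in $\norm{\cdot}_{f_{n,\mu,\pi}}$ (i.e., from Lemma~\ref{lem contraction}, the $n_2$ half of the hypothesis) together with Cauchy--Schwarz to derive two inequalities, a strong-monotonicity bound $(w - w_{n,\mu,\pi})^\top g_{\mu,\pi}(w) \leq (\sqrt{\gamma}-1)C_1\norm{w - w_{n,\mu,\pi}}^2$ and a growth bound $\norm{g_{\mu,\pi}(w)}^2 \leq C_2 \norm{w - w_{n,\mu,\pi}}^2$, where $C_1$ and $C_2$ are made uniform by applying the extreme value theorem to the eigenvalues of $X^\top D_{f_{n,\mu,\pi}}X$ over the compact set $\Lambda_M \times \Lambda_\Pi$; expanding $\norm{w + \eta g_{\mu,\pi}(w) - w_{n,\mu,\pi}}^2$ then gives the same quadratic $1 - 2\eta(1-\sqrt{\gamma})C_1 + \eta^2 C_2$ that you obtain. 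You instead observe that $g_{\mu,\pi}$ is affine with $g_{\mu,\pi}(w_{n,\mu,\pi})=0$, reduce the claim to $\norm{I + \eta A_{n,\mu,\pi}} < 1$, and invoke the negative definiteness from Lemma~\ref{lem nd} (the $n_1$ half of the hypothesis) plus the same compactness-and-continuity argument for uniformity. Both halves of condition~\eqref{eq condition of n avi} are available, so either lemma may be used. Your version is more elementary linear algebra and makes the dependence on the spectral constants explicit; the paper's version, following \citet{de2000existence}, does not rely on linearity of the fixed-policy Bellman operator and so would survive a nonlinear contraction, which is why it is the natural template in the approximate-value-iteration literature. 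One cosmetic point: at $\eta = 1/\Lambda$ your radicand can vanish, giving $\beta_\eta = 0 \notin (0,1)$; replacing $\beta_\eta$ by $\max\qty{\beta_\eta, 1/2}$ (or noting the bound only improves) disposes of this.
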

\begin{proof}
By the contraction property of $\pbop{\mu}{\pi}$,
\begin{align}
\norm{\pbop{\mu}{\pi} Xw - Xw_{n, \mu, \pi}}_{f_{n, \mu, \pi}} \leq \sqrt{\gamma} \norm{Xw - Xw_{n, \mu, \pi}}_{f_{n, \mu, \pi}}.
\end{align}
Consequently,
\begin{align}
&(w - w_{n, \mu, \pi})^\top g_{\mu, \pi}(w) \\
=&(X w - X w_{n, \mu, \pi})^\top D_{f_{n, \mu, \pi}}(\pbop{\mu}{\pi} Xw - Xw) \\
=&(X w - X w_{n, \mu, \pi})^\top D_{f_{n, \mu, \pi}}\left(\pbop{\mu}{\pi} Xw - Xw_{n, \mu, \pi} + Xw_{n, \mu, \pi} - Xw\right) \\
\leq &\norm{X w - X w_{n, \mu, \pi}}_{f_{n, \mu, \pi}} \norm{\pbop{\mu}{\pi} Xw - Xw_{n, \mu, \pi}}_{f_{n, \mu, \pi}} - \norm{Xw - Xw_{n, \mu, \pi}}_{f_{n, \mu, \pi}}^2 \\
\intertext{\hfill (Cauchy-Schwarz inequality)}
\leq &(\sqrt{\gamma} - 1) \norm{Xw - Xw_{n, \mu, \pi}}_{f_{n, \mu, \pi}}^2 \qq{(Property of contraction)} \\
=& (\sqrt{\gamma} - 1) (w - w_{n, \mu, \pi})^\top (X^\top D_{f_{n, \mu, \pi}} X) (w - w_{n, \mu, \pi}).
\end{align}
Since $X^\top D_{f_{n, \mu, \pi}}X$ is symmetric and p.d.,
eigenvalues are continuous in the elements of the matrix,
$\Lambda_M$ and $\Lambda_\Pi$ are compact,
by the extreme value theorem,
there exists a constant $C_1 > 0$ 
(the infimum over the smallest eigenvalues of all $X^\top D_{f_{n, \mu, \pi}}X$), independent of $\mu$ and $\pi$,
such that for all $y$, 
\begin{align}
y^\top X^\top D_{f_{n, \mu, \pi}}X y \geq C_1 \norm{y}^2.
\end{align}
Consequently,
\begin{align}
  \label{eq proof tmp gmupi1}
(w - w_{n, \mu, \pi})^\top g_{\mu, \pi}(w) \leq (\sqrt{\gamma} - 1) C_1 \norm{w - w_{n, \mu, \pi}}^2.
\end{align}
Moreover,
let $x_i$ be the $i$-th column $X$, we have
\begin{align}
\norm{g_{\mu, \pi}(w)}^2 &= \sum_{i=1}^K \left(x_i^\top D_{f_{n, \mu, \pi}}(\pbop{\mu}{\pi}Xw - Xw) \right)^2 \\
&\leq \sum_{i=1}^K \norm{x_i}_{f_{n, \mu, \pi}}^2 \norm{\pbop{\mu}{\pi}Xw - Xw}^2_{f_{n, \mu, \pi}} \qq{(Cauchy-Schwarz inequality)} \\
&\leq \sum_{i=1}^K \norm{x_i}_{f_{n, \mu, \pi}}^2 \left( \norm{\pbop{\mu}{\pi}Xw - Xw_{n, \mu, \pi}}_{f_{n, \mu, \pi}} + \norm{Xw_{n, \mu, \pi} - Xw}_{f_{n, \mu, \pi}} \right)^2 \\
&\leq (\sqrt{\gamma} + 1)^2 \sum_{i=1}^K \norm{x_i}^2_{f_{n, \mu, \pi}}\norm{Xw_{n, \mu, \pi} - Xw}_{f_{n, \mu, \pi}}^2 \qq{($\sqrt{\gamma}$-contraction)}\\
&\leq (\sqrt{\gamma} + 1)^2 \left(\sum_{i=1}^K \norm{x_i}^2_{f_{n, \mu, \pi}}\right) \norm{X^\top D_{f_{n, \mu, \pi}} X} \norm{w - w_{n, \mu, \pi}}^2.
\end{align}
By the extreme value theorem,
\begin{align}
\sup_{\mu \in \Lambda_\mu, \pi \in \Lambda_\pi} \left(\sum_{i=1}^K \norm{x_i}^2_{f_{n, \mu, \pi}}\right) \norm{X^\top D_{f_{n, \mu, \pi}} X} < \infty.
\end{align}
Consequently,
there exists a constant $C_2 > 0$, independent of $\mu$ and $\pi$, such that
\begin{align}
  \label{eq proof tmp gmupi2}
\norm{g_{\mu, \pi}(w)}^2 \leq C_2 \norm{w - w_{n, \mu, \pi}}^2.
\end{align}
Combining \eqref{eq proof tmp gmupi1} and \eqref{eq proof tmp gmupi2} yields
\begin{align}
\norm{z^\eta_{\mu, \pi}(w) - w_{n, \mu, \pi}}^2 &= \norm{w + \eta g_{\mu, \pi}(w) - w_{n, \mu, \pi}}^2 \\
&= \norm{w - w_{n, \mu, \pi}}^2 + 2 \eta (w - w_{n, \mu, \pi})^\top g_{\mu, \pi}(w) + \eta^2 \norm{g_{\mu, \pi}(w)}^2 \\
&\leq \left(1 - 2\eta (1 - \sqrt{\gamma}\right) C_1 + \eta^2 C_2) \norm{w - w_{n, \mu, \pi}}^2
\end{align}
Then for all $\eta < \eta_0 \doteq 2C_1(1 - \sqrt{\gamma}) / C_2$,
we have $\beta_\eta \doteq \sqrt{1 - 2 \eta (1 - \sqrt{\gamma}) C_1 + \eta^2 C_2} < 1$.
\end{proof}
We are now ready to study $z^\eta$ with the previous lemma, analogously to Theorem 5.2 of \citet{de2000existence}.
Note $\fW \doteq \qty{w_{n, \mu, \pi} \mid \mu \in \Lambda_M, \pi \in \Lambda_\Pi}$ is a compact set
by the continuity of $w_{n, \mu, \pi}$ in $\mu$ and $\pi$.
Let $C_3 \doteq \sup_{w \in W} \norm{w}$ and take some $\eta$ in $(0, \eta_0)$,
we have for any $w$
\begin{align}
\norm{z^\eta(w)} &\leq \norm{z^\eta(w) - w_{n, \mu_w, \pi_w}} + \norm{ w_{n, \mu_w, \pi_w}} \\
\intertext{\hfill ($w_{n, \mu_w, \pi_w}$ denotes the fixed point of $\pbop{\mu}{\pi}$ with $\mu$ being $\mu_w$ and $\pi$ being $\pi_w$.)}
&= \norm{z^\eta_{\mu_w, \pi_w}(w) - w_{n, \mu_w, \pi_w}} + \norm{ w_{n, \mu_w, \pi_w}} \\
&\leq \beta_\eta \norm{w - w_{n, \mu_w, \pi_w}} + C_3 \\
&\leq \beta_\eta \norm{w} + (1 + \beta_\eta) C_3.
\end{align}
Since $\beta_\eta < 1$, 
we define
\begin{align}
\fW_2 \doteq \qty{w \in \R^K \mid \norm{w} < \frac{1+\beta_\eta}{1 - \beta_\eta}C_3}.
\end{align}
It is easy to verify that
\begin{align}
w \in \fW_2 \implies z^\eta(w) \in \fW_2.
\end{align}
The Brouwer fixed point theorem then asserts that $z^\eta(w)$ adopts at least one fixed point in $\fW_2$,
which completes the proof.
\end{proof}

\subsection{Proof of Lemma \ref{lem lipschitz}}
\label{sec proof of lipschitz lemma}
\begin{proof}
  Recall 
  \begin{align}
    A_w &= X^\top D_{f_{n, \mu_w, \pi_w}} (\gamma P_{\pi_w} - I)X, \\
    f_{n, \mu_w, \pi_w} &= \sum_{j=0}^n \gamma^j (P_{\pi_w}^\top)^j D_{\mu_w} i.
  \end{align}
  According to Lemma 9 of \citet{zhang2021breaking}, 
  the invariant distribution $d_\mu$ is Lipschitz continuous w.r.t. $\mu$ in $\Lambda_M$ under Assumption~\ref{assu closure ergodic}.
  Consequently, 
  Assumption~\ref{assu lipschitz continuous} implies that $D_{\mu_w}$ is Lipschitz continuous in $w$.
  It is then easy to see $f_{n, \mu_w, \pi_w}$ is Lipschitz continuous in $w$,
  using the fact that the product of two bounded Lipschitz continuous functions is still Lipschitz continuous.
  The Lipschitz continuity of $A_w$ then follows easily,
  so does that of $b_w$. 
\end{proof}

\subsection{Proof of Theorem \ref{thm esarsa convergence}}
\label{sec proof esarsa convergence}
\begin{proof}
Let $o_t \doteq (s_{t-n}, a_{t-n}, \dots, s_t, a_t, s_{t+1})$.
Define
\begin{align}
\delta_w(s, a, s') &\doteq r(s, a) + \gamma \sum_{a'} \pi_w(a' | s')x(s', a')^\top w - x(s, a)^\top w, \\
\bar H(w, o_t) &\doteq \left(\sum_{j=0}^n \gamma^j \left(\prod_{k=t-j+1}^{t} \frac{\pi_{w}(a_k | s_k)}{\mu_{w}(a_k | s_k)} \right) i(s_{t-j}, a_{t-j}) \right) \delta_{w}(s_t, a_t, s_{t+1}) x(s_t, a_t).
\end{align}
Note here $o_t$ is just a placeholder for defining the function $g$.
Let $O_t \doteq (S_{t-n}, A_{t-n}, \dots, S_t, A_t, S_{t+1})$ be a sequence of random variables generated by Algorithm \ref{alg esarsa}.
Then the update of $w$ in Algorithm \ref{alg esarsa} can be expressed as
\begin{align}
w_{t+1} = w_t + \alpha_t {\bar H(w_t, O_t)}.
\end{align}
We now prove Theorem \ref{thm esarsa convergence} by verifying Assumptions~\ref{assu aasa finite lr} - \ref{assu aasa finite lyapunov} thus invoking Corollary \ref{cor aasa finite}.
Assumption~\ref{assu aasa finite lr} is identical to Assumption~\ref{assu lr}.

Assumption \ref{assu aasa finite markov} is verified by the sampling procedure
$A_{t+1} \sim \mu_{w_t}(\cdot | S_{t+1})$ in Algorithm~\ref{alg esarsa}
and Assumption~\ref{assu closure ergodic}.
Similar to the proof of Theorem~\ref{thm etd convergence},
it is easy to compute that the $h(w)$ of Assumption~\ref{assu aasa finite markov} in our setting is
\begin{align}
h(w) = A_w w + b_w.
\end{align}

For Assumption~\ref{assu aasa finite continuity},
the Lipschitz continuity of the transition kernel is fulfilled by Assumption~\ref{assu lipschitz continuous}.
By Assumption~\ref{assu closure ergodic},
there exists a constant $C_0 > 0$ such that $\mu_w(a|s) \geq C_0 > 0$ holds for any $w, a, s$.
Then it is easy to see $\bar H(w, o_t)$ is Lipschitz continuous on any compact set $Q \subset \R^K$.

We now verify Assumption~\ref{assu aasa finite lyapunov}.
For any $w_* \in \fW_*$,
let
\begin{align}
U(w) \doteq \frac{1}{2} \norm{w - w_*}^2.
\end{align} 
Then Assumption \ref{assu aasa finite lyapunov} (i) - (iii) trivially holds.
To verify Assumption \ref{assu aasa finite lyapunov} (iv),
let $\tilde w \doteq w - w_*$.
We have
\begin{align}
&\indot{\dv{U(w)}{w}}{h(w)} \\
=& \indot{w - w_*}{h(w) - h(w_*)} \qq{(Using $h(w_*) = 0$)} \\
=& \indot{w - w_*}{A_w w + b_w -A_{w} w_* + A_{w} w_* - A_{w_*}w_* - b_{w_*}} \\
= & \tilde w^\top A_w \tilde w + \tilde w^\top (A_w - A_{w_*}) w_* + \tilde w^\top (b_w - b_{w_*}) \\
\leq & \tilde w^\top A_w \tilde w + \norm{\tilde w}^2 (C_1 L_\mu + C_2 L_\pi) R + \norm{\tilde w}^2 (C_3 L_\mu + C_4 L_\pi) \\
=& \frac{1}{2} \tilde{w}(A_w + A_w^\top) \tilde{w} + \norm{\tilde w}^2 (C_1 L_\mu + C_2 L_\pi) R + \norm{\tilde w}^2 (C_3 L_\mu + C_4 L_\pi) \\
=& - \tilde{w} \left(M(w) - \left((C_1 L_\mu + C_2 L_\pi) R + (C_3 L_\mu + C_4 L_\pi)\right) I\right)\tilde{w} \\
\leq& -\lambda_{min}' \norm{w - w_*}^2,
\end{align}
where the last inequality results from the positive definiteness of the matrix
\begin{align}
  M(w) - \left((C_1 L_\mu + C_2 L_\pi) R + (C_3 L_\mu + C_4 L_\pi) \right) I
\end{align}
under Assumption \ref{assu smooth enough asymptotic}.
Assumption \ref{assu aasa finite lyapunov} (iv) then follows immediately. 

With Assumptions \ref{assu aasa finite lr} - \ref{assu aasa finite lyapunov} fulfilled,
\eqref{eq high prob bound} follows immediately from Corollary \ref{cor aasa finite}.
If there is a $w_*' \in \fW_*$ and $w_*' \neq w_*$,
repeating the previous procedure yields
\begin{align}
\Pr(\lim_{t\to\infty} w_t = w_*' \mid w_0 = w) \geq 1 - C_\fW \sum_{t=0}^\infty \alpha_t^2.
\end{align}
Using small enough $\qty{\alpha_t}$ such that 
\begin{align}
  1 - C_\fW \sum_{t=0}^\infty \alpha_t^2 > 0.5
\end{align}
yields
\begin{align}
  \Pr(\lim_{t\to\infty} w_t = w_*' \mid w_0 = w) + \Pr(\lim_{t\to\infty} w_t = w_* \mid w_0 = w) > 1,
\end{align}
which is a contraction.
Consequently,
under the conditions of this theorem,
$\fW_*$ contains only one element.
\end{proof}

\subsection{Proof of Theorem \ref{thm finite sample sa}}
\label{sec proof finite sample sa}
\begin{proof}
Readers familiar with \citet{zou2019finite} should find this proof straightforward.
We mainly follow the framework of \citet{zou2019finite} except for some additional error terms introduced by the truncated followon traces. 
We include the proof here mainly for completeness.
We, however, remark that it is the use of the truncated followon trace and Lemma \ref{thm fixed point sa} that make this straightforwardness possible in our off-policy setting. 

Let $o_t \doteq (s_{t-n}, a_{t-n}, \dots, s_t, a_t, s_{t+1})$.
For a sequence of weight vectors $(z_{t-n}, \dots, z_t)$ in $\R^K$,
define
\begin{align}
\delta_z(s, a, s') &\doteq r(s, a) + \gamma \sum_{a'} \pi_z(a' | s')x(s', a')^\top z - x(s, a)^\top z, \\
g(z_{t-n}, \dots, z_t, o_t) &\doteq \left(\sum_{j=0}^n \gamma^j \left(\prod_{k=t-j+1}^{t} \frac{\pi_{z_k}(a_k | s_k)}{\mu_{z_k}(a_k | s_k)} \right) i(s_{t-j}, a_{t-j}) \right) \delta_{z_t}(s_t, a_t, s_{t+1}) x(s_t, a_t).
\end{align}
Note here both $o_t$ and $z_{t-n}, \dots, z_t$ are just placeholders for defining the function $g$,
and we adopt the convention that $\prod_{k=i}^j (\cdot) = 1$ if $j < i$.
Let $O_t \doteq (S_{t-n}, A_{t-n}, \dots, S_t, A_t, S_{t+1})$ be a sequence of random variables generated by Algorithm \ref{alg pesarsa}.
Then the update to $w$ in Algorithm \ref{alg pesarsa} can be expressed as
\begin{align}
w_{t+1} = \Pi_R\left(w_t + \alpha_t {g(w_{t-n}, \dots, w_t, O_t)} \right).
\end{align}
For the ease of presentation,
we define
\begin{align}
g(z, o_t) \doteq g(z, z, \dots, z, o_t), \\
\bar g(z) \doteq \E_{o_t \sim \mu_z} [g(z, o_t)]
\end{align}
as shorthand. By $o_t \sim \mu_z$, 
we mean $$s_{t-n} \sim \bar d_{\mu_z}, a_{t-n} \sim \mu_z(\cdot | s_{t-n}), s_{t-n+1} \sim p(\cdot | s_{t-n}, a_{t-n}), \dots, a_t \sim \mu_z(\cdot | s_t), s_{t+1} \sim p(\cdot | s_t, a_t).$$
It can be easily computed that
\begin{align}
\bar g(z) = X^\top D_{f_{n, \mu_z, \pi_z}}(\gamma P_{\pi_z} - I)Xz + X^\top D_{f_{n, \mu_z, \pi_z}}r.
\end{align}
Consider a $w_*$ in $\fW_*$,
we have
\begin{align}
\bar g(w_*) \doteq A_{w_*} w_* + b_{w_*} = 0.
\end{align}
For any $\tau > 0$,
we have 

\begin{align}
&\norm{w_{t+1} - w_*}^2 \\
\leq &\norm{w_t + \alpha_t g(w_{t-n}, \dots, w_t, O_t) - w_*}^2 \qq{($\Pi_R$ is nonexpansive)}\\
= &\norm{w_t - w_*}^2 + \alpha_t^2 \norm{g(w_{t-n}, \dots, w_t, O_t)}^2 + 2 \alpha_t {\indot{w_t - w_*}{g(w_{t-n}, \dots, w_t, O_t)}} \\
= &\norm{w_t - w_*^2} \\
\label{esarsa err1}
&+ \alpha_t^2 {\norm{g(w_{t-n}, \dots, w_t, O_t)}^2} \\
\label{eq esarsa err25}
 &+ 2\alpha_t \underbrace{\left(\indot{w_t - w_*}{g(w_{t-n}, \dots, w_t, O_t)} - \indot{w_{t-n-\tau} - w_*}{\bar g(w_{t-n-\tau})} \right)}_{err_t} \\
\label{esarsa err6}
 & + 2 \alpha_t \indot{w_{t-n-\tau} - w_*}{\bar g(w_{t-n-\tau}) - \bar g(w_*)},
\end{align}
where we adopt the convention that $w_{t-n-\tau} \equiv w_0$ if $t-n-\tau < 0$.
Using Lemmas \ref{lem esarsa err1} and \ref{lem esarsa err6} to bound \eqref{esarsa err1} and \eqref{esarsa err6} yields
\begin{align}
\E \left[ \norm{w_{t+1} - w_*}^2 \right] \leq \E \left[ \norm{w_t - w_*}^2 \right] + \alpha_t^2 U_g^2 + 2 \alpha_t \E\left[err_t \right] - 2\alpha_t \alpha_\lambda \E \left[ \norm{w_t - w_*}^2 \right].
\end{align}
Dividing by $2\alpha_t$ in both sides yields
\begin{align}
  \label{eq proof tmp eq5}
\frac{1}{2\alpha_t}\E \left[ \norm{w_{t+1} - w_*}^2 \right] \leq \frac{1}{2\alpha_t} \E \left[ \norm{w_t - w_*}^2 \right] + \frac{1}{2}\alpha_t U_g^2 + \E\left[err_t \right] - \alpha_\lambda \E \left[ \norm{w_t - w_*}^2 \right].
\end{align}
Using the definition of $\alpha_t$ in \eqref{eq learning rate} yields
\begin{align}
\alpha_\lambda (t+1) \E \left[ \norm{w_{t+1} - w_*}^2 \right] \leq \alpha_\lambda t \E \left[ \norm{w_t - w_*}^2 \right] + \frac{1}{2} \alpha_t U_g^2 + \E\left[err_t \right].
\end{align}

For some fixed $T$,
let $\tau_0 \doteq \min \qty{\tau : C_0 \kappa^\tau < \alpha_T}$.
Using the definition of $\alpha_t$ in \eqref{eq learning rate},
it can be easily computed that 
\begin{align}
  \tau_0 = \ceil{\frac{\ln \left(2\alpha_\lambda (T+1) C_0\right)}{\ln \kappa^{-1}}} = \fO(\ln T)
\end{align}
where $\ceil{\cdot}$ is the ceiling function.
Here we assume $T$ is large enough such that
\begin{align}
  \tau_0 < T - n.
\end{align}
Telescoping \eqref{eq proof tmp eq5} for $t=0, \dots, T$ with $\tau = \tau_0$ yields
\begin{align}
&\alpha_\lambda T \E \left[ \norm{w_{T} - w_*}^2 \right] \\
\leq& \sum_{t=0}^{T-1} \frac{1}{2}\alpha_t U_g^2 + \sum_{t=0}^{T-1} \E[err_t] \\
= &\sum_{t=0}^{T-1} \frac{1}{2} \frac{1}{2\alpha_\lambda (t+1)} U_g^2 + \sum_{t=0}^{\tau_0+n} \E[err_t] + \sum_{t=n+\tau_0 + 1}^{T-1} \E[err_t] \\
\label{eq proof tmp eq6}
\leq & \frac{U_g^2}{4\alpha_\lambda} \ln T + (n+\tau_0+1) 4RU_g + \sum_{t=n+\tau_0 + 1}^{T-1} \E[err_t],
\end{align}
where the last inequality results from
\begin{align}
  \sum_{t=0}^{T-1} \frac{1}{t+1} \leq \ln T
\end{align}
and the first part of Lemma \ref{lem esarsa err25}.
Using the second part of Lemma~\ref{lem esarsa err25} with $\tau = \tau_0$ to bound the last term of \eqref{eq proof tmp eq6} yields 
\begin{align}
&\sum_{t=n+\tau_0 + 1}^{T-1} \E[err_t] \\
\leq& \sum_{t=n+\tau_0 + 1}^{T-1} \left(C_5 \sum_{j=t-n-\tau_0}^{t-1}\alpha_j + C_6 \sum_{k=t-n-\tau_0}^{t-2} \sum_{j=t-n-\tau_0}^k \alpha_j + C_7C_0 \kappa^{\tau_0-1} \right) \\
=& \frac{1}{2\alpha_\lambda}\sum_{t=n+\tau_0 + 1}^{T-1} \left(C_5 \sum_{j=t-n-\tau_0}^{t-1} \frac{1}{j+1} + C_6 \sum_{k=t-n-\tau_0}^{t-2} \sum_{j=t-n-\tau_0}^k \frac{1}{j+1} + C_7C_0 \kappa^{\tau_0-1} \right) \\
\leq& \frac{1}{2\alpha_\lambda}\sum_{t=n+\tau_0 + 1}^{T-1} \left(C_5 \ln \frac{t}{t-n-\tau_0} + C_6 \sum_{k=t-n-\tau_0}^{t-2} \ln \frac{k+1}{t-n-\tau_0} + C_7C_0 \kappa^{\tau_0-1} \right) \\
\leq& \frac{1}{2\alpha_\lambda}\sum_{t=n+\tau_0 + 1}^{T-1} \left((C_5  + C_6 (n+\tau_0))\ln \frac{t}{t-n-\tau_0} + C_7C_0 \kappa^{\tau_0-1} \right) \\
\leq& \frac{1}{2\alpha_\lambda}\sum_{t=n+\tau_0 + 1}^{T-1} \left((C_5 + C_6 (n+\tau_0))\ln \frac{t}{t-n-\tau_0} + \frac{C_7}{\kappa}\alpha_T \right) \\
=& \frac{1}{2\alpha_\lambda}\sum_{t=n+\tau_0 + 1}^{T-1} \left((C_5 + C_6 (n+\tau_0))\ln \frac{t}{t-n-\tau_0} + \frac{C_7}{2\alpha_\lambda\kappa}\frac{1}{T+1} \right) \\
\leq& \frac{1}{2\alpha_\lambda}(C_5 + C_6 (n+\tau_0))  \ln \prod_{t=n+\tau_0 + 1}^{T-1}\frac{t}{t-n-\tau_0}  + \frac{C_7}{2\alpha_\lambda\kappa} \\
\leq& \frac{1}{2\alpha_\lambda}(C_5 + C_6 (n+\tau_0))  \ln \frac{(T-1)\cdots (T-1-n-\tau_0)}{(n+\tau_0)\cdots1}  + \frac{C_7}{2\alpha_\lambda\kappa} \\
\leq& \frac{1}{2\alpha_\lambda}(C_5 + C_6 (n+\tau_0))(n+\tau_0) \ln T + \frac{C_7}{2\alpha_\lambda\kappa}.
\end{align}
Plugging the above inequality back into \eqref{eq proof tmp eq6} yields
\begin{align}
&\E\left[\norm{w_T - w_*}^2\right] \\
\leq & \frac{U_g^2}{4\alpha_\lambda^2} \frac{\ln T}{T} +  \frac{4RU_g}{\alpha_\lambda} \frac{(n+\tau_0 + 1)}{T} + \frac{1}{2\alpha_\lambda^2}(C_5 + C_6 (n+\tau_0))(n+\tau_0) \frac{\ln T}{T} + \frac{C_7}{2\alpha_\lambda^2\kappa T} \\
=&\fO\left(\frac{\ln^3T}{T}\right).
\end{align}
If there is also a $w_*' \in \fW_*$,
repeating the above procedure yields
\begin{align}
  \E\left[ \norm{w_T - w_*'}^2\right] = \fO\left(\frac{\ln^3T}{T}\right).
\end{align}
Consequently,
\begin{align}
  \norm{w_* - w_*'} = \E\left[\norm{w_* - w_*'}\right] \leq \E\left[\norm{w_T - w_*'}\right] + \E\left[\norm{w_T - w_*}\right] = \fO\left(\sqrt{\frac{\ln^3T}{T}}\right).
\end{align}
Letting $T$ approaches infinity yields $w_* = w_*'$,
i.e.,
$\fW_*$ contains only one element under the condition of this theorem,
which completes the proof.
\end{proof}

\begin{lemma}
\label{lem esarsa err1}
(Bound of \eqref{esarsa err1})
There exists a constant $U_g$ such that 
\begin{align}
\norm{g(w_{t-n}, \dots, w_t, O_t)}^2 \leq U_g^2
\end{align}
\end{lemma}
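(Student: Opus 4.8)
The plan is to bound $\norm{g(w_{t-n}, \dots, w_t, O_t)}$ deterministically by controlling each of the three factors in its definition separately: the truncated-trace weight $\sum_{j=0}^n \gamma^j \big(\prod_{k=t-j+1}^t \pi_{w_k}/\mu_{w_k}\big) i$, the temporal-difference error $\delta_{w_t}(s_t, a_t, s_{t+1})$, and the feature vector $x(s_t, a_t)$. Because the state-action space $\fS \times \fA$ is finite, the reward $r$, the feature norm $\norm{x(\cdot, \cdot)}$, and the interest $i$ all attain finite maxima, which I would denote $r_{max}$, $x_{max}$, and $i_{max}$ respectively.

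First I would bound the importance sampling ratios. Assumption \ref{assu closure ergodic} guarantees $\mu(a|s) > 0$ for every $\mu \in \Lambda_M$ and every $(s,a)$; since $\Lambda_M$ is compact and $\mu(a|s)$ is continuous in $\mu$, the extreme value theorem yields a uniform lower bound $C_0 \doteq \inf_{\mu \in \Lambda_M, (s,a)} \mu(a|s) > 0$. Hence each ratio $\pi_{w_k}(a_k|s_k)/\mu_{w_k}(a_k|s_k) \le 1/C_0$, and a product of $j$ such ratios is at most $(1/C_0)^j$. The trace weight is therefore bounded in absolute value by $i_{max} \sum_{j=0}^n (\gamma/C_0)^j$, which is finite precisely because the truncation length $n$ is finite — this is exactly where the hard truncation does its work.

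Next I would bound $\delta_{w_t}$. This is the step that relies on the projection $\Pi_R$ in Algorithm \ref{alg pesarsa}, which keeps $\norm{w_t} \le R$ for all $t$. By Cauchy–Schwarz, $|x(s,a)^\top w_t| \le x_{max} R$, and using $\sum_{a'} \pi_{w_t}(a'|s') = 1$ I obtain $|\delta_{w_t}(s_t, a_t, s_{t+1})| \le r_{max} + (1+\gamma) x_{max} R \doteq \delta_{max}$. Multiplying the three bounds gives $\norm{g} \le \big(i_{max}\sum_{j=0}^n (\gamma/C_0)^j\big)\, \delta_{max}\, x_{max}$, so setting $U_g$ equal to this product (and squaring) completes the argument.

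I do not expect a genuine obstacle here; the lemma is routine once the pieces are assembled. The only conceptually important point — worth stating explicitly — is that uniform boundedness hinges on two ingredients that fail for the untruncated or unprojected versions: the finiteness of $n$, which tames the product of importance ratios (the very quantity responsible for the infinite variance of the original followon trace), and the projection onto the radius-$R$ ball, which bounds $\delta$ uniformly in $t$.
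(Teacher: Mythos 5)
Your proof is correct and follows essentially the same route as the paper: bound the importance-sampling ratios uniformly via the compactness of $\Lambda_M$ and the positivity guaranteed by Assumption \ref{assu closure ergodic}, use the finiteness of $n$ to bound the truncated trace, and use the projection $\Pi_R$ to bound $\delta_{w_t}$ by $r_{max} + (1+\gamma)x_{max}R$. The only differences are cosmetic constants (the paper takes $\sup \pi/\mu$ directly rather than $1/\inf\mu$, and drops the $\gamma^j$ factors to get $(n+1)\rho_{max}^n$), so both yield the same conclusion.
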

\begin{proof}
Due to the projection $\Pi_R$,
we have $\norm{w_t} \leq R$ holds for all $t$.
By the definition of $g$,
it is easy to compute that
\begin{align}
\norm{g(w_{t-n}, \dots, w_t, O_t)} \leq \underbrace{(n+1) \rho_{max}^n i_{max} (r_{max} + (1+\gamma)Rx_{max})x_{max}}_{U_g},
\end{align} 
where $i_{max} \doteq \max_{s, a} i(s, a), r_{max} \doteq \max_{s, a} |r(s, a)|, x_{max} \doteq \max_{s, a} \norm{x(s, a)}$,
\begin{align}
\rho_{\max} \doteq \sup_{\mu \in \Lambda_M, \pi \in \Lambda_\pi, s, a} \frac{\pi(s, a)}{\mu(s, a)}.
\end{align}
Assumption \ref{assu closure ergodic} and the extreme value theorem ensures that $\rho_{max} < \infty$.
\end{proof}

\begin{lemma}
\label{lem esarsa err6}
(Bound of \eqref{esarsa err6})
\begin{align}
\indot{w_{t-n-\tau} - w_*}{\bar g(w_{t-n-\tau}) - \bar g(w_*)} \leq - \alpha_\lambda \norm{w_t - w_*}^2
\end{align}
\end{lemma}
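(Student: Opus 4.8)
The plan is to prove a one-point drift (strong-monotonicity) bound for the mean field $\bar g$ at a fixed point $w_*\in\fW_*$ and then transfer it from the delayed iterate appearing on the left to the current iterate appearing on the right. I will use repeatedly that $\bar g(z)=A_z z+b_z$, that $w_*\in\fW_*$ gives $\bar g(w_*)=A_{w_*}w_*+b_{w_*}=0$, and that the projection $\Pi_R$ in Algorithm \ref{alg pesarsa} keeps every iterate in the ball $\norm{w_t}\leq R$ (with $\norm{w_0}\leq R$ covering the convention $w_{t-n-\tau}\equiv w_0$ when $t-n-\tau<0$).

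First I would anchor the expansion at $w_*$ instead of at the iterate. Writing $w\doteq w_{t-n-\tau}$ and adding and subtracting $A_{w_*}(w-w_*)$,
\begin{align}
\bar g(w)-\bar g(w_*)=A_{w_*}(w-w_*)+(A_{w}-A_{w_*})w+(b_{w}-b_{w_*}).
\end{align}
Anchoring matters because the quadratic part becomes $(w-w_*)^\top A_{w_*}(w-w_*)=-(w-w_*)^\top M(w_*)(w-w_*)$, so $M$ is evaluated at the fixed point rather than at the iterate; this is exactly what permits invoking Assumption \ref{assu smooth enough}, whose infimum is taken only over $\fW_*$. Bounding the two residual terms by Lemma \ref{lem lipschitz} together with $\norm{w}\leq R$ gives $\norm{(A_{w}-A_{w_*})w+(b_{w}-b_{w_*})}\leq\big((C_1L_\mu+C_2L_\pi)R+C_3L_\mu+C_4L_\pi\big)\norm{w-w_*}$, and combining with Cauchy--Schwarz and the definition of $\lambda_{min}''$ yields the clean anchored estimate
\begin{align}
\indot{w-w_*}{\bar g(w)-\bar g(w_*)}\leq-\lambda_{min}''\norm{w-w_*}^2.
\end{align}

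The main obstacle is the final conversion, where the decrement on $\norm{w_{t-n-\tau}-w_*}^2$ must become a decrement on $\norm{w_t-w_*}^2$; this is precisely where the strict gap $\alpha_\lambda<\lambda_{min}''$ guaranteed by \eqref{eq learning rate} is spent. Since $\Pi_R$ is nonexpansive and the raw update direction is bounded by Lemma \ref{lem esarsa err1}, the lag obeys $\norm{w_t-w}\leq U_g\sum_{j=t-n-\tau}^{t-1}\alpha_j$, which is $\fO\big((n+\tau)\alpha_{t-n-\tau}\big)$ and hence small for large $t$. Applying Young's inequality $\norm{w_t-w_*}^2\leq(1+\epsilon)\norm{w-w_*}^2+(1+\epsilon^{-1})\norm{w_t-w}^2$ with $\epsilon\doteq\lambda_{min}''/\alpha_\lambda-1>0$ converts the anchored bound into
\begin{align}
-\lambda_{min}''\norm{w-w_*}^2\leq-\alpha_\lambda\norm{w_t-w_*}^2+C\norm{w_t-w}^2,
\end{align}
for a constant $C$ depending only on $\lambda_{min}''$ and $\alpha_\lambda$. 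The delicate point, and the step I would treat most carefully, is the leftover lag term $C\norm{w_t-w}^2$: it is quadratic in the accumulated step sizes and therefore of strictly higher order than the $\fO(\alpha_t)$ terms already carried by the $err_t$ analysis of Lemma \ref{lem esarsa err25}, so after multiplication by $\alpha_t$ and summation it only feeds into the $\fO(\ln^3 t/t)$ remainder and does not weaken the stated drift, provided $t$ is large enough in the sense $t-\fO(\ln t)>n$ already assumed in Theorem \ref{thm finite sample sa}. Showing that this lag term is genuinely negligible rather than merely small is the crux, as it is the only place where the delayed iterate on the left and the current iterate on the right are reconciled.
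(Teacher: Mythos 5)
Your core argument is exactly the paper's proof of this lemma: the paper also expands $\bar g(w_{t-n-\tau})-\bar g(w_*)$ anchored at $A_{w_*}$, so that with $\tilde w\doteq w_{t-n-\tau}-w_*$ the quadratic part is $\tilde w^\top A_{w_*}\tilde w=-\tilde w^\top M(w_*)\tilde w$, bounds the two residual terms via Lemma \ref{lem lipschitz} together with $\norm{w_{t-n-\tau}}\leq R$, and then invokes Assumption \ref{assu smooth enough} and $\alpha_\lambda<\lambda_{min}''$. Up to that point your proposal and the paper coincide step for step.

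Where you depart from the paper is the final conversion from the delayed iterate to the current one, and there you are more careful than the paper, not less. The paper's own proof stops at $-\alpha_\lambda\norm{w_{t-n-\tau}-w_*}^2$: it never converts $\norm{\tilde w}^2$ into $\norm{w_t-w_*}^2$, even though the displayed statement (and its use in the telescoping step of Theorem \ref{thm finite sample sa}) requires $\norm{w_t-w_*}^2$ on the right. In fact the clean inequality as literally displayed cannot hold pointwise: on a realization with $w_{t-n-\tau}=w_*$ but $w_t\neq w_*$, the left side is $0$ while the right side is strictly negative. So the correct statement is either the one the paper actually proves (with $\norm{w_{t-n-\tau}-w_*}^2$ on the right) or your variant (with $\norm{w_t-w_*}^2$ on the right plus an additive lag term $C\norm{w_t-w_{t-n-\tau}}^2$). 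Your Young's-inequality step with $\epsilon=\lambda_{min}''/\alpha_\lambda-1$ is the right repair, and your accounting of the leftover term is sound: using Lemma \ref{lem esarsa err1} and the projection, $\norm{w_t-w_{t-n-\tau}}\leq U_g\sum_{j=t-n-\tau}^{t-1}\alpha_j$, so with $\tau=\tau_0=\fO(\ln T)$ and $\alpha_j=\fO(1/j)$ the lag term contributes $\fO(\ln^2T)$ to the telescoped sum and hence $\fO(\ln^2T/T)$ after dividing by $\alpha_\lambda T$, inside the stated $\fO(\ln^3T/T)$ rate. The one caveat is that what you prove is then not the lemma as displayed but this weaker (and correct) variant, and the recursion in Theorem \ref{thm finite sample sa} must be restated to carry the extra term; that restatement is precisely the patch the paper itself needs to make its lemma statement, its lemma proof, and its main proof mutually consistent.
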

\begin{proof}
Let $\tilde w \doteq w_{t-n-\tau} - w_*$, we have
\begin{align}
&\indot{w_{t-n-\tau} - w_*}{\bar g(w_{t-n-\tau}) - \bar g(w_*)} \\
= &\indot{\tilde w}{A_{w_{t-n-\tau}} w_{t-n-\tau} + b_{w_{t-n-\tau}} - A_{w_*} w_* - b_{w_*}} \\
= &\indot{\tilde w}{A_{w_{t-n-\tau}} w_{t-n-\tau} - A_{w_*} w_{t-n-\tau} + A_{w_*} w_{t-n-\tau} - A_{w_*} w_* + b_{w_{t-n-\tau}} - b_{w_*}} \\
= &\tilde w^\top A_{w_*} \tilde w + \tilde w (A_{w_{t-n-\tau}} - A_{w_*}) w_{t-n-\tau} + \tilde w^\top (b_{w_{t-n-\tau}} - b_{w_*}) \\
\leq &\tilde w^\top A_{w_*} \tilde w + \norm{\tilde w}^2 (C_1 L_\mu + C_2 L_\pi) R + \norm{\tilde w}^2 (C_3 L_\mu + C_4 L_\pi) \\
\leq &-\tilde w^\top \left(M({w_*}) - \left((C_1 L_\mu + C_2 L_\pi) R + (C_3 L_\mu + C_4 L_\pi) \right) I \right) \tilde w  \\
\leq &-\lambda_{min}'' \norm{\tilde w}^2 \\
\leq &-\alpha_\lambda \norm{\tilde w}^2.
\end{align}
\end{proof}

\begin{lemma}
  \label{lem esarsa err25}
  (Bound of \eqref{eq esarsa err25})
  Let 
  \begin{align}
    err_t \doteq \indot{w_t - w_*}{g(w_{t-n}, \dots, w_t, O_t)} - \indot{w_{t-n-\tau} - w_*}{\bar g(w_{t-n-\tau})}.
  \end{align}
  Then for any $t$ and $\tau$,
  \begin{align}
    \norm{err_t} \leq 4RU_g.
  \end{align}
  If $t- n -\tau > 0$,
  there exist positive constants $C_5, C_6$, 
  independent of $t$,
  such that
  \begin{align}
    \E\left[err_t\right] \leq C_5 \sum_{j=t-n-\tau}^{t-1}\alpha_j + C_6 \sum_{k=t-n-\tau}^{t-2} \sum_{j=t-n-\tau}^k \alpha_j + C_7C_0 \kappa^{\tau-1}.
  \end{align}
\end{lemma}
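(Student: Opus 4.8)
The plan is to split $err_t$ into a ``weight-drift'' part and a ``Markovian-noise'' part and bound each separately. The crude bound $\norm{err_t} \le 4RU_g$ is immediate: the projection $\Pi_R$ guarantees $\norm{w_t} \le R$ and we have $\norm{w_*} \le R$, so $\norm{w_t - w_*} \le 2R$ and likewise $\norm{w_{t-n-\tau} - w_*} \le 2R$; Lemma~\ref{lem esarsa err1} gives $\norm{g(w_{t-n}, \dots, w_t, O_t)} \le U_g$, and since $\bar g$ is an expectation of $g$, Jensen's inequality yields $\norm{\bar g(w_{t-n-\tau})} \le U_g$. Each of the two inner products in $err_t$ is thus at most $2RU_g$ by Cauchy--Schwarz, giving the claim.

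For the expectation bound (when $t-n-\tau > 0$) I would decompose
\begin{align}
err_t = \underbrace{\indot{w_t - w_*}{g(w_{t-n}, \dots, w_t, O_t)} - \indot{w_{t-n-\tau} - w_*}{g(w_{t-n-\tau}, O_t)}}_{T_1} + \underbrace{\indot{w_{t-n-\tau} - w_*}{g(w_{t-n-\tau}, O_t) - \bar g(w_{t-n-\tau})}}_{T_2}.
\end{align}
The term $T_1$ is handled deterministically via the slow drift of the iterates. The projected update gives $\norm{w_{j+1} - w_j} \le \alpha_j U_g$, so $\norm{w_{t-i} - w_{t-n-\tau}} \le U_g \sum_{j=t-n-\tau}^{t-i-1}\alpha_j$ for each $0 \le i \le n$. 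Writing $T_1 = \indot{w_t - w_{t-n-\tau}}{g(w_{t-n}, \dots, w_t, O_t)} + \indot{w_{t-n-\tau} - w_*}{g(w_{t-n}, \dots, w_t, O_t) - g(w_{t-n-\tau}, O_t)}$, the first inner product is at most $U_g^2 \sum_{j=t-n-\tau}^{t-1}\alpha_j$ (the single sum), while the second is at most $2R$ times the Lipschitz modulus of $g$ in its weight arguments times $\sum_{i=0}^n \norm{w_{t-i} - w_{t-n-\tau}}$, which after substituting the drift bound collapses into the double sum $\sum_{k}\sum_{j}\alpha_j$. The Lipschitz continuity of $g$ in its weight arguments follows, exactly as in the proof of Lemma~\ref{lem lipschitz}, from Assumption~\ref{assu lipschitz continuous}, the uniform boundedness of the importance ratios ($\rho_{max} < \infty$ under Assumption~\ref{assu closure ergodic}), and the continuity of $d_\mu$ in $\mu$.

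The main obstacle is the Markovian-noise term $T_2$. I would bound $\E[T_2]$ by conditioning on the $\sigma$-field $\fF$ that renders $w_{t-n-\tau}$ and $S_{t-n-\tau}$ measurable, so that $\E[T_2 \mid \fF] = \indot{w_{t-n-\tau} - w_*}{\E[g(w_{t-n-\tau}, O_t)\mid \fF] - \bar g(w_{t-n-\tau})}$ and hence $\left|\E[T_2\mid\fF]\right| \le 2R\,\norm{\E[g(w_{t-n-\tau}, O_t)\mid\fF] - \bar g(w_{t-n-\tau})}$. To control the remaining norm I would compare three trajectory laws: (i) the actual law of $O_t$, generated by the time-varying behavior policies $\mu_{w_{t-n-\tau-1}}, \dots, \mu_{w_t}$; (ii) an auxiliary law started at the \emph{actual} state $S_{t-n-\tau}$ but rolled out under the \emph{frozen} policy $\mu_{w_{t-n-\tau}}$; and (iii) the ideal law defining $\bar g$, in which $S_{t-n}$ is drawn from $\bar d_{\mu_{w_{t-n-\tau}}}$ and the window is rolled out under $\mu_{w_{t-n-\tau}}$. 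The gap between (i) and (ii) is a policy-mismatch error: using the Lipschitz continuity of the policies and the per-step weight drift, a telescoping total-variation argument bounds it by a constant times $\sum_{k}\sum_{j}\alpha_j$, folded into $C_6$. The gap between (ii) and (iii) is purely a mixing error: after $\tau$ steps under the frozen policy the law of $S_{t-n}$ is within total variation $\fO(\kappa^\tau)$ of $\bar d_{\mu_{w_{t-n-\tau}}}$ by Lemma~\ref{assu uniform ergodicity}, while the subsequent $n{+}1$ transitions match the rollout defining $\bar g$ exactly; since $g$ is bounded by $U_g$, this contributes the $C_7 C_0\kappa^{\tau-1}$ term. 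Collecting $T_1$ and the two pieces of $T_2$ and taking total expectation yields the stated bound. The delicate parts are the coupling needed for the frozen-policy comparison and the bookkeeping that keeps the importance-ratio and backward-trace factors uniformly bounded, so that only the behavior-policy law (not the target policy) enters the mixing argument.
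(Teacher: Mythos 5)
Your proposal is correct and follows essentially the same route as the paper: the crude $4RU_g$ bound via Cauchy--Schwarz and the projection, the drift-plus-Lipschitz handling of the weight-mismatch terms, and the comparison of the actual trajectory law against a frozen-policy auxiliary chain and then against the stationary rollout via Lemma~\ref{assu uniform ergodicity}. The paper merely refines your $T_1$ and $T_2$ into four summands (Lemmas~\ref{lem esarsa err2}--\ref{lem esarsa err5}), with your two sub-gaps for $T_2$ corresponding exactly to its $O_t$ vs.\ $\tilde O_t$ vs.\ $\bar O_t$ comparison.
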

\begin{proof}
If $t-n-\tau < 0$,
\begin{align}
  \norm{err_t} \leq \norm{w_t - w_*}\norm{g(w_{t-n}, \dots, w_t, O_t)} + \norm{w_{t-n-\tau} - w_*}\norm{\bar g(w_{t-n-\tau})} \leq 4RU_g.
\end{align}
When $t-n-\tau > 0$,
similar to \citet{zou2019finite},
we define an auxiliary Markov chain $\qty{\tilde S_t, \tilde A_t}$ as 
\begin{align}
\qty{\tilde S_t, \tilde A_t}: &\cdots \underbrace{\to}_{\mu_{w_{t-n-\tau}}} S_{t-n-\tau + 2} \underbrace{\to}_{\mu_{w_{t-n-\tau}}} \tilde S_{t-n-\tau + 3} \underbrace{\to}_{\mu_{w_{t-n-\tau}}} \tilde S_{t-n-\tau +4} \to \cdots, \\
\Big(\qty{S_t, A_t}: &\cdots \underbrace{\to}_{\mu_{w_{t-n-\tau}}} S_{t-n-\tau + 2} \underbrace{\to}_{\mu_{w_{t-n-\tau+1}}} S_{t-n-\tau + 3} \underbrace{\to}_{\mu_{w_{t-n-\tau+2}}} S_{t-n-\tau +4} \to \cdots \Big)
\end{align}
i.e.,
the new chain is the same as the chain generated by Algorithm \ref{alg pesarsa} (i.e., the chain $(S_t, A_t)$) before $S_{t-n-\tau + 2}$,
after which the new chain is generated by following a fixed behavior policy $\mu_{w_{t-n-\tau}}$ instead of the changing behavior policies
$\mu_{w_{t-n-\tau + 1}}, \mu_{w_{t-n-\tau + 2}}, \dots$ as the original chain.
Let
\begin{align}
\tilde O_t \doteq (\tilde S_{t-n}, \tilde A_{t-n}, \dots, \tilde S_t, \tilde A_t, \tilde S_{t+1}),
\end{align}
we have
\begin{align}
  err_t =& \indot{w_t - w_*}{g(w_{t-n}, \dots, w_t, O_t)} - \indot{w_{t-n-\tau} - w_*}{\bar g(w_{t-n-\tau}} \\
\label{esarsa err2}
=&{\indot{w_t - w_*}{g(w_{t-n}, \dots, w_t, O_t) - g(w_{t}, O_t)}} \\
\label{esarsa err3}
 &+{\indot{w_t - w_*}{g(w_{t}, O_t) } - \indot{w_{t-n-\tau} - w_*}{g(w_{t-n-\tau}, O_t) }} \\
\label{esarsa err4}
 &+{\indot{w_{t-n-\tau} - w_*}{g(w_{t-n-\tau}, O_t) - g(w_{t - n -\tau}, \tilde O_t)}} \\
\label{esarsa err5}
 &+{\indot{w_{t-n-\tau} - w_*}{g(w_{t - n -\tau}, \tilde O_t) - \bar g(w_{t-n-\tau})}}.
\end{align}
Using Lemmas \ref{lem esarsa err2}, \ref{lem esarsa err3}, \ref{lem esarsa err4}, and \ref{lem esarsa err5} to bound \eqref{esarsa err2}, \eqref{esarsa err3}, \eqref{esarsa err4}, and \eqref{esarsa err5} yields
\begin{align}
  \E\left[err_t\right] \leq& 2 n R L_g \sum_{j=t-n}^{t-1} \alpha_j + (2RL_g + U_g)U_g \sum_{j=t-n-\tau}^{t-1} \alpha_j \\
  &+ 2R\na L_\mu U_g^2 \sum_{k=t-n-\tau}^{t-2} \sum_{j=t-n-\tau}^{k} \alpha_j + 2RU_g C_0 \kappa^{\tau - 1} \\
  \leq& \underbrace{(2 n R L_g + (2RL_g + U_g)U_g)}_{C_5} \sum_{j=t-n-\tau}^{t-1} \alpha_j \\
  &+ \underbrace{2R\na L_\mu U_g^2}_{C_6} \sum_{k=t-n-\tau}^{t-2} \sum_{j=t-n-\tau}^{k} \alpha_j + \underbrace{2RU_g}_{C_7} C_0 \kappa^{\tau - 1}
\end{align}
\end{proof}

\begin{lemma}
\label{lem esarsa err2}
(Bound of \eqref{esarsa err2})
There exists a positive constant $L_g$ such that
\begin{align}
\indot{w_t - w_*}{g(w_{t-n}, \dots, w_t, O_t) - g(w_{t}, O_t)} \leq 2 n R L_g \sum_{j=t-n}^{t-1} \alpha_j.
\end{align}
\end{lemma}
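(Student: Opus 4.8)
The lemma bounds the inner product

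$$\indot{w_t - w_*}{g(w_{t-n}, \dots, w_t, O_t) - g(w_{t}, O_t)}.$$

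Here $g(w_{t-n}, \dots, w_t, O_t)$ uses the past weights $w_{t-n}, \dots, w_{t-1}$ in computing the importance sampling ratios in the truncated trace, while $g(w_t, O_t) = g(w_t, \dots, w_t, O_t)$ uses only the current weight $w_t$ everywhere. The difference between these two captures the error from the weights changing over the last $n$ steps.

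Let me think about the structure. The function $g$ is:
$$g(z_{t-n}, \dots, z_t, o_t) = \left(\sum_{j=0}^n \gamma^j \prod_{k=t-j+1}^{t} \frac{\pi_{z_k}(a_k|s_k)}{\mu_{z_k}(a_k|s_k)} i(s_{t-j}, a_{t-j})\right) \delta_{z_t}(s_t, a_t, s_{t+1}) x(s_t, a_t).$$

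So the difference between $g(w_{t-n}, \dots, w_t, O_t)$ and $g(w_t, O_t)$ is entirely in the followon trace factor — specifically in which weights are used to evaluate the importance sampling ratios $\frac{\pi_{z_k}}{\mu_{z_k}}$ for $k = t-n+1, \dots, t-1$ (the $k=t$ term uses $w_t$ in both). The $\delta_{z_t}$ and $x$ factors are identical.

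**The plan.**
- Bound $\norm{w_t - w_*} \le 2R$ using the projection $\Pi_R$.
- Factor out $\delta$ and $x$; the difference reduces to the difference of two truncated-trace scalars, call them $F_{t,n}(\text{past weights})$ and $F_{t,n}(w_t)$.
- Use Assumption~\ref{assu lipschitz continuous} (Lipschitz continuity of $\pi_w$, $\mu_w$) plus the lower bound $\mu_w(a|s) \ge C_0 > 0$ from Assumption~\ref{assu closure ergodic} to show each ratio $\frac{\pi_w(a|s)}{\mu_w(a|s)}$ is Lipschitz in $w$. Products of bounded Lipschitz functions stay Lipschitz (as used in the proof of Lemma~\ref{lem lipschitz}), so the whole trace factor is Lipschitz in $(w_{t-n+1}, \dots, w_{t-1})$.
- The weight used at step $k$ differs from $w_t$ by $\norm{w_k - w_t} \le \sum_{j=k}^{t-1} \alpha_j \norm{g(\cdots)} \le U_g \sum_{j=k}^{t-1}\alpha_j$, invoking Lemma~\ref{lem esarsa err1}. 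Summing the Lipschitz bound over the $n$ terms $k = t-n, \dots, t-1$ and absorbing all constants into $L_g$ yields the claimed $2nRL_g\sum_{j=t-n}^{t-1}\alpha_j$.

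\medskip

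\noindent\textbf{Execution sketch.} First I would write the difference as $\langle w_t - w_*,\, (F_{t,n}^{\text{old}} - F_{t,n}^{\text{new}})\,\delta_{w_t}(s_t,a_t,s_{t+1})\,x(s_t,a_t)\rangle$, so that by Cauchy--Schwarz it is at most $2R \cdot |F_{t,n}^{\text{old}} - F_{t,n}^{\text{new}}| \cdot \norm{\delta_{w_t} x}$, where $\norm{\delta_{w_t} x}$ is bounded by a constant. Here $F^{\text{old}}$ uses weights $w_{t-n+1}, \ldots, w_{t-1}$ in the ratios and $F^{\text{new}}$ uses $w_t$. Then I would bound $|F_{t,n}^{\text{old}} - F_{t,n}^{\text{new}}|$ term by term over $j = 0, \dots, n$: within each summand, only the ratios depend on past weights, and replacing them one factor at a time gives a Lipschitz bound proportional to $\sum_{k} \norm{w_k - w_t}$.

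\medskip

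\noindent\textbf{The key ingredients.} The trace factor as a function of the weight tuple $(z_{t-n+1}, \dots, z_{t-1})$ is Lipschitz: each ratio is a quotient of a Lipschitz numerator by a Lipschitz denominator bounded below by $C_0 > 0$, hence Lipschitz; bounded products of Lipschitz functions are Lipschitz. This yields a constant $L_g$ with $|F^{\text{old}}_{t,n} - F^{\text{new}}_{t,n}| \le L_g \sum_{k=t-n+1}^{t-1}\norm{w_k - w_t}$. Finally, $\norm{w_k - w_t} \le U_g\sum_{j=k}^{t-1}\alpha_j \le U_g\sum_{j=t-n}^{t-1}\alpha_j$ by Lemma~\ref{lem esarsa err1}, and there are at most $n$ such terms; combining everything and renaming constants to absorb the $2R$, $U_g$, and norm factors into $L_g$ delivers the stated bound $2nRL_g\sum_{j=t-n}^{t-1}\alpha_j$.

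\medskip

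\noindent\textbf{Main obstacle.} The substantive step is establishing the Lipschitz continuity of the trace factor in the past weights with a \emph{uniform} constant $L_g$ — this needs the uniform lower bound $\mu_w(a|s) \ge C_0$ (from Assumption~\ref{assu closure ergodic}) to control the quotient, and the uniform bound $\rho_{\max} < \infty$ (from the proof of Lemma~\ref{lem esarsa err1}) so that the telescoping product-difference stays bounded when peeling off factors one at a time. Everything else is routine bookkeeping over the $n+1$ summands and the $n$ changed ratio factors; the care lies in making sure each per-factor replacement incurs only a $\norm{w_k - w_t}$ penalty times bounded terms, so the total collapses to $\sum_{j=t-n}^{t-1}\alpha_j$.
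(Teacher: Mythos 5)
Your proposal is correct and follows essentially the same route as the paper: bound $\norm{w_t - w_*}\leq 2R$ via the projection, establish that $g$ is Lipschitz in each of its weight arguments with a uniform constant $L_g$ (using the lower bound $\mu_{\min}>0$ and $\rho_{\max}<\infty$ from Assumption~\ref{assu closure ergodic}), replace the past weights by $w_t$ one argument at a time, and control each drift $\norm{w_k - w_t}\leq U_g\sum_{j=k}^{t-1}\alpha_j$ via Lemma~\ref{lem esarsa err1}. Your observation that only the importance-sampling ratios for $k=t-n+1,\dots,t-1$ actually change (the $\delta$ and feature factors being identical) matches the paper's argument, which telescopes over the $n$ arguments in exactly this way.
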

\begin{proof}
First, for any $t' > t$, we have
\begin{align}
\norm{w_{t'} - w_t} \leq U_g \sum_{j = t}^{t'-1} \alpha_j
\end{align}
by using triangle inequalities with $w_{t+1}, w_{t+2}, \dots, w_{t'-1}$ and Lemma~\ref{lem esarsa err1}.
It is then easy to show that $g(w_{t-n}, \dots, w_t, O_t)$ is Lipschitz in its first argument:
\begin{align}
&\norm{g(w_{t-n}, w_{t-n+1}, w_{t-n+2}, \dots, w_t, O_t) - g(w_{t}, w_{t-n+1}, w_{t-n+2}, \dots, w_t, O_t)} \\
\leq &\underbrace{\frac{(n+1) (L_\mu + L_\pi) \rho_{max}^n(r_{max} + 2 x_{max} R) x_{max}}{\mu_{min}^2}}_{L_g} \norm{w_{t - n} - w_{t}} \leq L_g U_g \sum_{j=t-n}^{t-1} \alpha_j,
\end{align}
where $\mu_{min} \doteq \inf_{s, a} \mu_{w}(a|s)$.
By the extreme value theorem,
Assumption~\ref{assu closure ergodic} implies that $\mu_{min} > 0$.
Similarly, $g$ is also Lipschitz continuous in its second argument: 
\begin{align}
\norm{g(w_t, w_{t-n+1}, w_{t-n+2} \dots, w_t, O_t) - g(w_t, w_t, w_{t-n+2}, \dots, w_t, O_t)} \leq L_g U_g \sum_{j=t-n+1}^{t-1} \alpha_j.
\end{align}
Repeating this procedure for the third to $n$-th argument ($w_{t-n+2}, \dots, w_{t-1}$) and putting them together with the triangle inequality yields
\begin{align}
\norm{g(w_{t-n}, \dots, w_t, O_t) - g(w_{t}, O_t)} \leq n L_g U_g \sum_{j=t-n}^{t-1} \alpha_j.
\end{align}
Consequently,
\begin{align}
\indot{w_t - w_*}{g(w_{t-n}, \dots, w_t, O_t) - g(w_{t}, O_t)} \leq 2 n R L_g U_g \sum_{j=t-n}^{t-1} \alpha_j.
\end{align}
\end{proof}

\begin{lemma}
\label{lem esarsa err3}
(Bound of \eqref{esarsa err3})
\begin{align}
&\indot{w_t - w_*}{g(w_{t}, O_t)} - \indot{w_{t-n-\tau} - w_*}{g(w_{t-n-\tau}, O_t)}
\leq (2RL_g + U_g)U_g \sum_{j=t-n-\tau}^{t-1} \alpha_j
\end{align}
\end{lemma}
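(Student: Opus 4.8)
The plan is to turn this two-point difference into quantities that are already controlled by the boundedness lemma (Lemma \ref{lem esarsa err1}) and the Lipschitz estimate inside Lemma \ref{lem esarsa err2}. First I would insert a cross term, writing
\begin{align}
&\indot{w_t - w_*}{g(w_{t}, O_t)} - \indot{w_{t-n-\tau} - w_*}{g(w_{t-n-\tau}, O_t)} \\
=& \indot{w_t - w_*}{g(w_{t}, O_t) - g(w_{t-n-\tau}, O_t)} + \indot{w_t - w_{t-n-\tau}}{g(w_{t-n-\tau}, O_t)}.
\end{align}
The first inner product isolates the change in $g$ when its weight argument moves from $w_{t-n-\tau}$ to $w_t$, and the second isolates the drift of the iterate itself; both are naturally controlled by $\norm{w_t - w_{t-n-\tau}}$.

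For the first term I would apply Cauchy--Schwarz and then combine two facts. The factor $\norm{w_t - w_*}$ is at most $2R$, since the projection $\Pi_R$ keeps $\norm{w_t} \leq R$ and $\norm{w_*} \leq R$ by the choice of $R \geq \sup_{w_* \in \fW_*}\norm{w_*}$. The factor $\norm{g(w_{t}, O_t) - g(w_{t-n-\tau}, O_t)}$ is bounded using the Lipschitz continuity of $g(\cdot, O_t)$ in its (repeated) weight argument with constant $L_g$, which is exactly what is established in the proof of Lemma \ref{lem esarsa err2}. I would then invoke the iterate-drift bound $\norm{w_t - w_{t-n-\tau}} \leq U_g \sum_{j=t-n-\tau}^{t-1}\alpha_j$, which follows by telescoping $\norm{w_{j+1} - w_j} \leq \alpha_j U_g$ (nonexpansiveness of $\Pi_R$ together with Lemma \ref{lem esarsa err1}). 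Together these give a bound of $2R L_g U_g \sum_{j=t-n-\tau}^{t-1}\alpha_j$ for the first term.

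For the second term, Cauchy--Schwarz gives $\norm{w_t - w_{t-n-\tau}}\,\norm{g(w_{t-n-\tau}, O_t)}$; bounding the second factor by $U_g$ via Lemma \ref{lem esarsa err1} and reusing the same drift bound yields $U_g^2 \sum_{j=t-n-\tau}^{t-1}\alpha_j$. Adding the two contributions produces exactly $(2RL_g + U_g)U_g \sum_{j=t-n-\tau}^{t-1}\alpha_j$, the claimed bound, with no expectation actually needed since each step holds pointwise.

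There is no deep obstacle here once the auxiliary lemmas are available; this is essentially a telescoping-plus-Cauchy--Schwarz argument. The only point that genuinely requires care is justifying that $g(\cdot, O_t)$ is Lipschitz in its weight argument with a constant independent of $t$ and of the sample $O_t$: this rests on the uniform lower bound $\mu_{min} > 0$ on the behavior policies and the uniform upper bound $\rho_{max} < \infty$ on the importance ratios, both guaranteed by Assumption \ref{assu closure ergodic} and the extreme value theorem, together with the uniform boundedness of the iterates enforced by $\Pi_R$. Given those, the estimate is routine.
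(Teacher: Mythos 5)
Your proof is correct and follows essentially the same route as the paper's: the identical add-and-subtract decomposition into $\indot{w_t - w_*}{g(w_{t}, O_t) - g(w_{t-n-\tau}, O_t)} + \indot{w_t - w_{t-n-\tau}}{g(w_{t-n-\tau}, O_t)}$, followed by the $2R$ projection bound, the Lipschitz constant $L_g$ from Lemma \ref{lem esarsa err2}, the uniform bound $U_g$ from Lemma \ref{lem esarsa err1}, and the telescoped drift bound $\norm{w_t - w_{t-n-\tau}} \leq U_g \sum_{j=t-n-\tau}^{t-1}\alpha_j$. No substantive differences.
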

\begin{proof}
\begin{align}
& \indot{w_t - w_*}{g(w_{t}, O_t)} - \indot{w_{t-n-\tau} - w_*}{g(w_{t-n-\tau}, O_t)} \\
= & \indot{w_t - w_*}{g(w_{t}, O_t) - g(w_{t-n-\tau}, O_t)} - \indot{w_{t-n-\tau} - w_*}{g(w_{t-n-\tau}, O_t)} \\
&+ \indot{w_t - w_*}{g(w_{t-n-\tau}, O_t)} \\
= & \indot{w_t - w_*}{g(w_{t}, O_t) - g(w_{t-n-\tau}, O_t)} + \indot{w_t - w_{t-n-\tau}}{g(w_{t-n-\tau}, O_t)} \\
\leq& 2RL_g \norm{w_t - w_{t-n-\tau}} + U_g \norm{w_t - w_{t-n-\tau}} \\
\leq& (2RL_g + U_g)U_g \sum_{j=t-n-\tau}^{t-1} \alpha_j
\end{align}
\end{proof}

\begin{lemma}
\label{lem esarsa err4}
(Bound of \eqref{esarsa err4})
\begin{align}
\E\left[ \indot{w_{t-n-\tau} - w_*}{g(w_{t-n-\tau}, O_t) - g(w_{t - n -\tau}, \tilde O_t)}\right] \leq 2R\na L_\mu U_g^2 \sum_{k=t-n-\tau}^{t-2} \sum_{j=t-n-\tau}^{k} \alpha_j
\end{align}

\end{lemma}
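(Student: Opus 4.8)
The plan is to bound the inner product by Cauchy–Schwarz, reduce the estimate to the probability that the true chain and its frozen-policy surrogate ever disagree, and then control that probability by coupling the two chains and exploiting the Lipschitz continuity of the behavior policies. First I would note that both $w_{t-n-\tau}$ and $w_*$ lie in the ball of radius $R$, so $\norm{w_{t-n-\tau} - w_*} \leq 2R$, and that the two evaluations of $g$ share the \emph{same} weight argument $w_{t-n-\tau}$, differing only through the trajectories $O_t$ and $\tilde O_t$. Since $g$ is uniformly bounded by $U_g$ (Lemma~\ref{lem esarsa err1}), Cauchy–Schwarz gives
\begin{align}
\E\left[ \indot{w_{t-n-\tau} - w_*}{g(w_{t-n-\tau}, O_t) - g(w_{t - n -\tau}, \tilde O_t)}\right] \leq 2R\, \E\left[\norm{g(w_{t-n-\tau}, O_t) - g(w_{t - n -\tau}, \tilde O_t)}\right],
\end{align}
and because the integrand vanishes whenever $O_t = \tilde O_t$, we further have $\norm{g(w_{t-n-\tau}, O_t) - g(w_{t - n -\tau}, \tilde O_t)} \leq 2U_g\,\mathbb{I}(O_t \neq \tilde O_t)$, reducing the task to bounding $\Pr(O_t \neq \tilde O_t)$.

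I would construct this probability bound through a coupling of the real chain $\qty{S_t, A_t}$ and the auxiliary chain $\qty{\tilde S_t, \tilde A_t}$. By definition the two chains coincide through $S_{t-n-\tau+2}$, after which the real chain draws the action at step $k$ from $\mu_{w_{k-1}}$ while the auxiliary chain always uses the frozen policy $\mu_{w_{t-n-\tau}}$. Coupling each pair of action draws maximally, the two chains stay identical until the first step at which their action samples differ; once the actions agree the successor states are drawn from the common kernel $p(\cdot \mid s, a)$ and can be made identical as well. Hence the probability of separating at the action draw for step $k$ is at most the total-variation distance between the two behavior policies, which by Assumption~\ref{assu lipschitz continuous} (summed over the $\na$ actions) satisfies
\begin{align}
\norm{\mu_{w_{k-1}}(\cdot \mid S_k) - \mu_{w_{t-n-\tau}}(\cdot \mid S_k)}_{TV} \leq \frac{\na L_\mu}{2}\norm{w_{k-1} - w_{t-n-\tau}}.
\end{align}

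It then remains to control the weight drift. Exactly as in the proof of Lemma~\ref{lem esarsa err2}, the update rule of Algorithm~\ref{alg pesarsa}, the nonexpansiveness of $\Pi_R$, and the bound $\norm{g} \leq U_g$ give $\norm{w_{k-1} - w_{t-n-\tau}} \leq U_g \sum_{j=t-n-\tau}^{k-2}\alpha_j$. A union bound over the action draws for steps $t-n-\tau+2, \dots, t$ that can first cause divergence then yields $\Pr(O_t \neq \tilde O_t) \leq \tfrac{1}{2}\na L_\mu U_g \sum_{k'=t-n-\tau+1}^{t-1}\sum_{j=t-n-\tau}^{k'-1}\alpha_j$; after the reindexing $k = k'-1$ this is precisely the double sum $\sum_{k=t-n-\tau}^{t-2}\sum_{j=t-n-\tau}^{k}\alpha_j$. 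Multiplying by the prefactor $2R \cdot 2U_g$ and collecting the factor $\tfrac{1}{2}$ from the total-variation distance gives the claimed constant $2R\na L_\mu U_g^2$. The main obstacle I anticipate is the coupling bookkeeping: correctly identifying which weight indexes each action draw, aligning the per-step divergence events with the outer summation index so that the reindexing lands exactly on the stated double sum, and tracking how the factor $\tfrac{1}{2}$ from the total-variation distance combines with the boundedness factors to produce $U_g^2$ rather than $2U_g^2$ in the final constant.
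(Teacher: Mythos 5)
Your proposal is correct in its conclusion and arrives at exactly the stated constant, but it takes a genuinely different route from the paper. The paper first conditions on $\Sigma_{t-n-\tau}$, pulls the expectation \emph{inside} the norm (so only the marginal laws of $O_t$ and $\tilde O_t$ matter), and then bounds $\sum_{o_t}\abs{\Pr(O_t=o_t)-\Pr(\tilde O_t=o_t)}$ by a telescoping recursion over the time steps (Lemma~\ref{lem the longest}), each step contributing $\na L_\mu U_g \sum_j \alpha_j$ through the Lipschitz continuity of $\mu_w$. You instead bound $\E[\norm{g(\cdot,O_t)-g(\cdot,\tilde O_t)}]$ directly via a maximal coupling and a union bound on $\Pr(O_t\neq\tilde O_t)$, which is a stronger quantity to control (by Jensen it dominates the norm of the expectation) but, because each per-step disagreement probability is again the policy total-variation distance, yields the same double sum; your bookkeeping of the factor $\tfrac12$ against the $2U_g$ from the indicator bound checks out, as does the reindexing onto $\sum_{k=t-n-\tau}^{t-2}\sum_{j=t-n-\tau}^{k}\alpha_j$. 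The one point you should make explicit: the paper's construction of the auxiliary chain specifies only that it branches off after $S_{t-n-\tau+2}$ under the frozen policy (and its proof treats $\tilde O_t$ as conditionally independent of the real chain given $\Sigma_{t-n-\tau}$), so your argument implicitly \emph{redefines} the joint law of $(O_t,\tilde O_t)$ as the step-wise maximal coupling. This is harmless for the overall proof of Lemma~\ref{lem esarsa err25}, since the companion term \eqref{esarsa err5} depends only on the marginal of $\tilde O_t$, but it needs to be stated, since the lemma as written refers to a particular $\tilde O_t$. The trade-off is that the paper's marginal argument is agnostic to the joint law, while your coupling argument is arguably more transparent probabilistically and avoids the explicit chain-rule decomposition of the trajectory distributions.
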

\begin{proof}
  Let $\Sigma_{t-n-\tau} \doteq (w_0, w_1, \dots, w_{t-n-\tau}, S_0, A_0, \dots, S_{t-n-\tau+1}, A_{t-n-\tau+1})$.
  We have
\begin{align}
&\E\left[ \indot{w_{t-n-\tau} - w_*}{g(w_{t-n-\tau}, O_t) - g(w_{t - n -\tau}, \tilde O_t)}\right] \\
=&\E\left[ \E\left[\indot{w_{t-n-\tau} - w_*}{g(w_{t-n-\tau}, O_t) - g(w_{t - n -\tau}, \tilde O_t)} \mid \Sigma_{t-n-\tau}\right] \right] \\
\intertext{\hfill (Law of total expectation)}
=& \E\left[ \indot{w_{t-n-\tau} - w_*}{\E\left[g(w_{t-n-\tau}, O_t) - g(w_{t - n -\tau}, \tilde O_t) \mid \Sigma_{t-n-\tau} \right]}  \right] \\
\intertext{\hfill (Conditional independence)}
\leq& \E\left[ \norm{w_{t-n-\tau} - w_*} \norm{\E\left[g(w_{t-n-\tau}, O_t) - g(w_{t - n -\tau}, \tilde O_t) \mid \Sigma_{t-n-\tau} \right]}  \right] \\
\leq& 2R \E\left[ \norm{\E\left[g(w_{t-n-\tau}, O_t) - g(w_{t - n -\tau}, \tilde O_t) \mid \Sigma_{t-n-\tau}\right]}  \right] \\
\leq& 2R\na L_\mu U_g^2 \sum_{k=t-n-\tau}^{t-2} \sum_{j=t-n-\tau}^{k} \alpha_j,
\end{align}
where the last inequality comes from Lemma \ref{lem the longest}.
\end{proof}

\begin{lemma}
\label{lem the longest}
\begin{align}
\norm{\E\left[g(w_{t-n-\tau}, O_t) - g(w_{t - n -\tau}, \tilde O_t) \mid \Sigma_{t-n-\tau}\right]} \leq \na L_\mu U_g^2 \sum_{k=t-n-\tau}^{t-2} \sum_{j=t-n-\tau}^{k} \alpha_j
\end{align}
\end{lemma}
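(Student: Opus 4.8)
The plan is to bound the discrepancy by the total-variation distance between the conditional laws of $O_t$ and $\tilde O_t$, and then to control that distance through the per-step differences between the time-varying behaviour policies and the frozen policy $\mu_{w_{t-n-\tau}}$. Since Lemma~\ref{lem esarsa err1} gives $\norm{g(w, o)} \leq U_g$ uniformly, writing the conditional expectations as averages over trajectories yields
\begin{align}
\norm{\E\left[g(w_{t-n-\tau}, O_t) - g(w_{t-n-\tau}, \tilde O_t) \mid \Sigma_{t-n-\tau}\right]} \leq U_g \sum_{o} \left|\Pr(O_t = o \mid \Sigma_{t-n-\tau}) - \Pr(\tilde O_t = o \mid \Sigma_{t-n-\tau})\right|,
\end{align}
so it remains to bound the $\ell_1$ distance between the two conditional trajectory distributions.

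Next I would exploit the coupling built into the definition of $\tilde O_t$. Conditioned on $\Sigma_{t-n-\tau}$, both chains share $(S_{t-n-\tau+1}, A_{t-n-\tau+1})$ and hence the common successor $S_{t-n-\tau+2}$ generated by the shared kernel $p$; thereafter they evolve under the \emph{same} kernel $p$ but select actions from different policies, the real chain using $\mu_{w_{j-1}}$ at step $j$ while the auxiliary chain always uses $\mu_{w_{t-n-\tau}}$. A standard hybrid argument, interpolating between the two chains one action-selection step at a time and using that agreement on a state before a step leaves no residual discrepancy beyond that step's policy gap, bounds the distance additively over the steps $j = t-n-\tau+2, \dots, t$ that affect the window $O_t$:
\begin{align}
\sum_{o} \left|\Pr(O_t = o \mid \Sigma_{t-n-\tau}) - \Pr(\tilde O_t = o \mid \Sigma_{t-n-\tau})\right| \leq \sum_{j=t-n-\tau+2}^{t} \max_{s} \norm{\mu_{w_{j-1}}(\cdot | s) - \mu_{w_{t-n-\tau}}(\cdot | s)}_1.
\end{align}

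Each per-step gap is then controlled by Lipschitz continuity and the iterate-drift bound. Assumption~\ref{assu lipschitz continuous} gives $\norm{\mu_{w_{j-1}}(\cdot|s) - \mu_{w_{t-n-\tau}}(\cdot|s)}_1 \leq \na L_\mu \norm{w_{j-1} - w_{t-n-\tau}}$, and the estimate $\norm{w_{t'} - w_t} \leq U_g \sum_{\ell=t}^{t'-1}\alpha_\ell$ established in the proof of Lemma~\ref{lem esarsa err1} yields $\norm{w_{j-1} - w_{t-n-\tau}} \leq U_g \sum_{\ell = t-n-\tau}^{j-2}\alpha_\ell$. Substituting, reindexing with $k = j-2$ so that $k$ runs from $t-n-\tau$ to $t-2$ with $\sum_{\ell=t-n-\tau}^{j-2}\alpha_\ell = \sum_{\ell=t-n-\tau}^{k}\alpha_\ell$, and multiplying by the leading $U_g$ produces exactly
\begin{align}
\na L_\mu U_g^2 \sum_{k=t-n-\tau}^{t-2} \sum_{j=t-n-\tau}^{k} \alpha_j,
\end{align}
as claimed.

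The main obstacle is the second step: rigorously justifying that the $\ell_1$ distance between the two trajectory laws accumulates additively over the steps where the policies differ. This requires a careful telescoping over a sequence of hybrid chains, switching from the frozen policy to the true policy one step at a time, and invoking the shared transition kernel $p$ to ensure that once a hybrid and its neighbour are coupled to agree on the current state, the only new mass discrepancy they incur is the total-variation gap of the policies used at that single step. The index bookkeeping, aligning the action-generating weight $w_{j-1}$ with the drift sum $\sum_{\ell}\alpha_\ell$ and with the extent of the window $O_t$, is routine but must be tracked carefully to land on the stated double sum.
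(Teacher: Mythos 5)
Your proposal is correct and follows essentially the same route as the paper: the paper likewise reduces the quantity to $U_g$ times the total-variation distance between the conditional laws of $O_t$ and $\tilde O_t$, and then telescopes that distance one action-selection step at a time (a recursive peeling that is your hybrid argument in bottom-up form), bounding each step's contribution by $\na L_\mu U_g \sum_{j=t-n-\tau}^{k}\alpha_j$ via Assumption~\ref{assu lipschitz continuous} and the iterate-drift bound. The only nuance the paper handles more explicitly is that $w_{j-1}$ is random, so the per-step policy gap is bounded uniformly over the (finite) support of $w_{j-1}$ rather than pointwise, but since your drift bound holds almost surely this does not affect the result.
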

\begin{proof}
In the proof of this lemma, 
all expectations ($\E$) and probabilities $(\Pr)$ are conditioned on $\Sigma_{t-n-\tau}$.
We suppress this condition in the presentation for improving readability.
Given $t, n, \tau, \Sigma_{t-n-\tau}$,
for any time step $j$ such that $t-n-\tau+1 \leq j \leq t$,
we use $\fW_j \subset \R^K$ to denote the set of all possible values of $w_j$.
It is easy to see that $\fW_j$ is always a \emph{finite} set
depending on $t, n, \tau, \Sigma_{t-n-\tau}$.
This allows us to use summation instead of integral to further improve readability. We have
\begin{align}
&\norm{\E\left[{g(w_{t-n-\tau}, \tilde O_t) - g(w_{t-n-\tau}, O_t)} \right]} \\
=& \norm{\sum_{o_t} \left(\Pr(\tilde O_t = o_t) - \Pr(O_t = o_t)\right) g(w_{t-n-\tau}, o_t )} \\
\intertext{\hfill (Conditional independence of $O_t$ and $\tilde O_t$ given $\Sigma_{t-n-\tau}$)}
\leq& U_g \sum_{o_t} \left|\Pr(\tilde O_t = o_t) - \Pr(O_t = o_t)\right|.
\end{align}
In the rest of this proof we bound $\left|\Pr(\tilde O_t = o_t) - \Pr(O_t = o_t)\right|$. 
To start,
\begin{align}
  &\Pr(O_t = o_t) \\
  =& \sum_{z_{t-1} \in \fW_{t-1}} \Pr(w_{t-1} = z_{t-1}, A_t = a_t, S_{t+1} = s_{t+1}, S_{t-n}=s_{t-n}, \dots, S_t = s_t) \\
  \intertext{\hfill(Law of total probability)}
  =& \sum_{z_{t-1}} \Pr(A_t = a_t, S_{t+1}=s_{t+1}\mid \substack{S_{t-n} = s_{t-n} \\ \dots \\ S_t=s_t \\ w_{t-1}=z_{t-1}}) \Pr(w_{t-1} = z_{t-1} \mid \substack{S_{t-n} = s_{t-n} \\ \dots \\ S_t = s_t}) \Pr(S_{t-n}=s_{t-n}, \dots, S_t = s_t) \\
  \intertext{\hfill(Chain rule of joint distribution)}
  =& \sum_{z_{t-1}} \mu_{z_{t-1}}(a_t|s_t)p(s_{t+1}|s_t, a_t) \Pr(w_{t-1} = z_{t-1} \mid \substack{S_{t-n} = s_{t-n} \\ \dots \\ S_t = s_t}) \Pr(S_{t-n}=s_{t-n}, \dots, S_t = s_t).
\end{align}
Further,
\begin{align}
  &\Pr(\tilde O_t = o_t) \\
  =& \mu_{w_{t-n-\tau}}(a_t|s_t)p(s_{t+1}|s_t, a_t) \Pr(\tilde S_{t-n} = s_{t-n}, \dots, \tilde S_t = s_t) \\
  \intertext{\hfill(Definition of the auxiliary chain)}
  =& \mu_{w_{t-n-\tau}}(a_t|s_t)p(s_{t+1}|s_t, a_t) \Pr(\tilde S_{t-n} = s_{t-n}, \dots, \tilde S_t = s_t) \sum_{z_{t-1}} \Pr(w_{t-1} = z_{t-1} \mid \substack{S_{t-n} = s_{t-n} \\ \dots \\ S_t = s_t}).
\end{align}
Consequently,
\begin{align}
  &\sum_{o_t} \abs{\Pr(O_t = o_t) - \Pr(\tilde O_t = o_t)} \\
  \leq & \sum_{s_{t-n}, \dots, s_t, a_t, z_{t-1}}\Pr(w_{t-1} = z_{t-1} \mid \substack{S_{t-n} = s_{t-n} \\ \dots \\ S_t = s_t}) \times \\
  &\abs{\mu_{z_{t-1}}(a_t|s_t) \Pr(S_{t-n}, \dots, S_t=s_t) - \mu_{w_{t-n-\tau}}(a_t|s_t) \Pr(\tilde S_{t-n}=s_{t-n}, \dots, \tilde S_t = s_t)} \\
  \leq & \sum_{s_{t-n}, \dots, s_t, a_t, z_{t-1}}\Pr(w_{t-1} = z_{t-1} \mid \substack{S_{t-n} = s_{t-n} \\ \dots \\ S_t = s_t}) \times \\
  &\Bigg(\abs{\mu_{z_{t-1}}(a_t|s_t) \Pr(S_{t-n}, \dots, S_t=s_t) - \mu_{w_{t-n-\tau}}(a_t|s_t) \Pr(S_{t-n}=s_{t-n}, \dots, S_t = s_t)} +  \\
  &\abs{\mu_{w_{t-n-\tau}}(a_t|s_t) \Pr(S_{t-n}, \dots, S_t=s_t) - \mu_{w_{t-n-\tau}}(a_t|s_t) \Pr(\tilde S_{t-n}=s_{t-n}, \dots, \tilde S_t = s_t)} \Bigg) \\
  \leq & \sum_{s_{t-n}, \dots, s_t, a_t, z_{t-1}}\Pr(w_{t-1} = z_{t-1} \mid \substack{S_{t-n} = s_{t-n} \\ \dots \\ S_t = s_t}) \times \\
  &\Bigg(\abs{\mu_{z_{t-1}}(a_t|s_t) - \mu_{w_{t-n-\tau}}(a_t|s_t)} \Pr(S_{t-n}=s_{t-n}, \dots, S_t = s_t) +  \\
  &\mu_{w_{t-n-\tau}}(a_t|s_t) \abs{\Pr(S_{t-n}, \dots, S_t=s_t) - \Pr(\tilde S_{t-n}=s_{t-n}, \dots, \tilde S_t = s_t)} \Bigg)
  \label{eq proof tmp eq2}
\end{align}
Since $z_{t-1} \in \fW_{t-1}$, 
we have
\begin{align}
|\mu_{z_{t-1}}(a_t|s) - \mu_{w_{t-n-\tau}}(a_t|s)| \leq L_\mu \norm{z_{t-1} - w_{t-n-\tau}} \leq L_\mu U_g \sum_{j=t-n-\tau}^{t-2} \alpha_j.
\end{align}
Plugging the above inequality back to \eqref{eq proof tmp eq2} yields
\begin{align}
  &\sum_{o_t} \abs{\Pr(O_t = o_t) - \Pr(\tilde O_t = o_t)} \\
  \leq & \sum_{s_{t-n}, \dots, s_t, a_t, z_{t-1}}\Pr(w_{t-1} = z_{t-1} \mid \substack{S_{t-n} = s_{t-n} \\ \dots \\ S_t = s_t}) \times \\
  &\Bigg( \Pr(S_{t-n}=s_{t-n}, \dots, S_t = s_t)L_\mu U_g \sum_{j=t-n-\tau}^{t-2} \alpha_j + \\
  &\mu_{w_{t-n-\tau}}(a_t|s_t) \abs{\Pr(S_{t-n}, \dots, S_t=s_t) - \Pr(\tilde S_{t-n}=s_{t-n}, \dots, \tilde S_t = s_t)} \Bigg) \\
  =&\na L_\mu U_g \sum_{j=t-n-\tau}^{t-2} \alpha_j + \sum_{s_{t-n}, \dots, s_t} \abs{\Pr(S_{t-n}, \dots, S_t=s_t) - \Pr(\tilde S_{t-n}=s_{t-n}, \dots, \tilde S_t = s_t)}.
\end{align}
Recursively using the above inequality $n+1$ times yields
\begin{align}
  \label{eq proof tmp eq3}
&\sum_{o_t} \left| \Pr(\tilde O_t = o_t) - \Pr(O_t = o_t) \right| \\ 
\leq &\na L_\mu U_g \left(\sum_{j=t-n-\tau}^{t-2} \alpha_j + \dots + \sum_{j=t-n-\tau}^{t-n-2} \alpha_j \right) + \sum_{s_{t-n}} \left|\Pr(S_{t-n}=s_{t-n}) - \Pr(\tilde S_{t-n}=s_{t-n}) \right|
\end{align} 
We now bound the last term in the above equation.
We have
\begin{align}
&\Pr(S_{t-n} = s_{t-n}) \\
=&\sum_{s} \Pr(S_{t-n-1} = s, S_{t-n}=s_{t-n}) \\
=&\sum_{s} \Pr(S_{t-n-1}=s) \Pr(S_{t-n} = s_{t-n} | S_{t-n-1}=s) \\
=&\sum_{s, a} \Pr(S_{t-n-1}=s) \Pr(S_{t-n} = s_{t-n}, A_{t-n-1}=a | S_{t-n-1}=s) \\
=&\sum_{s, a} \Pr(S_{t-n-1}=s) \E_{w_{t-n-2}}\left[\Pr(S_{t-n} = s_{t-n}, A_{t-n-1}=a | S_{t-n-1}=s, w_{t-n-2})\right] \\
=&\sum_{s, a} \Pr(S_{t-n-1}=s) \E_{w_{t-n-2}}\left[\mu_{w_{t-n-2}}(a|s) p(s_{t-n}|s, a)\right]
\end{align}
Similarly,
\begin{align}
\Pr(\tilde S_{t-n} = s_{t-n}) = \sum_{s, a} \Pr(\tilde S_{t-n-1} = s) \mu_{w_{t-n-\tau}}(a|s) p(s_{t-n}|s, a).
\end{align}
Consequently,
\begin{align}
&\sum_{s_{t-n}} \left| \Pr(S_{t-n} = s_{t-n}) - \Pr(\tilde S_{t-n} = s_{t-n}) \right| \\
=&\sum_{s, a} \left |\Pr(S_{t-n-1} = s) \E_{w_{t-n-2}}\left[\mu_{w_{t-n-2}}(a|s)\right] - \Pr(\tilde S_{t-n-1} = s) \mu_{w_{t-n-\tau}}(a|s)\right| \\
\leq& \sum_{s, a} \left |\Pr(S_{t-n-1} = s) \E_{w_{t-n-2}}\left[\mu_{w_{t-n-2}}(a|s)\right] - \Pr(\tilde S_{t-n-1} = s)\E_{w_{t-n-2}}\left[\mu_{w_{t-n-2}}(a|s)\right] \right| +\\
& \sum_{s, a} \left |\Pr(\tilde S_{t-n-1} = s) \E_{w_{t-n-2}}\left[\mu_{w_{t-n-2}}(a|s)\right] - \Pr(\tilde S_{t-n-1} = s) \mu_{w_{t-n-\tau}}(a|s)\right| \\
=& \sum_s \left|\Pr(S_{t-n-1} = s) - \Pr(\tilde S_{t-n-1} = s) \right| + \\
&\sum_{s, a} \Pr(\tilde S_{t-n-1}=s) \left |\E_{w_{t-n-2}}\left[\mu_{w_{t-n-2}}(a|s)\right] - \mu_{w_{t-n-\tau}}(a|s)\right| \\
\leq& \sum_s \left|\Pr(S_{t-n-1} = s) - \Pr(\tilde S_{t-n-1} = s) \right| + \\
&\sum_{s, a} \Pr(\tilde S_{t-n-1}=s) \max_s \left |\E_{w_{t-n-2}}\left[\mu_{w_{t-n-2}}(a|s)\right] - \mu_{w_{t-n-\tau}}(a|s)\right| \\
\end{align}
Since
\begin{align}
&\left|\E_{w_{t-n-2}}\left[\mu_{w_{t-n-2}}(a|s)\right] - \mu_{w_{t-n-\tau}}(a|s)\right| \\
=&\left|\E_{w_{t-n-2}}\left[\mu_{w_{t-n-2}}(a|s)- \mu_{w_{t-n-\tau}}(a|s)\right] \right| \\
\leq&\E_{w_{t-n-2}}\left[\left|\mu_{w_{t-n-2}}(a|s)- \mu_{w_{t-n-\tau}}(a|s)\right|\right]  \\
\leq& U_g L_\mu \sum_{j=t-n-\tau}^{t-n-3} \alpha_j ,
\end{align}
we have
\begin{align}
&\sum_{s} \left| \Pr(S_{t-n} = s) - \Pr(\tilde S_{t-n} = s) \right| \\
\leq &\sum_{s} \left| \Pr(S_{t-n-1} = s) - \Pr(\tilde S_{t-n-1} = s) \right| + \na U_g L_\mu \sum_{j=t-n-\tau}^{t-n-3} \alpha_j.
\end{align}
Applying the above inequality recursively yields
\begin{align}
  \label{eq proof tmp eq4}
\sum_{s} \left| \Pr(S_{t-n} = s) - \Pr(\tilde S_{t-n} = s) \right| \leq \na U_g L_\mu \left( \sum_{j=t-n-\tau}^{t-n-3} \alpha_j + \dots + \sum_{j=t-n-\tau}^{t-n-\tau} \alpha_j \right)
\end{align}
as
\begin{align}
  \Pr(S_{t-n-\tau+2} = s) = \Pr(\tilde S_{t-n-\tau+2} = s)
\end{align}
by the construction of the auxiliary chain.
Plugging \eqref{eq proof tmp eq4} back to \eqref{eq proof tmp eq3} yields
\begin{align}
\sum_{o_t} \left| \Pr(\tilde O_t = o_t) - \Pr(O_t = o_t) \right| \leq &\na L_\mu U_g \sum_{k=t-n-\tau}^{t-2} \sum_{j=t-n-\tau}^{k} \alpha_j ,
\end{align} 
which completes the proof.
\end{proof}

\begin{lemma}
\label{lem esarsa err5}
(Bound of \eqref{esarsa err5})
\begin{align}
\E\left[\indot{w_{t-n-\tau} - w_*}{g(w_{t - n -\tau}, \tilde O_t) - \bar g(w_{t-n-\tau})}\right] \leq 2RU_g C_0 \kappa^{\tau - 1}
\end{align}
\end{lemma}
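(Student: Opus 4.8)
The plan is to exploit the fact that, once we condition on the history $\Sigma_{t-n-\tau}$, the auxiliary chain $\tilde O_t$ evolves under the \emph{fixed} behavior policy $\mu_{w_{t-n-\tau}}$, so the only error relative to $\bar g(w_{t-n-\tau})$ is the gap between the auxiliary chain's window distribution and the stationary law $\bar d_{\mu_{w_{t-n-\tau}}}$, which decays geometrically by the uniform mixing of Lemma~\ref{assu uniform ergodicity}. The estimate is therefore a pure mixing bound, and is cleaner than Lemma~\ref{lem the longest} precisely because no policy change occurs along the auxiliary window.

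First I would apply the law of total expectation by conditioning on $\Sigma_{t-n-\tau}$. Since $w_{t-n-\tau}$, and hence $\bar g(w_{t-n-\tau})$, are $\Sigma_{t-n-\tau}$-measurable, I can pull the factor $w_{t-n-\tau} - w_*$ out of the inner conditional expectation and apply Cauchy--Schwarz, using $\norm{w_{t-n-\tau} - w_*} \leq 2R$ (both iterates lie in the radius-$R$ ball because of the projection $\Pi_R$ and $\norm{w_*} \leq R$). This reduces the claim to bounding $\norm{\E[g(w_{t-n-\tau}, \tilde O_t) \mid \Sigma_{t-n-\tau}] - \bar g(w_{t-n-\tau})}$ by $U_g C_0 \kappa^{\tau-1}$.

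Next I would write this conditional bias as a single sum over windows $o_t$ weighted by $\Pr(\tilde O_t = o_t \mid \Sigma_{t-n-\tau})$ minus the probability of the window under the stationary law entering $\bar g$. Bounding $\norm{g(w_{t-n-\tau}, o_t)} \leq U_g$ via Lemma~\ref{lem esarsa err1} turns the problem into the total-variation estimate $\sum_{o_t} \abs{\Pr(\tilde O_t = o_t \mid \Sigma_{t-n-\tau}) - \mu_{w_{t-n-\tau}}(o_t)}$. The crucial simplification is that both distributions propagate the \emph{same} transition kernel $P_{\mu_{w_{t-n-\tau}}}$ across the window $(\tilde S_{t-n}, \dots, \tilde S_{t+1})$; hence the total variation of the whole window collapses to that of its first marginal, $\sum_{s} \abs{\Pr(\tilde S_{t-n} = s \mid \Sigma_{t-n-\tau}) - \bar d_{\mu_{w_{t-n-\tau}}}(s)}$. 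Since the auxiliary chain has been running under the fixed policy for on the order of $\tau$ steps before reaching time $t-n$, Lemma~\ref{assu uniform ergodicity} bounds this by $C_0 \kappa^{\tau-1}$, and assembling the pieces yields the claimed $2RU_g C_0 \kappa^{\tau-1}$.

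The main obstacle I anticipate is the window-to-initial-state collapse of the total variation together with the correct bookkeeping of the number of fixed-policy steps: one must verify that conditioning on $\Sigma_{t-n-\tau}$ genuinely makes $\tilde O_t$ a homogeneous chain whose window distribution factorizes through $\tilde S_{t-n}$ in exactly the same way the stationary window does, and then count transitions carefully (from the common state around time $t-n-\tau+2$ up to time $t-n$) so that the exponent lands at $\tau-1$. Everything else is routine Cauchy--Schwarz and boundedness.
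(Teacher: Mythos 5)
Your proposal is correct and follows essentially the same route as the paper's proof: condition on $\Sigma_{t-n-\tau}$, pull out the $\Sigma_{t-n-\tau}$-measurable factor, bound it by $2R$, reduce to a total-variation distance between the window law of $\tilde O_t$ and the stationary window (which the paper realizes via an explicit auxiliary variable $\bar O_t$), collapse that TV to the first marginal since both windows propagate the same fixed kernel $\mu_{w_{t-n-\tau}}$, and invoke the uniform mixing lemma. No meaningful difference in decomposition or key steps.
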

\begin{proof}
\begin{align}
&\E\left[ \indot{w_{t-n-\tau} - w_*}{g(w_{t - n -\tau}, \tilde O_t) - \bar g(w_{t-n-\tau})} \right] \\
= &\E \left[ \E\left[ \indot{w_{t-n-\tau} - w_*}{g(w_{t - n -\tau}, \tilde O_t) - \bar g(w_{t-n-\tau})} \mid \Sigma_{t-n-\tau} \right] \right] \\
= &\E \left[  \indot{w_{t-n-\tau} - w_*}{ \E \left[g(w_{t - n -\tau}, \tilde O_t) - \bar g(w_{t-n-\tau})\mid \Sigma_{t-n-\tau} \right]} \right] \\
\leq &\E \left[  \norm{w_{t-n-\tau} - w_*} \norm{ \E \left[g(w_{t - n -\tau}, \tilde O_t) - \bar g(w_{t-n-\tau})\mid \Sigma_{t-n-\tau}  \right]} \right] \\
\leq & 2R \norm{ \E \left[g(w_{t - n -\tau}, \tilde O_t) - \bar g(w_{t-n-\tau})\mid \Sigma_{t-n-\tau} \right]} 
\end{align}
We now bound $\norm{ \E \left[g(w_{t - n -\tau}, \tilde O_t) - \bar g(w_{t-n-\tau})\mid \Sigma_{t-n-\tau} \right]}$.
In the rest of the proof,
all expectations ($\E$) and probabilities ($\Pr$) are conditioned on $\Sigma_{t-n-\tau}$.
We suppress the condition in the presentation for improving readability.
Let $\bar O_t \doteq (\bar S_{t-n}, \bar A_{t-n}, \dots, \bar S_t, \bar A_t, \bar S_{t+1})$ be a sequence of random variables such that 
\begin{align}
  \bar S_{t-n} \sim \bar d_{\mu_{w_{t-n-\tau}}}, \bar A_{t-n} \sim \mu_{w_{t-n-\tau}}(\cdot | \bar S_{t-n}), \dots, \bar A_t \sim \mu_{w_{t-n-\tau}}(\cdot | \bar S_{t}), \bar S_{t+1} \sim p(\cdot | S_t, A_t).
\end{align}
Then
\begin{align}
&\norm{\E\left[g(w_{t-n-\tau}, \tilde O_t) - \bar g(w_{t-n-\tau})\right]} \\
=& \norm{\E\left[g(w_{t-n-\tau}, \tilde O_t) - g(w_{t-n-\tau}, \bar O_t)\right]} \\
= & \norm{\sum_{o_t} \left(\Pr(\tilde O_t = o_t) - \Pr(\bar O_t = o_t) \right) g(w_{t-n-\tau}, o_t)} \\
\leq & U_g \sum_{o_t} \left|\Pr(\tilde O_t = o_t) - \Pr(\bar O_t = o_t) \right| \\
=& U_g \sum_{o_t} \left|\Pr(\tilde S_{t-n} = s_{t-n}) - \Pr(\bar S_{t-n} = s_{t-n}) \right| \mu_{w_{t-n-\tau}}(a_{t-n} | s_{t-n}) p(s_{t-n+1} | s_{t-n}, a_{t-n}) \\
& \quad \cdots \mu_{w_{t-n-\tau}}(a_t | s_t) p(s_{t+1} | s_t, a_t) \\
=&U_g \sum_{s_{t-n}}\left| {\Pr(\tilde S_{t-n} = s_{t-n}) - \Pr(\bar S_{t-n} = s_{t-n})} \right| \\
\leq & U_g C_0 \kappa^{\tau-1} \qq{(Lemma \ref{assu uniform ergodicity} and the construction of the auxiliary chain)},
\end{align}
which completes the proof.
\end{proof}

\section{Complementary Plots}
\label{sec comp plots}
\begin{figure}[h]
  \centering
  \includegraphics[width=\textwidth]{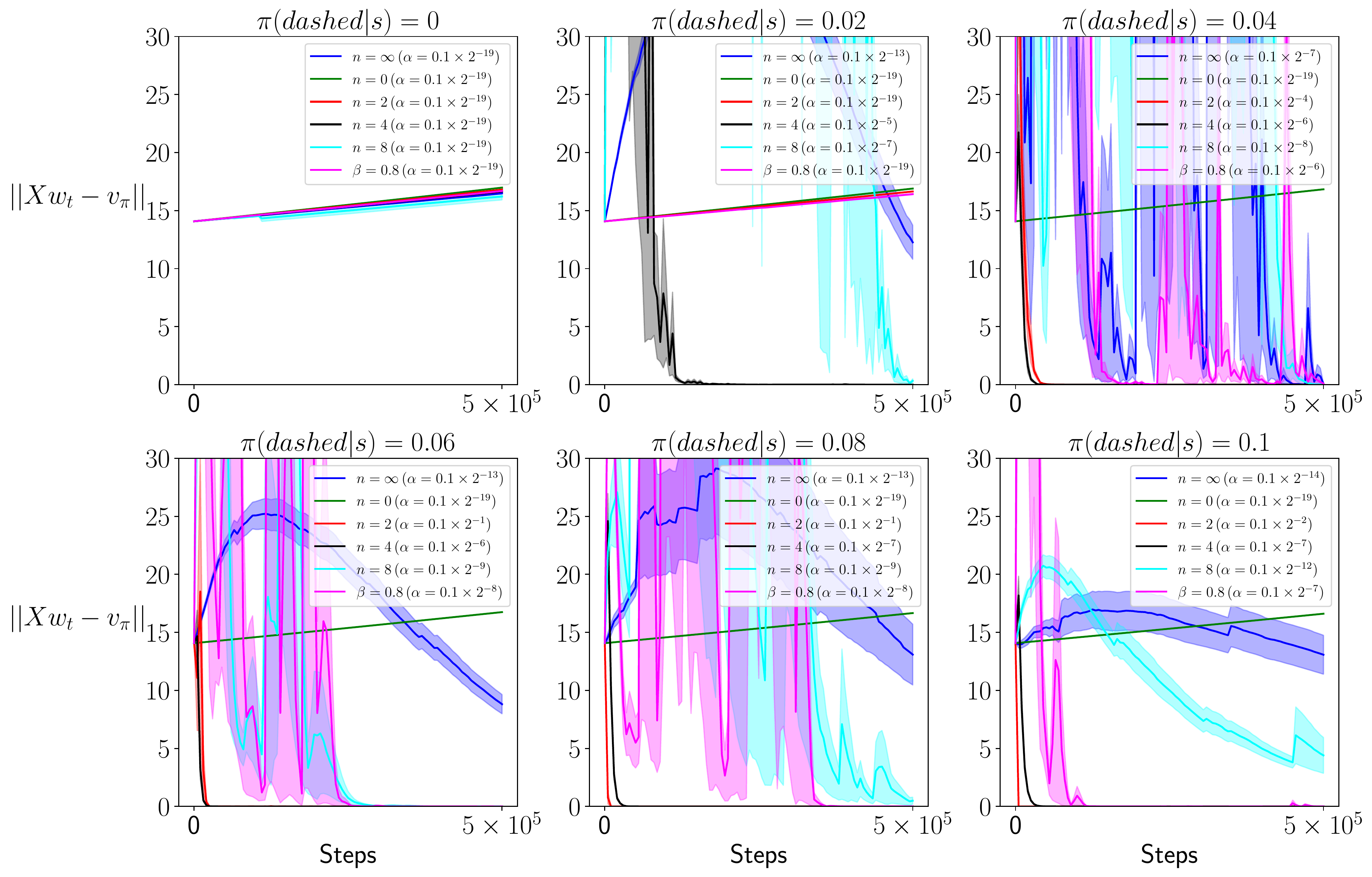}
  \caption{\label{fig prediction full} Truncated Emphatic TD and ETD(0, $\beta$) in the prediction setting.}
\end{figure}

\vskip 0.2in
\bibliography{ref_updated}

\end{document}